\def\eqref#1{equation~\ref{#1}}
\def\1{\bm{1}}
\def\vzero{{\bm{0}}}
\def\va{{\bm{a}}}
\def\vb{{\bm{b}}}
\def\vc{{\bm{c}}}
\def\vd{{\bm{d}}}
\def\ve{{\bm{e}}}
\def\vg{{\bm{g}}}
\def\vu{{\bm{u}}}
\def\vv{{\bm{v}}}
\def\vx{{\bm{x}}}
\def\vy{{\bm{y}}}
\def\vz{{\bm{z}}}
\def\mA{{\bm{A}}}
\def\mC{{\bm{C}}}
\def\mD{{\bm{D}}}
\def\mE{{\bm{E}}}
\def\mI{{\bm{I}}}
\def\mU{{\bm{U}}}
\def\mW{{\bm{W}}}
\def\mX{{\bm{X}}}
\DeclareMathAlphabet{\mathsfit}{\encodingdefault}{\sfdefault}{m}{sl}
\SetMathAlphabet{\mathsfit}{bold}{\encodingdefault}{\sfdefault}{bx}{n}
\newtheorem{theorem}{Theorem}
\newtheorem{lemma}{Lemma}
\newtheorem{remark}{Remark}
\newtheorem{corollary}{Corollary}
\newtheorem{assumption}{Assumption}
\DeclareRobustCommand{\erase}{\bgroup\markoverwith{\textcolor{red}{\rule[.5ex]{2pt}{1.0pt}}}\ULon}
\title{Momentum Tracking: Momentum Acceleration for Decentralized Deep Learning on Heterogeneous Data}
\author{\name Yuki Takezawa \email yuki-takezawa@ml.ist.i.kyoto-u.ac.jp \\
      \addr Kyoto University, Okinawa Institute of Science and Technology
      \ANDTMLR
      \name Han Bao \email bao@i.kyoto-u.ac.jp \\
      \addr Kyoto University, Okinawa Institute of Science and Technology
      \ANDTMLR
      \name Kenta Niwa \email kenta.niwa.bk@hco.ntt.co.jp \\
      \addr NTT Communication Science Laboratories
      \ANDTMLR
      \name Ryoma Sato \email r.sato@ml.ist.i.kyoto-u.ac.jp \\
      \addr Kyoto University, Okinawa Institute of Science and Technology
      \ANDTMLR
      Makoto Yamada \email makoto.yamada@oist.jp \\
      \addr Okinawa Institute of Science and Technology
}
\begin{document}

\maketitle

\begin{abstract}
SGD with momentum is one of the key components for improving the performance of neural networks.
For decentralized learning, a straightforward approach using momentum is Distributed SGD (DSGD) with momentum (DSGDm).
However, DSGDm performs worse than DSGD when the data distributions are statistically heterogeneous.
Recently, several studies have addressed this issue and proposed methods with momentum that are more robust to data heterogeneity than DSGDm, although their convergence rates remain dependent on data heterogeneity and
deteriorate when the data distributions are heterogeneous.
In this study, we propose Momentum Tracking, which is a method with momentum whose convergence rate is proven to be independent of data heterogeneity.
More specifically, we analyze the convergence rate of Momentum Tracking in the setting where the objective function is non-convex and the stochastic gradient is used.
Then, we identify that it is independent of data heterogeneity for any momentum coefficient $\beta \in [0, 1)$.
Through experiments, we demonstrate that Momentum Tracking is more robust to data heterogeneity than the existing decentralized learning methods with momentum and can consistently outperform these existing methods when the data distributions are heterogeneous.
\end{abstract}

\section{Introduction}
Neural networks have achieved remarkable success in various fields 
such as image processing \citep{simonyanZ2014very,chen2020simple} and
natural language processing \citep{devlin2019bert}.
To train neural networks, we need to collect large amounts of training data,
but it is often difficult to collect large amounts of data such as medical images on one server because of privacy concerns.
In such scenarios, decentralized learning has attracted significant attention because it allows us to train neural networks without aggregating all the data onto one server.
Recently, decentralized learning has been studied from various perspectives, including data heterogeneity \citep{tang2018d2,esfandiari2021cross}, communication compression \citep{tang2018communication,lu2020moniqua,liu2021linear,takezawa2022communication}, and network topologies \citep{ying2021exponential,le2023refined}.

One of the key components for improving the performance of neural networks is SGD with momentum (SGDm).
Whereas SGD updates the model parameters using a stochastic gradient, 
SGDm updates the model parameters using the moving average of the stochastic gradient, which is called the momentum.
Because SGDm can accelerate convergence and improve generalization performance,
SGDm has become an indispensable tool, enabling neural networks to achieve high accuracy \citep{he2016deep,cutkosky2020momentum,karimireddy2021breaking,defazio2021momentum}.
Recently, SGDm has been improved in many studies, and methods such as Adam \citep{kingma2015adam} and RAdam \citep{liu2020On} have been proposed.

In decentralized learning, the straightforward approach to using the momentum is Distributed SGD (DSGD) with momentum (DSGDm) \citep{gao2020periodic}.
When the data distributions held by each node (i.e., the server) are statistically homogeneous, 
DSGDm works well and can improve the performance as well as SGDm \citep{lin21quasi}.
However, in real-world decentralized learning settings, the data distributions may be heterogeneous \citep{hsieh2020noniid}.
In such cases, DSGDm performs worse than DSGD (i.e., without momentum) \citep{yuan202decentlam}.

This is because, when the data distributions are heterogeneous and we use the momentum instead of the stochastic gradient, 
each model parameter is updated in further different directions and drifts away more easily.
As a result, the convergence rate of DSGDm falls below that of DSGD.
To address this issue,
\citet{lin21quasi} and \citet{yuan202decentlam} modified the update rules of the momentum in DSGDm
and proposed methods that are more robust to data heterogeneity than DSGDm.
However, their convergence rates remain dependent on data heterogeneity,
and our experiments revealed that their performance are degraded when the data distributions are strongly heterogeneous (Sec. \ref{sec:experiment}).

Data heterogeneity for decentralized learning has been well studied from both experimental and theoretical perspectives \citep{hsieh2020noniid,koloskova2020unified}.
Subsequently, many methods including Gradient Tracking \citep{lorenzo2016next,nedic2017achieving} have been proposed
and it has been shown that their convergence rates do not depend on data heterogeneity \citep{tang2018d2,vogels2021relaysum,koloskova2021an}.
However, these studies considered only the case where the momentum was not used,
and it remains unclear whether these methods are robust to data heterogeneity when the momentum is applied.

In the convex optimization literature,
\citet{xin2020distributed} and \citet{carnevale2022gtadam} proposed combining Gradient Tracking with momentum or Adam
and analyzed the convergence rates.
However, they considered only the case where the objective function is strongly convex and the full gradient is used,
which does not hold in the standard deep learning setting, where the objective function is non-convex and only the stochastic gradient is accessible.
Hence, their convergence rates are still unknown 
in the setting where the objective function is non-convex and the stochastic gradient is used
and it remains unclear whether their convergence rates are independent of data heterogeneity.
Furthermore, they did not discuss data heterogeneity, either theoretically or experimentally.

In this work, we propose a decentralized learning method with momentum, which we call \textbf{Momentum Tracking}, whose convergence rate is proven to be independent of data heterogeneity 
in the setting where the objective function is non-convex and the stochastic gradient is used.
More specifically, 
we identify that the convergence rate of Momentum Tracking is independent of data heterogeneity for any momentum coefficient $\beta \in [0, 1)$.
In Table~\ref{table:summary}, we compare the convergence rate of Momentum Tracking with those of existing methods.
To the best of our knowledge, Momentum Tracking is the first decentralized learning method with momentum whose convergence rate has been proven to be independent of data heterogeneity 
in the setting where the objective function is non-convex and the stochastic gradient is used.
Experimentally, we demonstrate that Momentum Tracking is more robust to data heterogeneity than the existing decentralized learning methods with momentum
and can consistently outperform these existing methods when the data distributions are heterogeneous.
\begin{table*}[!t]
\vskip -0.1 in
\caption{Comparison of the convergence rates. In the ``Data-Heterogeneity'' column, ``\checkmark'' indicates that the convergence rate is independent of data heterogeneity, and ``(\checkmark)'' indicates that it is independent, but there is no discussion about data heterogeneity either theoretically or experimentally. In the ``Momentum,'' ``Stochastic,'' and ``Non-Convex'' columns, ``\checkmark'' respectively indicates that the method is accelerated using momentum, the convergence rate is provided when the stochastic gradient is used, and the convergence rate is provided when the objective function is non-convex.}
\label{table:summary}
\centering
\vskip -0.1 in
\resizebox{\linewidth}{!}{
\begin{tabular}{ccccc}
\toprule
                                          & Data-Heterogeneity & Momentum & Stochastic & Non-Convex \\
\midrule
DSGD \citep{lian2017can}                  &              &            &\checkmark & \checkmark \\
Gradient Tracking \citep{koloskova2021an} & \checkmark   &            &\checkmark & \checkmark \\ 
DSGDm \citep{gao2020periodic}             &              & \checkmark &\checkmark & \checkmark \\
QG-DSGDm \citep{lin21quasi}               &              & \checkmark &\checkmark & \checkmark \\ 
DecentLaM \citep{yuan202decentlam}        &              & \checkmark &\checkmark & \checkmark \\
ABm \citep{xin2020distributed}            & (\checkmark) & \checkmark &           &            \\
GTAdam \citep{carnevale2022gtadam}        & (\checkmark) & \checkmark &           &            \\
\textbf{Momentum Tracking (our work)}     & \checkmark   & \checkmark &\checkmark & \checkmark \\
\bottomrule
\end{tabular}
}
\vskip -0.1 in
\end{table*}

\section{Preliminaries and Related Work}
\subsection{Decentralized Learning}
\label{sec:decentralized_learning}
Let $G = (V, E)$ be an undirected graph that represents the underlying network topology, where $V$ denotes the set of nodes and $E$ denotes the set of edges. 
Let $N \coloneqq |V|$ be the number of nodes, and we label each node in $V$ by a set of integers $\{ 1, 2, \cdots, N \}$ for simplicity.
We define $\mathcal{N}_i \coloneqq \{ j \in V \mid (i,j) \in E \}$ as the set of neighbor nodes of node $i$ and define $\mathcal{N}_i^{+} \coloneqq \mathcal{N}_i \cup \{ i \}$.
In decentralized learning, node $i$ has a local data distribution $\mathcal{D}_i$ and local objective function $f_i : \mathbb{R}^d \rightarrow \mathbb{R}$,
and can communicate with node $j$ if and only if $(i, j) \in E$.
Then, decentralized learning aims to minimize the average of the local objective functions as follows:
\begin{align*}
    \min_{\vx \in \mathbb{R}^d} \left[ f(\vx) \coloneqq \frac{1}{N} \sum_{i=1}^N f_i (\vx) \right], 
    f_i (\vx) \coloneqq \mathbb{E}_{\xi_i\sim\mathcal{D}_i} \left[ F_i (\vx ; \xi_i) \right],
\end{align*}
where $\vx$ is the model parameter,
$\xi_i$ is the data sample that follows $\mathcal{D}_i$,
and local objective function $f_i (\vx)$ is defined as the expectation
of $F_i(\vx ; \xi_i)$ over data sample $\xi_i$.
In the following, $\nabla F_i(\vx ; \xi_i)$ and $\nabla f_i (\vx) \coloneqq \mathbb{E}_{\xi_i \sim \mathcal{D}_i}[\nabla F_i (\vx;\xi_i)]$ denote the stochastic and full gradient respectively.

Distributed SGD (DSGD) \citep{lian2017can} is one of the most well-known algorithms for decentralized learning.
Formally, the update rules of DSGD are defined as follows:
\begin{align}
    \vx_i^{(r+1)} = \sum_{j \in \mathcal{N}_i^{+}} W_{ij} \left( \vx_j^{(r)} - \eta \nabla F_j (\vx_j^{(r)} ; \xi_j^{(r)}) \right),
\end{align}
where $\eta > 0$ is the step size and $W_{ij} \in [0,1]$ is the weight of edge $(i, j)$.
Let $\mW \in [0, 1]^{N\times N}$ be the matrix whose $(i,j)$-element is $W_{ij}$ if $(i,j) \in E$ and $0$ otherwise.
In general, a mixing matrix is used for $\mW$ (i.e., $\mW = \mW^\top$, $\mW \mathbf{1} = \mathbf{1}$, and $\mW^\top \mathbf{1} = \mathbf{1}$).
\citet{lian2018asynchronous} extended DSGD in the case where each node communicates asynchronously and analyzed the convergence rate.
\citet{koloskova2020unified} analyzed the convergence rate of DSGD when the network topology changes over time.
These results revealed that the convergence rate of DSGD deteriorates
and the performance is degraded when the data distributions held by each node are statistically heterogeneous.
This is because the local gradients $\nabla f_i$ are different across nodes and each model parameter $\vx_i$ tends to drift away when the data distributions are heterogeneous.
To address this issue, $D^2$ \citep{tang2018d2}, Gradient Tracking \citep{lorenzo2016next,nedic2017achieving}, and primal-dual algorithms \citep{niwa2020edge,niwa2021asynchronous,takezawa2022theoretical} were proposed to correct the local gradient $\nabla f_i$ to the global gradient $\nabla f$.
As a different approach, \citet{vogels2021relaysum} proposed a novel averaging method to prevent each model parameter $\vx_i$ from drifting away.
It has been shown that the convergence rates of these methods do not depend on data heterogeneity and do not deteriorate, even when the data distributions are statistically heterogeneous.
However, these methods do not consider the case in which momentum is used.

\subsection{Momentum}
\label{sec:momentum}

The methods with momentum were originally proposed by \citet{polyk1941some},
and SGD with momentum (SGDm) has achieved successful results in training neural networks \citep{simonyanZ2014very,he2016deep,Wang2020escaping}.
In decentralized learning,
a straightforward approach to using the momentum is DSGD with momentum (DSGDm) \citep{gao2020periodic}.
The update rules of DSGDm are defined as follows:
\begin{align}
    \label{eq:dsgdm_1}
    \vu_i^{(r+1)} &= \beta \vu_i^{(r)} + \nabla F_i (\vx_i^{(r)} ; \xi_i^{(r)}), \\
    \label{eq:dsgdm_2}
    \vx_i^{(r+1)} &= \sum_{j \in \mathcal{N}_i^{+}} W_{ij} \left( \vx_j^{(r)} - \eta \vu_j^{(r+1)} \right),
\end{align}
where $\vu_i$ is the local momentum of node $i$ and $\beta \in [0, 1)$ is a momentum coefficient.
In addition, several variants of DSGDm were studied by \citet{yu2019linear,assran2019stochstic,Wang2020SlowMo,singh2021communication}.
When the data distributions held by each node are statistically homogeneous, DSGDm works well and can improve accuracy as well as SGDm. 
However, when the data distributions are statistically heterogeneous, DSGDm leads to poorer performance than DSGD.
This is because when the data distributions held by each node are statistically heterogeneous (i.e., $\nabla f_i$ varies significantly across nodes),
the difference in the updated value of the model parameter across the nodes (i.e., $\eta \vu_i$) is amplified by the momentum \citep{lin21quasi}.

To address this issue, \citet{yuan202decentlam} and \citet{lin21quasi} proposed methods to modify the update rules of the momentum in DSGDm 
such that the momentum of each node has close values,
which are called DecentLaM and QG-DSGDm, respectively. 
They further experimentally demonstrated that these methods are more robust to data heterogeneity than DSGDm. 
However, their convergence rates have been shown to still depend on data heterogeneity
and deteriorate when the data distributions are heterogeneous.

\subsection{Gradient Tracking}
\label{sec:gradient_tracking}
One of the most well-known methods whose convergence rate does not depend on data heterogeneity is Gradient Tracking \citep{lorenzo2016next}.
Whereas DSGD exchanges only the model parameter $\vx_i$, 
Gradient Tracking exchanges the model parameter $\vx_i$ and local (stochastic) gradient $\nabla f_i$
and then updates the model parameters while estimating global gradient $\nabla f$.
\citet{nedic2017achieving} and \citet{qu2018harnessing} analyzed the convergence rate of Gradient Tracking when the objective function is (strongly) convex and the full gradient is used.
\citet{pu2021distributed} analyzed the convergence rate when the objective function is strongly convex and the stochastic gradient is used.
Recently, \citet{koloskova2021an} analyzed the convergence rates of Gradient Tracking 
in the setting where the objective function is non-convex and the stochastic gradient is used.
There is also a line of research to combine Gradient Tracking with variance reduction methods \citep{xin2022fast} and communication compression methods \citep{zhao2022beer}.
They showed that the convergence rate of Gradient Tracking does not depend on data heterogeneity.
However, these studies only consider the case without momentum,
and the convergence analysis for Gradient Tracking with momentum has not been explored thus far in the aforementioned studies.

\section{Proposed Method}
\label{sec:proposed}

In this section, we propose \textbf{Momentum Tracking}, which is a decentralized learning method with momentum whose convergence rate is proven to be independent of the data heterogeneity 
in the setting where the objective function is non-convex and the stochastic gradient is used.

\subsection{Setup}
We assume that the following standard assumptions hold:

\begin{assumption}
\label{assumption:lower_bound} 
There exists a constant $f^\star > - \infty$ that satisfies $f(\vx) \ge f^\star$ for all $\vx \in \mathbb{R}^d$.
\end{assumption}
\begin{assumption}
\label{assumption:mixing}
There exists a constant $p \in (0, 1]$ that satisfies for all $\vx_1, \cdots, \vx_N \in \mathbb{R}^{d}$,
\begin{align}
\label{eq:mixing}
    \| \mX \mW - \bar{\mX} \|^2_F \leq (1 - p) \| \mX - \bar{\mX} \|^2_F,
\end{align}
where $\mX \coloneqq (\vx_1, \cdots, \vx_N) \in \mathbb{R}^{d\times N}$ and $\bar{\mX} \coloneqq \frac{1}{N} \mX \mathbf{1}\mathbf{1}^\top$.
\end{assumption}
\begin{assumption}
\label{assumption:smoothness}
There exists a constant $L>0$ that satisfies for all $i \in V$ and $\vx, \vy \in \mathbb{R}^d$,
\begin{align}
\label{eq:smoothness}
    \| \nabla f_i (\vx) - \nabla f_i (\vy) \| \leq L \| \vx - \vy \|.
\end{align}
\end{assumption}
\begin{assumption}
\label{assumption:stochasic_gradient}
There exists a constant $\sigma^2$ that satisfies for all $i \in V$ and $\vx_i \in \mathbb{R}^d$,
\begin{align}
\label{eq:stochastic_gradient}
    \mathbb{E}_{\xi_i \sim \mathcal{D}_i} \| \nabla F_i(\vx_i ; \xi_i) - \nabla f_i (\vx_i) \|^2 \leq \sigma^2.
\end{align}
\end{assumption}

Assumptions \ref{assumption:lower_bound}, \ref{assumption:mixing}, \ref{assumption:smoothness}, and \ref{assumption:stochasic_gradient} are commonly used for decentralized learning algorithms \citep{lian2017can,yu2019linear,koloskova2021an,lin21quasi,lu2021optimal,yuan2022revisiting}.
Additionally, the following assumption, which represents data heterogeneity,
is commonly used in the convergence analysis of decentralized learning algorithms \citep{lian2017can,yu2019linear,lin21quasi}.
\begin{assumption}
\label{assumption:heterogeneity}
There exists a constant $\zeta^2$ that satisfies for all $\vx \in \mathbb{R}^d$,
\begin{align*}
    \frac{1}{N} \sum_{i=1}^N \left\| \nabla f_i (\vx) - \nabla f (\vx) \right\|^2 \leq \zeta^2.
\end{align*}
\end{assumption}
Under Assumption~\ref{assumption:heterogeneity}, the convergence rates of DSGD~\citep{lian2017can}, DSGDm~\citep{gao2020periodic,yuan202decentlam}, QG-DSGDm~\citep{lin21quasi}, and DecentLaM~\citep{yuan202decentlam} were shown to be dependent on data heterogeneity $\zeta^2$ and deteriorate as $\zeta^2$ increases.
By contrast, in Sec. \ref{sec:convergence_analysis}, we prove that Momentum Tracking converges without Assumption \ref{assumption:heterogeneity}
and the convergence rate is independent of data heterogeneity $\zeta^2$.
In addition, we do not assume the convexity of the objective functions $f(\vx)$ and $f_i (\vx)$. 
Therefore, $f(\vx)$ and $f_i (\vx)$ are potentially non-convex functions (e.g., the loss functions of neural networks).

\subsection{Momentum Tracking}

In this section, we propose \textbf{Momentum Tracking}, which is robust to data heterogeneity and accelerated by the momentum.
The update rules of Momentum Tracking are defined as follows:
\begin{align}
    \label{eq:mt_1}
    \vu_i^{(r+1)} &= \beta \vu_i^{(r)} + \nabla F_i(\vx_i^{(r)} ; \xi_i^{(r)}), \\
    \label{eq:mt_2}
    \vx_i^{(r+1)} &= \sum_{j \in \mathcal{N}_i^+} W_{ij} \vx_j^{(r)} - \eta \left( \vu_i^{(r+1)} - \vc_i^{(r)} \right), \\
    \label{eq:mt_3}
    \vc_i^{(r+1)} &= \sum_{j \in \mathcal{N}_i^{+}} W_{ij} \left( \vc_j^{(r)} - \vu_j^{(r+1)} \right) + \vu_i^{(r+1)},
\end{align}
where $\beta \in [0, 1)$ is a momentum coefficient.
The pseudo-code for Momentum Tracking is presented in Sec.~\ref{sec:psuedo}.
In Momentum Tracking, $\vc_i$ corrects the local momentum $\vu_i$ to the global momentum $\frac{1}{N} \sum_j \vu_j$
and prevents each model parameter $\vx_i$ from drifting, even when the data distributions are statistically heterogeneous (i.e., the local momentum $\vu_i$ varies significantly across nodes).

Because Momentum Tracking is equivalent to Gradient Tracking when $\beta=0$,
Momentum Tracking is a simple extension of Gradient Tracking.
Hence, when $\beta=0$, it has been shown that the convergence rate of Momentum Tracking is independent of data heterogeneity $\zeta^2$ \citep{koloskova2021an}.
However, because data heterogeneity is amplified when the momentum is used instead of the stochastic gradient (i.e., $\beta>0$) \citep{lin21quasi,yuan202decentlam},
it is unclear whether the convergence rate of Momentum Tracking is independent of data heterogeneity $\zeta^2$ for any $\beta \in [0, 1)$
or for only a restricted range of $\beta$.
In Sec.~\ref{sec:convergence_analysis}, we provide the convergence rate of Momentum Tracking and prove that it is independent of $\zeta^2$ for any $\beta \in [0, 1)$.

\subsection{Convergence Analysis}
\label{sec:convergence_analysis}

Under Assumptions \ref{assumption:lower_bound}, \ref{assumption:mixing}, \ref{assumption:smoothness}, and \ref{assumption:stochasic_gradient},
Theorem \ref{theorem:convergence_rate} provides the convergence rate of Momentum Tracking.
All proofs are presented in Sec. \ref{sec:proof}.

\begin{theorem}
\label{theorem:convergence_rate}
Suppose that Assumptions \ref{assumption:lower_bound}, \ref{assumption:mixing}, \ref{assumption:smoothness}, and \ref{assumption:stochasic_gradient} hold,
each model parameter $\vx_i$ is initialized with the same parameters,
and both $\vu_i$ and $\vc_i$ are initialized as $\frac{1}{1 - \beta} (\nabla F_i (\vx_i^{(0)} ; \xi_i^{(0)}) - \frac{1}{N} \sum_{j=1}^N \nabla F_j (\vx_j^{(0)} ; \xi_j^{(0)}))$. 
Then, for any $\beta \in [0, 1)$ and $R \geq 1$,
there exists a step size $\eta$ such that the average parameter $\frac{1}{R} \sum_{r=0}^{R-1} \mathbb{E} \left\| \nabla f (\bar{\vx}^{(r)}) \right\|^2$ generated by Eqs.~(\ref{eq:mt_1}-\ref{eq:mt_3}) is bounded from above by
\begin{align}
\label{eq:rate_of_mt}
    \mathcal{O} \Biggl( \sqrt{ \frac{r_0 \sigma^2 L }{N R} }
    + \left( \frac{r_0^2 \sigma^2 L^2}{p^4 R^2 (1-\beta)} \left( 1 + \frac{p \beta^2}{1-\beta} \right) \right)^{\frac{1}{3}}
    + \frac{L r_0}{(1-\beta) p^2 R} \sqrt{1 + \frac{\beta^2}{(1-\beta^2)^3 p}} \Biggr),
\end{align}
where $\bar{\vx} \coloneqq \frac{1}{N} \sum_{i=1}^N \vx_i$ and $r_0 \coloneqq f(\bar{\vx}^{(0)}) - f^\star$.
\end{theorem}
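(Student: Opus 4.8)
The plan is to collapse the coupled $N$-node recursions into an effective single-sequence stochastic momentum descent plus consensus and tracking errors, and to verify at every step that the heterogeneity constant $\zeta^2$ never enters. First I would track the network averages. Since $\mW \mathbf{1} = \mathbf{1}$, averaging Eqs.~(\ref{eq:mt_1}--\ref{eq:mt_3}) over nodes gives $\bar{\vc}^{(r+1)} = \bar{\vc}^{(r)}$, and the prescribed initialization forces $\bar{\vc}^{(0)} = \bar{\vu}^{(0)} = \vzero$, so $\bar{\vc}^{(r)} \equiv \vzero$ for all $r$. Hence the averaged iterate obeys $\bar{\vx}^{(r+1)} = \bar{\vx}^{(r)} - \eta \bar{\vu}^{(r+1)}$ with $\bar{\vu}^{(r+1)} = \beta \bar{\vu}^{(r)} + \tfrac1N \sum_i \nabla F_i(\vx_i^{(r)};\xi_i^{(r)})$, i.e.\ exactly heavy-ball momentum on the averaged stochastic gradient. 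Introducing the auxiliary sequence $\vz^{(r)} \coloneqq \bar{\vx}^{(r)} - \tfrac{\eta\beta}{1-\beta}\bar{\vu}^{(r)}$, a one-line computation gives $\vz^{(r+1)} = \vz^{(r)} - \tfrac{\eta}{1-\beta}\tfrac1N\sum_i \nabla F_i(\vx_i^{(r)};\xi_i^{(r)})$, so $\vz$ follows plain SGD with effective step size $\tilde\eta \coloneqq \eta/(1-\beta)$.

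Next I would apply the $L$-smoothness descent inequality to $f(\vz^{(r)})$ and take expectations. Using $\E[\tfrac1N\sum_i \nabla F_i(\vx_i^{(r)};\xi_i^{(r)}) \mid \mathcal{F}^{(r)}] = \tfrac1N\sum_i \nabla f_i(\vx_i^{(r)})$ together with the bound $\|\tfrac1N\sum_i \nabla f_i(\vx_i^{(r)}) - \nabla f(\bar{\vx}^{(r)})\| \le \tfrac{L}{\sqrt N}\|\mX^{(r)}(\mI - \mJ)\|_F$ (Jensen, Cauchy--Schwarz, and Assumption~\ref{assumption:smoothness}, with $\mJ \coloneqq \tfrac1N \mathbf{1}\mathbf{1}^\top$), the inner-product term yields $-\tfrac{\tilde\eta}{2}\|\nabla f(\bar{\vx}^{(r)})\|^2$ against three corrections: the consensus error $\tfrac1N\|\mX^{(r)}(\mI-\mJ)\|_F^2$, the momentum displacement $\|\vz^{(r)} - \bar{\vx}^{(r)}\|^2 = \tfrac{\eta^2\beta^2}{(1-\beta)^2}\|\bar{\vu}^{(r)}\|^2$, and the gradient variance, which contributes the $\sigma^2/N$ underlying the leading $\sqrt{r_0 \sigma^2 L/(NR)}$ term. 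Crucially, every correction here is an \emph{optimization} quantity (consensus gap, step length, noise): heterogeneity $\zeta^2$ is absent because local gradients are compared only through smoothness, never through $\nabla f_i - \nabla f$.

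The heart of the argument is to control the consensus quantities heterogeneity-free, which I would organize as three coupled recursions on the orthogonal complement of consensus. Let $\nabla F^{(r)}$ denote the matrix whose $i$-th column is $\nabla F_i(\vx_i^{(r)};\xi_i^{(r)})$, write $\mX_\perp \coloneqq \mX(\mI-\mJ)$ and $\mU_\perp \coloneqq \mU(\mI - \mJ)$, and set the tracking error $\mG^{(r)} \coloneqq \mU^{(r+1)} - \mC^{(r)}$. Assumption~\ref{assumption:mixing} with Young's inequality gives $\E\|\mX_\perp^{(r+1)}\|_F^2 \le (1-\tfrac p2)\E\|\mX_\perp^{(r)}\|_F^2 + \tfrac{2\eta^2}{p}\E\|\mG^{(r)}(\mI-\mJ)\|_F^2$, while $\mU_\perp^{(r+1)} = \beta \mU_\perp^{(r)} + (\nabla F^{(r)})(\mI-\mJ)$. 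Substituting the $\mC$-update and eliminating $\mC$ yields $\mG^{(r+1)} = \mG^{(r)}\mW + \nabla F^{(r+1)} - (1-\beta)\mU^{(r+1)}$, whose orthogonal part simplifies, via $\mU_\perp^{(r+2)} = \beta \mU_\perp^{(r+1)} + (\nabla F^{(r+1)})(\mI-\mJ)$, to $\mG^{(r+1)}(\mI-\mJ) = (\mG^{(r)}\mW)(\mI-\mJ) + (\mU_\perp^{(r+2)} - \mU_\perp^{(r+1)})$. The decisive point is that the tracking error is thereby driven \emph{only} by increments of the momentum-consensus error, and these increments are governed by successive gradient differences $(\nabla F^{(r+1)} - \nabla F^{(r)})(\mI-\mJ)$, which Assumption~\ref{assumption:smoothness} bounds by $L\|\mX^{(r+1)} - \mX^{(r)}\|_F$ plus $\mathcal{O}(\sigma)$ noise. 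This is exactly where steady-state heterogeneity cancels: the constant part of $(\nabla F)(\mI-\mJ)$ that would carry $\zeta^2$ is absorbed as $\tfrac{1}{1-\beta}(\nabla F)(\mI-\mJ)$ in $\mU_\perp$ and annihilated by the correction, leaving only gradient changes and noise.

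Finally I would close the system with a Lyapunov function of the form $\E[f(\vz^{(r)})] - f^\star + a\,\E\|\mX_\perp^{(r)}\|_F^2 + b\,\E\|\mG^{(r)}(\mI-\mJ)\|_F^2$, augmented by auxiliary terms penalizing the momentum increments $\|\mU_\perp^{(r+1)} - \mU_\perp^{(r)}\|_F^2$ to absorb the $\beta$-memory, with weights $a,b$ and the step size taken small enough (relative to $p$, $L$, $\beta$) that all cross terms telescope and the consensus/tracking contributions are dominated. Summing over $r=0,\dots,R-1$, dividing by $R$, and invoking a standard step-size tuning lemma to optimize $\eta$ over the three resulting terms produces the claimed rate, with the factors $1 + \tfrac{p\beta^2}{1-\beta}$ and $\tfrac{\beta^2}{(1-\beta^2)^3 p}$ emerging from the momentum-increment bookkeeping. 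I expect this bookkeeping to be the main obstacle: because the $\beta$ terms render $\mU_\perp$ and $\mG^{(r)}(\mI-\mJ)$ non-contractive in isolation, the Lyapunov weights and admissible step-size range must be calibrated so that the momentum cross-terms cancel rather than accumulate, and one must check at each substitution that no term silently reintroduces $\zeta^2$. This heterogeneity-free accounting under momentum memory is precisely what separates the analysis from the $\beta = 0$ Gradient Tracking case.
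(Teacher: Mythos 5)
Your plan is correct and follows essentially the same route as the paper: the averaged-iterate identity $\bar{\vx}^{(r+1)}=\bar{\vx}^{(r)}-\eta\bar{\vu}^{(r+1)}$ with $\bar{\vc}\equiv\vzero$, the auxiliary sequence $\bar{\vz}^{(r)}=\bar{\vx}^{(r)}-\tfrac{\eta\beta}{1-\beta}\bar{\vu}^{(r)}$ with the smoothness descent lemma, and a three-term Lyapunov recursion coupling consensus distance, tracking error, and momentum increments, closed by step-size tuning. The only (inessential) difference is parametrization: you track $(\mU^{(r+1)}-\mC^{(r)})(\mI-\mJ)$ and the increments of $\mU_\perp$ directly, whereas the paper measures the tracking error against auxiliary noise-free momentum sequences $\mD,\mE$ driven by $\nabla f_i(\bar{\vx}^{(r)})$ and handles the gap $\mU-\mD$ (consensus error plus $\sigma^2$) in a separate lemma, which slightly simplifies the noise bookkeeping you flag as the main obstacle.
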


\begin{remark}
Theorem \ref{theorem:convergence_rate} assumes that $\vu_i$ and $\vc_i$ are initialized as $\frac{1}{1 - \beta} (\nabla F_i (\vx_i^{(0)} ; \xi_i^{(0)}) - \frac{1}{N} \sum_{j=1}^N \nabla F_j (\vx_j^{(0)} ; \xi_j^{(0)}))$.
Thus, All-Reduce is required only once before starting the training.
If we initialize $\vu_i$ and $\vc_i$ as zeros, data heterogeneity at initial parameters $\frac{1}{N} \sum_i \| \nabla f_i (\vx_i^{(0)}) - \nabla f (\vx_i^{(0)}) \|^2$ appears in the convergence rate,
but the same phenomenon occurs in the analysis of Gradient Tracking by \citet{koloskova2021an} (see Sec. \ref{sec:additional_discussion_of_rate}).
\end{remark}

\begin{remark}
\label{remark:abm}
Combinations of Gradient Tracking with the momentum or Adam have also been proposed by \citet{xin2020distributed} and \citet{carnevale2022gtadam}.
However, they considered only the setting in which the objective function is strongly convex and the full gradient is used.
By contrast, our study focuses on the deep learning setting.
Hence, our proof strategies are completely different from those in these previous studies,
and Theorem \ref{theorem:convergence_rate} provides the convergence rate in the setting where the objective function is non-convex and the stochastic gradient is used.
\end{remark}

\begin{remark}
\citet{koloskova2021an} provided the convergence rate of Gradient Tracking in the setting where the objective function is non-convex and the stochastic gradient is used.
However, they did not consider the case where the momentum is used, and it is not trivial to provide the convergence rate of Momentum Tracking from the results in this previous work.
\end{remark}

\subsection{Discussion}
\label{sec:discussion}

\textbf{Comparison with Gradient Tracking:}
Theorem \ref{theorem:convergence_rate} indicates that the convergence rate of Momentum Tracking does not depend on data heterogeneity $\zeta^2$ for any $\beta \in [0, 1)$ 
and does not deteriorate even when the data distributions are statistically heterogeneous (i.e., $\zeta^2>0$).
Therefore, Theorem~\ref{theorem:convergence_rate} indicates that Momentum Tracking is theoretically robust to data heterogeneity for any $\beta \in [0, 1)$.
Although Momentum Tracking is a simple extension of Gradient Tracking, our work is the first to identify that the combination of Gradient Tracking and the momentum converges without being affected by data heterogeneity $\zeta^2$ for any $\beta \in [0,1)$ 
in the setting where the objective function is non-convex and the stochastic gradient is used.

Although the convergence rate of Momentum Tracking Eq.~(\ref{eq:rate_of_mt}) is minimized when $\beta=0$, 
Momentum Tracking does accelerate its convergence with the momentum being used ($\beta>0$), as experimentally demonstrated in Sec.~\ref{sec:experiment}. 
Indeed, the convergence rates of DSGDm \citep{gao2020periodic} and QG-DSGDm \citet{lin21quasi} have the same issue.
Thus, it is still an open question to show the theoretical benefits of using $\beta>0$.

\textbf{Comparison with Existing Algorithms with momentum:}
Next, we compare the convergence rate of Momentum Tracking with those of existing decentralized learning algorithms with momentum:
DSGDm \citep{gao2020periodic}, DecentLaM \citep{yuan202decentlam}, and QG-DSGDm \citep{lin21quasi}.
Here, we only show the convergence rate of QG-DSGDm, but the same discussion holds for the other methods.
The convergence rate of QG-DSGDm is as follows:

\begin{theorem}[\citet{lin21quasi}]
\label{theorem:qg_dsgdm}
Suppose that Assumptions \ref{assumption:lower_bound}, \ref{assumption:mixing}, \ref{assumption:smoothness}, and \ref{assumption:stochasic_gradient} hold, and Assumption \ref{assumption:heterogeneity} also holds.
Then, for any $\beta \in [0, \frac{p}{21+p}]$ and $R \geq 1$, 
there exists a step size $\eta$ such that $\frac{1}{R} \sum_{r=0}^{R-1} \mathbb{E} \left\| \nabla f (\bar{\vx}^{(r)}) \right\|^2$ generated by QG-DSGDm is bounded from above by\footnote{For simplicity, we set the additional hyperparameter $\mu$ for QG-DSGDm to $\beta$.}
\begin{align*}
    \mathcal{O} \Biggl( \sqrt{\frac{r_0 \sigma^2 L}{N R}} 
    + \left( \frac{r_0^2 L^2 (\zeta^2 + \sigma^2)}{p^2 R^2} \right)^{\frac{1}{3}} 
    + \frac{L r_0}{R} \left( \frac{1}{p} + \frac{1}{1-\beta} + \frac{\beta}{(1-\beta)^3} \right) \Biggr),
\end{align*}
where $r_0 \coloneqq f(\bar{\vx}^{(0)}) - f^\star$.
\end{theorem}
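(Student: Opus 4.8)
The plan is to follow the now-standard Lyapunov (potential-function) approach for analyzing decentralized stochastic methods in the non-convex regime, adapting it to the quasi-global momentum dynamics of QG-DSGDm. The central object is the network-averaged iterate $\bar{\vx}^{(r)} = \frac{1}{N}\sum_{i=1}^N \vx_i^{(r)}$. Because $\mW$ is doubly stochastic, left-multiplying the update by $\frac{1}{N}\1^\top$ collapses the mixing step and shows that $\bar{\vx}^{(r)}$ evolves as a perturbed centralized momentum-SGD iterate driven by the averaged momentum and the average of the local stochastic gradients. The two quantities I would track in parallel are the objective suboptimality $\E[f(\bar{\vx}^{(r)})] - f^\star$ and the consensus error $\Xi^{(r)} := \frac{1}{N}\sum_i \E\|\vx_i^{(r)} - \bar{\vx}^{(r)}\|^2$, together with an analogous momentum-deviation term.

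First I would establish a descent lemma: applying $L$-smoothness (Assumption~\ref{assumption:smoothness}) to $f$ along the $\bar{\vx}$-trajectory yields a one-step inequality bounding $\E[f(\bar{\vx}^{(r+1)})]$ by $\E[f(\bar{\vx}^{(r)})]$ minus a $\frac{\eta}{2}\|\nabla f(\bar{\vx}^{(r)})\|^2$ term, plus error terms proportional to the consensus error $\Xi^{(r)}$, the gradient-momentum mismatch, and the variance $\sigma^2/N$ coming from Assumption~\ref{assumption:stochasic_gradient} (the $1/N$ reflecting averaging of independent noise). Second, I would derive a recursive bound for $\Xi^{(r)}$ using the contraction in Assumption~\ref{assumption:mixing}: one step of mixing contracts the deviation by a factor $(1-p)$, while the momentum/gradient injection inflates it. Here the heterogeneity bound (Assumption~\ref{assumption:heterogeneity}) enters, since the spread of local full gradients is controlled by $\zeta^2$; this is precisely why the $\zeta^2$ term survives in QG-DSGDm, whereas a gradient-tracking correction as in Momentum Tracking would cancel it. The momentum must be unrolled so that it is expressed as a geometric accumulation of past gradients, producing the $\beta$-dependent factors $\frac{1}{1-\beta}$ and $\frac{\beta}{(1-\beta)^3}$.

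Next I would combine the descent inequality and the consensus recursion into a single potential $\Phi^{(r)} = \E[f(\bar{\vx}^{(r)})] + a\,\Xi^{(r)} + b\,(\text{momentum deviation})$ for suitably chosen weights $a,b > 0$, arranged so that the positive consensus and momentum contributions on the right-hand side of the descent lemma are absorbed by the $(1-p)$ contraction, leaving a telescoping inequality $\Phi^{(r+1)} \le \Phi^{(r)} - \frac{\eta}{2}\E\|\nabla f(\bar{\vx}^{(r)})\|^2 + (\text{noise})$. Summing over $r = 0,\dots,R-1$, telescoping the potential (which introduces $r_0 = f(\bar{\vx}^{(0)}) - f^\star$ via the identical-initialization and zero initial consensus error), and dividing by $R$ gives a bound of the form $\frac{c_1 r_0}{\eta R} + c_2 \eta \frac{\sigma^2 L}{N} + c_3 \eta^2(\cdots)$. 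Finally I would tune $\eta$ by the standard three-way balancing (e.g. the step-size lemma of \citet{koloskova2020unified}) to trade off these terms, producing the $\mathcal{O}(1/\sqrt{NR})$, $\mathcal{O}(1/R^{2/3})$, and $\mathcal{O}(1/R)$ contributions in the stated rate.

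I expect the main obstacle to be controlling the coupling between the momentum dynamics and the consensus error. The momentum accumulates a geometrically weighted history of stochastic gradients, so the injection term in the consensus recursion is itself a moving average rather than a single gradient; bounding it without letting the $\beta$-dependent amplification overwhelm the $(1-p)$ contraction requires the momentum to be small relative to the spectral gap. This is exactly the source of the restriction $\beta \in [0, \frac{p}{21+p}]$: the potential weights $a,b$ can be chosen to make $\Phi$ a valid Lyapunov function only when $\beta$ is quantitatively dominated by $p$, and verifying the resulting system of inequalities relating $a, b, \eta, \beta$, and $p$ is the most delicate bookkeeping in the argument.
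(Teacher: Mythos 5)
There is nothing in the paper to compare your proposal against: Theorem~\ref{theorem:qg_dsgdm} is an imported result, stated with a citation to \citet{lin21quasi}, and the appendix (Sec.~\ref{sec:proof}) proves only Theorem~\ref{theorem:convergence_rate} for Momentum Tracking. So the paper offers no proof of this statement, and any assessment has to be against the original reference rather than against this manuscript.

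On its own terms, your sketch describes the right architecture --- the one actually used by \citet{lin21quasi} and by the related analyses in this paper: track the doubly-stochastic average $\bar{\vx}^{(r)}$, prove a descent lemma via Assumption~\ref{assumption:smoothness}, derive a contraction recursion for the consensus error via Assumption~\ref{assumption:mixing}, observe that Assumption~\ref{assumption:heterogeneity} is what controls the gradient spread injected into that recursion (hence the surviving $\zeta^2$ term), combine into a Lyapunov potential, telescope, and tune $\eta$ with the standard three-way balancing lemma. Your diagnosis of where the restriction $\beta \le \frac{p}{21+p}$ comes from is also correct in spirit. The genuine gap is that the proposal never engages with what QG-DSGDm actually is: its momentum buffer is not driven by the local stochastic gradient (as in DSGDm, Eqs.~\ref{eq:dsgdm_1}--\ref{eq:dsgdm_2}) but by the normalized model displacement, which is precisely the modification that makes the method more robust than DSGDm and that determines the form of the momentum-deviation term in the potential, the coupling constants $a,b$, and ultimately the admissible range of $\beta$. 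Without writing down that update and carrying the displacement-driven momentum through the consensus recursion, the ``most delicate bookkeeping'' you flag cannot be checked or completed, so what you have is a correct plan for the generic analysis rather than a proof of this particular theorem.
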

Data heterogeneity $\zeta^2$ appears in the second term.
Thus, the convergence rate of QG-DSGDm deteriorates when the data distributions held by each node are statistically heterogeneous.
By contrast, the convergence rate of Momentum Tracking Eq.~(\ref{eq:rate_of_mt}) does not depend on data heterogeneity $\zeta^2$.
Therefore, Momentum Tracking is more robust to data heterogeneity than QG-DSGDm.
Because the convergence rates of DSGDm and DecentLaM also depend on $\zeta^2$, the same discussion holds for DSGDm and DecentLaM. 
Hence, Momentum Tracking is more robust to data heterogeneity than these methods.
To the best of our knowledge, Momentum Tracking is the first decentralized learning method with momentum whose convergence rate has been proven to be independent of data heterogeneity $\zeta^2$ 
in the setting where the objective function is non-convex and the stochastic gradient is used.

Next, we discuss the range of $\beta$.
The convergence rates of QG-DSGDm and DecentLaM provided by \citet{lin21quasi} and \citet{yuan202decentlam} hold only when the range of $\beta$ is restricted. 
For instance, Theorem~\ref{theorem:qg_dsgdm} assumes that $\beta \leq \frac{p}{21+p} (< 0.05)$.
However, these restrictions on the range of $\beta$ do not hold in practice.
(Typically, $\beta$ is set to $0.9$.)
Therefore, the convergence rates of QG-DSGDm and DecentLaM are unclear in such practical cases.
By contrast, Theorem~\ref{theorem:convergence_rate} can provide the convergence rate of Momentum Tracking that holds for any $\beta \in [0, 1)$.

\textbf{Comparison with SGDm:}
Next, we compare the convergence rate of Momentum Tracking with that of SGDm.
In a setting where the objective function is non-convex and the stochastic gradient is used, SGDm has been proven to converge to the stationary point with $\mathcal{O}(1 / \sqrt{R})$ \citep{yan2018unified,liu2020improved}.
By contrast, Theorem \ref{theorem:convergence_rate} indicates that if the number of rounds $R$ is sufficiently large, Momentum Tracking converges with $\mathcal{O}(1 / \sqrt{N R})$. Therefore, Momentum Tracking can achieve a linear speedup with respect to the number of nodes $N$,
which is a common and important property in decentralized learning methods \citep{lian2018asynchronous,koloskova2020unified}.

\section{Experiment}
\label{sec:experiment}

In this section, we present the results of an experimental evaluation of Momentum Tracking
and demonstrate that Momentum Tracking is more robust to data heterogeneity than the existing decentralized learning methods with momentum.
In this section, we focus on test accuracy,
and more detailed evaluation about the convergence rate is presented in Sec. \ref{sec:synthetic}.

\subsection{Setup}
\textbf{Comparison Methods:}
(1) DSGD \citep{lian2017can}: the method described in Sec. \ref{sec:decentralized_learning};
(2) DSGDm \citep{gao2020periodic}: the method described in Sec. \ref{sec:momentum};
(3) QG-DSGDm \citep{lin21quasi}: a method in which the update rule of the momentum in DSGDm is modified to be more robust to data heterogeneity than DSGDm;
(4) DecentLaM \citep{yuan202decentlam}: a method in which the update rule of the momentum in DSGDm is modified to be more robust to data heterogeneity;
(5) Gradient Tracking \citep{nedic2017achieving}: a method without momentum that is robust to data heterogeneity;
(6) Momentum Tracking: the proposed method described in Sec. \ref{sec:proposed}. 

\textbf{Dataset and Model:} We evaluated Momentum Tracking using three 10-class image classification tasks: FashionMNIST \citep{xiao2017/online}, SVHN \citep{betzer2011reading}, and CIFAR-10 \citep{Krizhevsky09learningmultiple}.
Following the previous work \citep{niwa2020edge}, we distributed the data to nodes such that each node was given data of randomly selected $k$ classes.
When $k=10$, the data distributions held by each node can be regarded as statistically homogeneous.
When $k<10$, the data distributions are regarded as statistically heterogeneous.
We evaluated the comparison methods by setting $k$ to $\{ 4,6,8,10 \}$ and changing data heterogeneity.
Note that a smaller $k$ indicates that the data distributions are more heterogeneous. 
For the neural network architecture, we used LeNet \citep{lecun1998gradientbased} with group normalization \citep{wu2018group} in Sec. \ref{sec:results}.
In Sec. \ref{sec:architecture}, we present more detailed evaluation by varying the neural network architecture (e.g., VGG-11 \citep{simonyanZ2014very} and ResNet-34 \citep{he2016deep}).
For each comparison method, we used $10 \%$ of the training data for validation and individually tuned the step size.
For DSGDm, QG-DSGDm, DecentLaM, and Momentum Tracking, we set $\beta$ to $0.9$.
All experiments were repeated using three different seed values,
and we report their averages.
More detailed hyperparameter settings are presented in Sec \ref{sec:hyperparameter}.

\textbf{Network Topology and Implementation:}
Communication efficiency is one of the most important factors in decentralized learning
and is determined by the maximum degree of the underlying network topology \citep{neglia2019role,wang2019matcha,ying2021exponential}.
Thus, following these prior works, we present the results of setting the underlying network topology to a ring consisting of eight nodes (i.e., $N=8$) in Secs. \ref{sec:results} and \ref{sec:architecture}.
In Sec. \ref{sec:network}, we present more detailed evaluation by varying the network topology.
All comparison methods were implemented using PyTorch and run on eight GPUs (NVIDIA RTX 3090).

\subsection{Experimental Results}
\label{sec:results}
\begin{figure*}[!t]
\centering
  \subfigure[CIFAR-10 (10-class)]{
    \centering
    \includegraphics[width=0.26 \hsize]{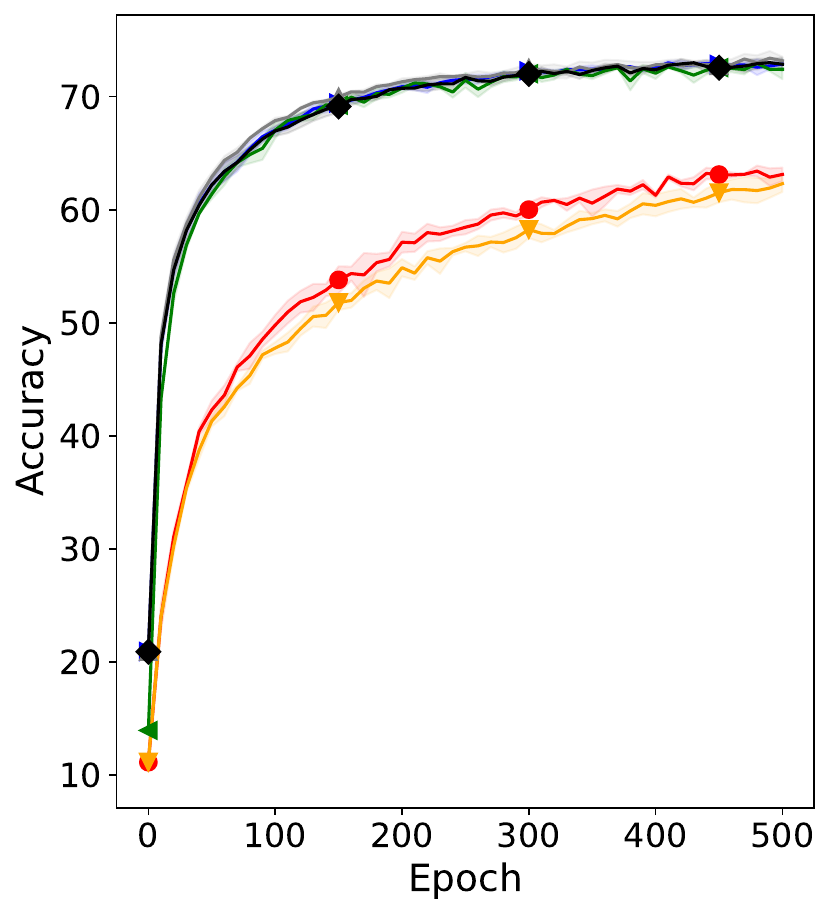}
  }
  \subfigure[CIFAR-10 (4-class)]{
    \centering
    \includegraphics[width=0.26 \hsize]{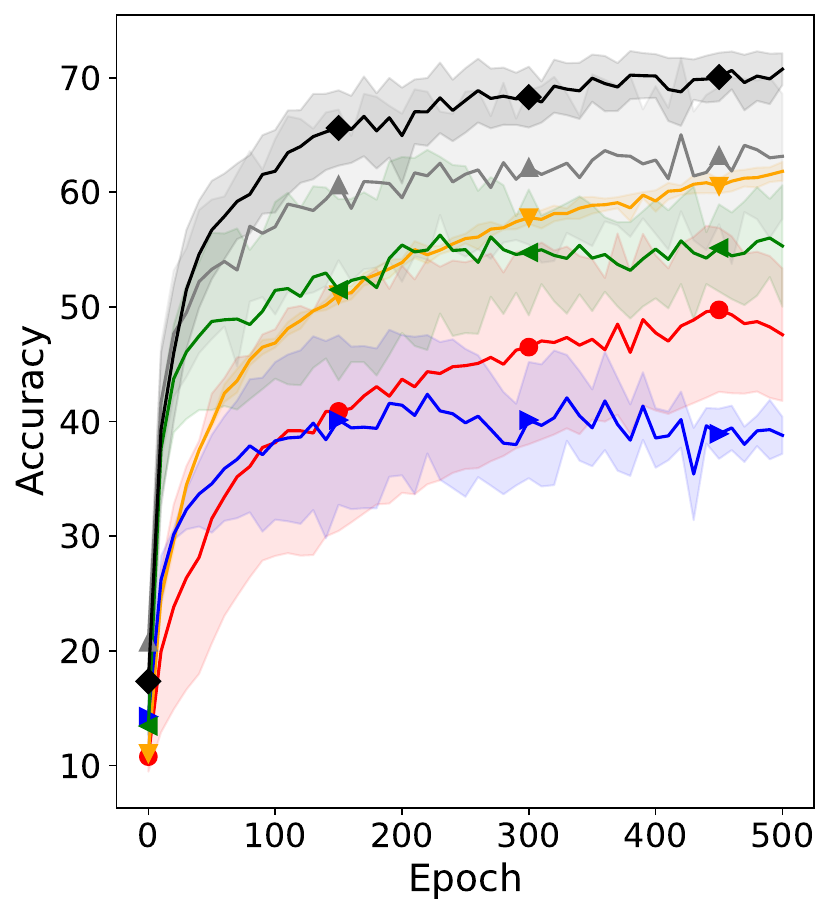}
  }
  \subfigure[Average accuracy for datasets]{
    \centering
    \includegraphics[width=0.26 \hsize]{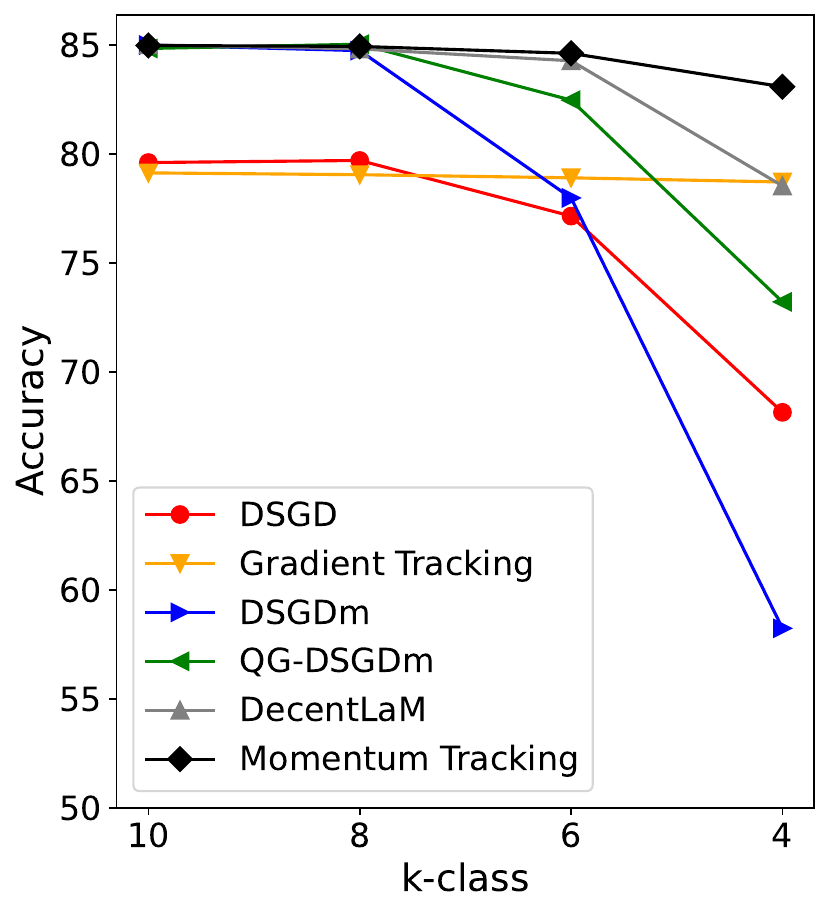}
  }
\vskip -0.1 in
\caption{(a) Learning curve on CIFAR-10 with LeNet in the $10$-class (i.e., homogeneous) setting. We evaluated the test accuracy per 10 epochs. (b) Learning curve in the $4$-class (i.e., heterogeneous) setting. (c) Average test accuracy for all datasets (i.e., FashionMNIST, SVHN, and CIFAR-10).}
\label{fig:cifar_plot}
\vskip -0.1 in
\end{figure*}

\begin{table*}[!t]
\caption{Test accuracy on FashionMNIST, SVHN, and CIFAR-10 with LeNet. ``$k$-class'' means that each node has only the data of randomly selected $k$ classes. Bold font means the highest accuracy.}
\label{table:lenet}
\centering
\begin{tabular}{lccccccc}
\toprule
           & \multicolumn{7}{c}{\textbf{FashionMNIST}} \\
           & 10-class && 8-class && 6-class && 4-class \\
\midrule
DSGD              & $85.6 \pm 0.49$      && $85.6 \pm 0.41$      && $82.7 \pm 1.12$      && $78.1 \pm 1.56$ \\
Gradient Tracking & $85.0 \pm 0.49$      && $85.4 \pm 0.26$      && $85.0 \pm 0.37$      && $84.9 \pm 0.22$ \\
\midrule
DSGDm             & $89.5 \pm 0.15$      && $89.3 \pm 0.21$      && $82.1 \pm 3.23$      && $68.7 \pm 5.02$ \\
QG-DSGDm          & $\bf{89.6 \pm 0.10}$ && $\bf{89.5 \pm 0.47}$ && $86.9 \pm 1.59$      && $80.8 \pm 2.94$ \\
DecentLaM         & $89.5 \pm 0.14$      && $89.3 \pm 0.36$      && $\bf{89.2 \pm 0.41}$ && $84.3 \pm 3.05$ \\
Momentum Tracking & $89.5 \pm 0.36$      && $89.4 \pm 0.05$      && $88.9 \pm 0.47$      && $\bf{86.8 \pm 1.56}$ \\
\bottomrule \\
\toprule
           & \multicolumn{7}{c}{\textbf{SVHN}} \\
           & 10-class && 8-class && 6-class && 4-class \\
\midrule
DSGD              & $90.1 \pm 0.17$ && $89.5 \pm 0.61$ && $87.6 \pm 1.94$ && $78.8 \pm 8.55$ \\
Gradient Tracking & $90.1 \pm 0.30$ && $89.8 \pm 0.38$ && $89.8 \pm 0.39$ && $89.4 \pm 0.47$ \\
\midrule
DSGDm             & $\bf{92.6 \pm 0.35}$ && $92.4 \pm 0.19$      && $88.1 \pm 4.38$      && $67.2 \pm 9.69$ \\
QG-DSGDm          & $92.5 \pm 0.22$      && $\bf{92.5 \pm 0.17}$ && $90.9 \pm 1.67$      && $83.5 \pm 7.14$ \\
DecentLaM         & $92.4 \pm 0.21$      && $92.2 \pm 0.39$      && $92.0 \pm 0.48$      && $88.2 \pm 4.75$ \\
Momentum Tracking & $\bf{92.6 \pm 0.32}$ && $92.4 \pm 0.40$      && $\bf{92.3 \pm 0.23}$ && $\bf{91.7 \pm 0.53}$ \\
\bottomrule \\
\toprule
           & \multicolumn{7}{c}{\textbf{CIFAR-10}} \\
           & 10-class && 8-class && 6-class && 4-class \\
\midrule
DSGD              & $63.1 \pm 0.60$      && $64.1 \pm 0.52$      && $61.2 \pm 1.16$      && $47.6 \pm 5.77$ \\
Gradient Tracking & $62.3 \pm 0.73$      && $62.0 \pm 0.80$      && $61.9 \pm 0.58$      && $61.8 \pm 0.82$ \\
\midrule
DSGDm             & $72.9 \pm 0.41$      && $72.5 \pm 0.20$      && $63.8 \pm 6.24$      && $38.8 \pm 1.61$ \\
QG-DSGDm          & $72.4 \pm 0.87$      && $\bf{73.1 \pm 0.16}$ && $69.6 \pm 2.42$      && $55.3 \pm 5.30$ \\
DecentLaM         & $\bf{73.2 \pm 0.36}$ && $72.9 \pm 0.14$      && $71.7 \pm 1.10$      && $63.1 \pm 5.43$ \\
Momentum Tracking & $72.9 \pm 0.59$      && $73.0 \pm 0.49$      && $\bf{72.6 \pm 0.41}$ && $\bf{70.7 \pm 1.38}$ \\
\bottomrule
\end{tabular}
\end{table*}
Table~\ref{table:lenet} lists the test accuracy for FashionMNIST, SVHN, and CIFAR-10.
Fig.~\ref{fig:cifar_plot} (a) and (b) present the learning curves for CIFAR-10 and Fig.~\ref{fig:cifar_plot} (c) presents the average test accuracy for all datasets.

\textbf{Comparison of Momentum Tracking and Gradient Tracking:}
First, we discuss the results of Momentum Tracking and Gradient Tracking.
Table~\ref{table:lenet} and Fig.~\ref{fig:cifar_plot} indicate that Momentum Tracking achieves a higher accuracy faster than Gradient Tracking and outperforms Gradient Tracking in all settings. 
When the data distributions are homogeneous (i.e., $10$-class), Momentum Tracking outperforms Gradient Tracking by $5.8 \%$ on average.
When the data distributions are heterogeneous (e.g., $4$-class), Momentum Tracking outperforms Gradient Tracking by $4.4 \%$ on average.
Thus, the results show that Momentum Tracking can consistently outperform Gradient Tracking regardless of data heterogeneity.

\textbf{Comparison of Momentum Tracking and DSGDm:}
Next, we discuss the results of Momentum Tracking and DSGDm.
The results show that when the data distributions are homogeneous (i.e., $10$-class),
Momentum Tracking and DSGDm are comparable and outperform DSGD and Gradient Tracking.
However, when the data distributions are heterogeneous (e.g., $4$-class),
the test accuracy of DSGDm decreases even more than that of DSGD,
and DSGDm underperforms DSGD by $9.9 \%$ on average.
By contrast, the results indicate that Momentum Tracking consistently outperforms DSGD and Gradient Tracking by $14.9 \%$ and $4.4 \%$ respectively when the data distributions are heterogeneous.
The results indicate that Momentum Tracking is more robust to data heterogeneity than DSGDm and outperforms DSGDm by $24.9 \%$ on average.

\begin{table*}[!t]
\caption{Test accuracy on CIFAR-10 with VGG-11 and ResNet-34. ``$k$-class'' indicates that each node has only the data of randomly selected $k$ classes, and bold font indicates the highest accuracy.}
\label{table:vgg}
\centering
\resizebox{\linewidth}{!}{
\begin{tabular}{lccccccc}
\toprule
           & \multicolumn{3}{c}{\textbf{CIFAR-10 + VGG-11}} && \multicolumn{3}{c}{\textbf{CIFAR-10 + ResNet-34}} \\
\cmidrule{2-4}
\cmidrule{6-8}
           & 10-class & 4-class & 2-class && 10-class & 4-class & 2-class \\
\midrule
DSGD              & $91.3 \pm 0.12$      & $86.9 \pm 1.75$      & $71.1 \pm 2.82$      && $94.3 \pm 0.13$      & $90.0 \pm 1.65$      & $63.5 \pm 0.90$ \\
Gradient Tracking & $88.1 \pm 0.14$      & $86.3 \pm 0.50$      & $83.0 \pm 0.04$      && $85.9 \pm 0.71$      & $82.6 \pm 0.33$      & $76.2 \pm 0.30$ \\ 
\midrule
DSGDm             & $\bf{92.2 \pm 0.09}$ & $77.3 \pm 4.05$      & $39.6 \pm 5.92$      && $95.8 \pm 0.26$      & $79.0 \pm 3.69$      & $27.7 \pm 2.83$ \\
QG-DSGDm          & $92.0 \pm 0.02$      & $89.5 \pm 1.08$      & $77.8 \pm 1.96$      && $95.8 \pm 0.22$      & $94.3 \pm 1.13$      & $79.9 \pm 1.59$ \\
DecentLaM         & $92.1 \pm 0.09$      & $\bf{90.9 \pm 0.65}$ & $85.2 \pm 0.67$      && $\bf{95.9 \pm 0.04}$ & $\bf{95.2 \pm 0.51}$ & $89.2 \pm 2.26$ \\
Momentum Tracking & $91.9 \pm 0.06$      & $\bf{90.9 \pm 0.60}$ & $\bf{87.0 \pm 0.48}$ && $95.0 \pm 0.13$      & $94.4 \pm 0.52$      & $\bf{89.9 \pm 0.73}$ \\
\bottomrule
\end{tabular}}
\end{table*}

\textbf{Comparison of Momentum Tracking, QG-DSGDm, and DecentLaM:}
When the data distributions are homogeneous (i.e., $10$-class), Momentum Tracking, QG-DSGDm, and DecentLaM are comparable and outperform DSGD and Gradient Tracking.
By contrast, when the data distributions are heterogeneous (e.g., $4$-class), Momentum Tracking consistently outperforms QG-DSGDm and DecentLaM by $9.9 \%$ and $4.5 \%$ respectively, whereas QG-DSGDm and DecentLaM are more robust to data heterogeneity than DSGDm.
Hence, these results are consistent with our theoretical analysis, as discussed in Secs. \ref{sec:convergence_analysis} and \ref{sec:discussion}.

In summary, when the data distributions are homogeneous, 
DSGDm, QG-DSGDm, DecentLaM, and Momentum Tracking are comparable and outperform DSGD and Gradient Tracking.
When the data distributions are heterogeneous, 
Momentum Tracking is more robust to data heterogeneity than DSGDm, QG-DSGDm, and DecentLaM,
and can outperform all comparison methods.

\subsection{Results with Various Neural Network Architectures}
\label{sec:architecture}
\begin{figure*}[!t]
\centering
\includegraphics[width=0.8 \hsize]{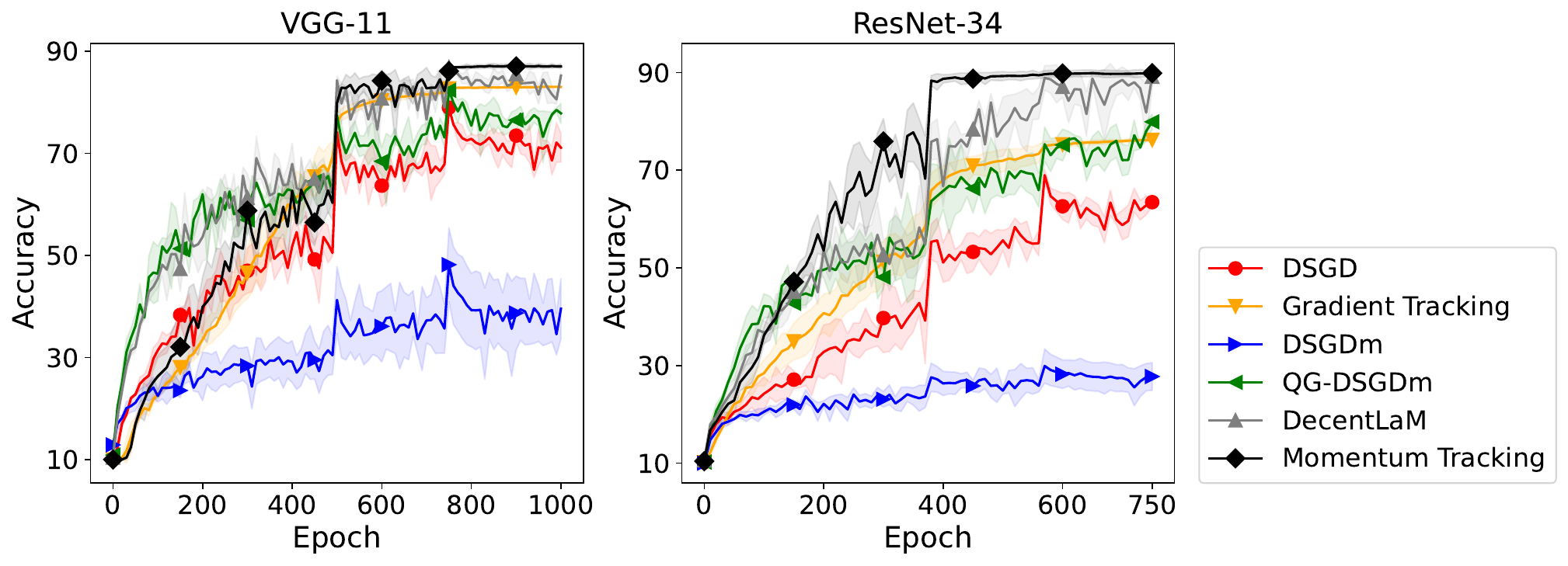}
\vskip -0.1 in
\caption{Learning curves for CIFAR-10 with VGG-11 and ResNet-34 in the $2$-class setting.}
\label{fig:cifar_deep}
\vskip -0.1 in
\end{figure*}

Next, we evaluated Momentum Tracking in more detail by varying the neural network architecture.
Table~\ref{table:vgg} lists the test accuracy with VGG-11 \citep{simonyanZ2014very} and ResNet-34 \citep{he2016deep} when we set $k$ to $\{ 2, 4, 10\}$, and Fig.~\ref{fig:cifar_deep} shows the learning curves.

For both neural network architectures, 
Table~\ref{table:vgg} reveals that when the data distributions are homogeneous (i.e., $10$-class), Momentum Tracking is comparable with DSGDm, QG-DSGDm, and DecentLaM
and outperforms DSGD and Gradient Tracking.
By contrast, when the data distributions are heterogeneous (e.g., $2$-class), Table~\ref{table:vgg} and Fig.~\ref{fig:cifar_deep} reveal that Momentum Tracking outperforms all comparison methods for both neural network architectures.
In particular, Fig.~\ref{fig:cifar_deep} indicates that DSGDm, QG-DSGDm, and DecentLaM are unstable and continue to oscillate in the final training phase,
whereas Momentum Tracking converges stably. 
These results are consistent with those of LeNet presented in Table~\ref{table:lenet}.

\section{Conclusion}
In this study, we propose Momentum Tracking, which is a method with momentum whose convergence rate is proven to be independent of data heterogeneity.
More specifically, we provide the convergence rate of Momentum Tracking 
in the setting where the objective function is non-convex and the stochastic gradient is used.
Our theoretical analysis reveals that the convergence rate of Momentum Tracking is independent of data heterogeneity for any $\beta \in [0, 1)$.
Through image classification tasks, we demonstrated that Momentum Tracking can consistently outperform the decentralized learning methods without momentum regardless of data heterogeneity.
Moreover, we showed that Momentum Tracking is more to data heterogeneity than existing decentralized learning methods with momentum
and can consistently outperform these existing methods when the data distributions are heterogeneous.

\section*{Acknowledgments}
Yuki Takezawa, Ryoma Sato, and Makoto Yamada were supported by JSPS KAKENHI Grant Number 23KJ1336, 21J22490, and MEXT KAKENHI Grant Number 20H04243, respectively.

\bibliography{main}
\bibliographystyle{tmlr}

\newpage
\appendix
\section{Pseudo-Codes}
\label{sec:psuedo}
The pseudo-codes for Momentum Tracking, QG-DSGDm, and DecentLaM are given in the following,
where $\textbf{Transmit}_{i\rightarrow j}(\cdot)$ denotes that node $i$ transmits parameters to node $j$ and $\textbf{Receive}_{i\leftarrow j}(\cdot)$ denotes that node $i$ receives parameters from node $j$. 

\begin{algorithm}[h]
   \caption{Update rules of Momentum Tracking at node $i$.}
   \label{alg:momentum_trackng}
\begin{algorithmic}[1]
   \STATE \textbf{Input:} Step size $\eta>0$, $\beta \in (0, 1]$, mixing matrix $\mW$. Initialize $\vc_i$ and $\vu_i$ to $\frac{1}{1 - \beta} (\nabla F_i (\vx_i^{(0)} ; \xi_i^{(0)}) - \frac{1}{N} \sum_{j} \nabla F_j (\vx_j^{(0)} ; \xi_j^{(0)}))$ for all $i\in V$ and $\vx_i$ with the same parameter.
   \FOR{$r = 0, \cdots, R$}
   \STATE $\vu_i^{(r+1)} \leftarrow \beta \vu_i^{(r)} + \nabla F_i(\vx_i^{(r)} ; \xi_i^{(r)})$.
   \FOR{$j \in \mathcal{N}_i$}
   \STATE $\textbf{Transmit}_{i\rightarrow j}(\vx^{(r)}_i)$ and $\textbf{Receive}_{i\leftarrow j}(\vx^{(r)}_j)$.
   \STATE $\textbf{Transmit}_{i\rightarrow j}(\vc^{(r)}_i - \vu_i^{(r+1)})$ and $\textbf{Receive}_{i\leftarrow j}(\vc^{(r)}_j - \vu_j^{(r+1)})$.
   \ENDFOR
   \STATE $\vx_i^{(r+1)} \leftarrow \sum_{j \in \mathcal{N}_i^+} W_{ij} \vx_j^{(r)} - \eta \left( \vu_i^{(r+1)} - \vc_i^{(r)} \right)$.
   \STATE $\vc_i^{(r+1)} \leftarrow \sum_{j \in \mathcal{N}_i^{+}} W_{ij} \left( \vc_j^{(r)} - \vu_j^{(r+1)} \right) + \vu_i^{(r+1)}$.
   \ENDFOR
\end{algorithmic}
\end{algorithm}

\begin{algorithm}[h]
   \caption{Update rules of QG-DSGDm at node $i$.}
\begin{algorithmic}[1]
   \STATE \textbf{Input:} Step size $\eta>0$, $\beta, \mu \in (0, 1]$, mixing matrix $\mW$. Initialize $\hat{\vu}_i$ to zero for all $i\in V$ and $\vx_i$ with the same parameter.
   \FOR{$r = 0, \cdots, R$}
   \STATE $\vu_i^{(r+1)} \leftarrow \beta \hat{\vu}_i^{(r)} + \nabla F_i(\vx_i^{(r)} ; \xi_i^{(r)})$.
   \STATE $\vx_i^{(r+\frac{1}{2})} \leftarrow \vx_i^{(r)} - \eta \vu_i^{(r+1)}$
   \FOR{$j \in \mathcal{N}_i$}
   \STATE $\textbf{Transmit}_{i\rightarrow j}(\vx^{(r+\frac{1}{2})}_i)$ and $\textbf{Receive}_{i\leftarrow j}(\vx^{(r+\frac{1}{2})}_j)$.
   \ENDFOR
   \STATE $\vx_i^{(r+1)} \leftarrow \sum_{j \in \mathcal{N}_i^+} W_{ij} \vx_j^{(r+\frac{1}{2})}$.
   \STATE $\vd_i^{(r+1)} \leftarrow \frac{\vx_i^{(r)} - \vx_i^{(r+1)}}{\eta}$.
   \STATE $\hat{\vu}_i^{(r+1)} \leftarrow \mu \hat{\vu}_i^{(r)} + (1-\mu) \vd_i^{(r+1)}$.
   \ENDFOR
\end{algorithmic}
\end{algorithm}

\begin{algorithm}[h]
   \caption{Update rules of DecentLaM at node $i$.}
\begin{algorithmic}[1]
   \STATE \textbf{Input:} Step size $\eta>0$, $\beta\in (0, 1]$, mixing matrix $\mW$. Initialize $\vu_i$ to zero for all $i\in V$ and $\vx_i$ with the same parameter.
   \FOR{$r = 0, \cdots, R$}
   \STATE $\vx_i^{(r+\frac{1}{2})} \leftarrow \vx_i^{(r)} - \eta \nabla F_i(\vx_i^{(r)} ; \xi_i^{(r)})$.
   \FOR{$j \in \mathcal{N}_i$}
   \STATE $\textbf{Transmit}_{i\rightarrow j}(\vx^{(r+\frac{1}{2})}_i)$ and $\textbf{Receive}_{i\leftarrow j}(\vx^{(r+\frac{1}{2})}_j)$.
   \ENDFOR
   \STATE $\hat{\vg}_i^{(r+1)} \leftarrow \frac{1}{\eta} \vx_i^{(r)} - \frac{1}{\eta} \sum_{j \in \mathcal{N}_i} W_{ij} \vx_j^{(r+\frac{1}{2})}$.
   \STATE $\vu_i^{(r+1)} \leftarrow \beta \vu_i^{(r)} + \hat{\vg}_i^{(r+1)}$.
   \STATE $\vx_i^{(r+1)} \leftarrow \vx_i^{(r)} - \eta \vu_i^{(r+1)}$.
   \ENDFOR
\end{algorithmic}
\end{algorithm}

\section{Additional Discussion about Convergence Rate}
\label{sec:additional_discussion_of_rate}

\subsection{Comparison with Gradient Tracking}
Because Momentum Tracking is equivalent to Gradient Tracking when $\beta=0$, Theorem \ref{theorem:convergence_rate} also provides the convergence rate of Gradient Tracking.
In this section, we compare the convergence rate of Gradient Tracking provided in Theorem \ref{theorem:convergence_rate} to that provided by \citet{koloskova2021an}.

From Theorem \ref{theorem:convergence_rate}, we get the following statement.
\begin{corollary}
\label{cor:convergence_rate}
Suppose that $\beta=0$ and the assumptions of Theorem \ref{theorem:convergence_rate} hold.
Then, for any $R \geq 1$, there exists a step size $\eta$ such that the average parameter $\bar{\vx} \coloneqq \frac{1}{N} \sum_{i} \vx_i$ generated by Eqs.~(\ref{eq:mt_1}-\ref{eq:mt_3}) satisfies
\begin{align}
    \label{eq:our_rate_of_gt}
    \frac{1}{R} \sum_{r=0}^{R-1} \mathbb{E} \left\| \nabla f (\bar{\vx}^{(r)}) \right\|^2 
    \leq 
    \mathcal{O} \left( \sqrt{\frac{r_0 \sigma^2 L}{N R}} 
    + \left( \frac{r_0 \sigma L}{p^2 R} \right)^\frac{2}{3} 
    + \frac{L r_0}{p^2 R} \right),
\end{align}
where $r_0 \coloneqq f(\bar{\vx}^{(0)}) - f^\star$.
\end{corollary}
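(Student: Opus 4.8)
The plan is to obtain Corollary \ref{cor:convergence_rate} as a direct specialization of Theorem \ref{theorem:convergence_rate} by setting $\beta = 0$ and simplifying each of the three terms in the bound in Eq.~(\ref{eq:rate_of_mt}). Since the hypotheses of the corollary are precisely those of the theorem together with $\beta = 0$, the theorem applies verbatim, and the only remaining work is the algebraic cleanup of the $\beta$-dependent factors.

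First I would treat the three summands separately. The leading statistical term $\sqrt{r_0 \sigma^2 L / (N R)}$ carries no dependence on $\beta$, so it is unchanged. For the middle term, substituting $\beta = 0$ sets $1 - \beta = 1$ and makes the correction factor $1 + p \beta^2 / (1 - \beta)$ collapse to $1$, leaving $\left( r_0^2 \sigma^2 L^2 / (p^4 R^2) \right)^{1/3}$. The remaining step is to recognize that this equals $\left( r_0 \sigma L / (p^2 R) \right)^{2/3}$. For the final term, setting $\beta = 0$ again gives $1 - \beta = 1$ and annihilates the additive contribution $\beta^2 / ((1 - \beta^2)^3 p)$ inside the square root, so the factor $\sqrt{1 + \beta^2 / ((1 - \beta^2)^3 p)}$ reduces to $1$ and the term simplifies to $L r_0 / (p^2 R)$.

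Collecting the three simplified summands reproduces exactly the right-hand side of Eq.~(\ref{eq:our_rate_of_gt}), which completes the argument. There is no genuine obstacle here: the proof is purely a substitution followed by the observation that every $\beta$-dependent multiplicative correction equals $1$ at $\beta = 0$. The only point that warrants a moment of care is the algebraic identity used to rewrite the middle term, namely $\left( r_0^2 \sigma^2 L^2 / (p^4 R^2) \right)^{1/3} = \left( r_0 \sigma L / (p^2 R) \right)^{2/3}$, which is just the rule $y^{2/3} = (y^2)^{1/3}$ applied with $y = r_0 \sigma L / (p^2 R)$.
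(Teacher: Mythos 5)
Your proposal is correct and matches the paper's own (implicit) derivation: the paper obtains Corollary \ref{cor:convergence_rate} by exactly this substitution of $\beta=0$ into the bound of Theorem \ref{theorem:convergence_rate}, with the $\beta$-dependent correction factors collapsing to $1$ and the middle term rewritten via $\left(y^2\right)^{1/3}=y^{2/3}$. No issues.
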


Then, under Assumptions \ref{assumption:lower_bound}, \ref{assumption:mixing}, \ref{assumption:smoothness}, and \ref{assumption:stochasic_gradient},
\citet{koloskova2021an} provided the convergence rate of Gradient Tracking as follows.

\begin{theorem}[\citet{koloskova2021an}]
\label{theorem:gt}
Suppose that Assumptions \ref{assumption:lower_bound}, \ref{assumption:mixing}, \ref{assumption:smoothness}, and \ref{assumption:stochasic_gradient} hold,
each model parameter $\vx_i$ is initialized with the same parameters,
and $\vc_i$ is initialized as $\mathbf{0}$.
Then, for any round $R > \frac{2}{p} \log (\frac{50}{p}(1 + \log \frac{1}{p}))$,
there exists a step size $\eta$ that satisfies that the average parameter $\bar{\vx} \coloneqq \frac{1}{N} \sum_i \vx_i$ generated by Gradient Tracking satisfies
\begin{align}
    \label{eq:rate_of_gt}
    \frac{1}{R} \sum_{r=0}^{R-1} \left\| \nabla f (\bar{\vx}^{(r)}) \right\|^2 \leq \tilde{\mathcal{O}}\left( \sqrt{\frac{r_0 \sigma^2 L}{N R}} + \left( \frac{r_0 \sigma L }{(\sqrt{p} c + p \sqrt{N}) R}\right)^{\frac{2}{3}}
    + \frac{L (r_0 + L \zeta_0^2)}{p c R} \right),
\end{align}
where $\tilde{O}(\cdot)$ hides the polylogarithmic factors, 
$\zeta_0^2 \coloneqq \frac{1}{N} \sum_i \| \nabla F_i(\bar{\vx}^{(0)} ; \xi_i) - \frac{1}{N} \sum_j \nabla F_j (\bar{\vx}^{(0)} ; \xi_j)\|^2$,
$c \coloneqq 1 - \min \{ \lambda_{\text{min}}, 0 \}^2$, and $\lambda_{\text{min}}$ is the minimum eigenvalue of $\mW$.
\end{theorem}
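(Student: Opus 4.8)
Since Gradient Tracking is exactly the $\beta=0$ instance of the scheme in Eqs.~(\ref{eq:mt_1}--\ref{eq:mt_3}), I would follow the standard two-part template for non-convex decentralized optimization — a descent bound on the mean iterate together with a consensus-error analysis — but with the twist that the correction variable $\vc_i$ must be tracked jointly with the parameters. First I would write the updates in matrix form with $\mX^{(r)}=(\vx_1^{(r)},\dots,\vx_N^{(r)})$ and analogously stacked matrices $\mU^{(r)},\mC^{(r)}$, and observe that because $\mW$ is doubly stochastic and the initialization gives $\vone^\top\mC^{(0)}=0$, the conservation identity $\vone^\top\mC^{(r)}=0$ is preserved for all $r$ (the update reads $\mC^{(r+1)}=\mC^{(r)}\mW+\mU^{(r+1)}(\mI-\mW)$ and $\vone^\top(\mI-\mW)=0$). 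Consequently the correction cancels in the mean, so $\bar{\vx}^{(r+1)}=\bar{\vx}^{(r)}-\eta\bar{\vu}^{(r+1)}$ with $\bar{\vu}^{(r+1)}=\frac{1}{N}\sum_i\nabla F_i(\vx_i^{(r)};\xi_i^{(r)})$, an unbiased estimate of $\frac{1}{N}\sum_i\nabla f_i(\vx_i^{(r)})$. Applying $L$-smoothness of $f$ along this mean trajectory and using Assumption~\ref{assumption:smoothness} to control the gap between $\nabla f(\bar{\vx}^{(r)})$ and the averaged local gradients yields a descent inequality
\begin{align*}
\E f(\bar{\vx}^{(r+1)}) \le \E f(\bar{\vx}^{(r)}) - \tfrac{\eta}{2}\E\|\nabla f(\bar{\vx}^{(r)})\|^2 + \tfrac{\eta L^2}{2}\,\Xi^{(r)} + \tfrac{L\eta^2}{2N}\sigma^2,
\end{align*}
where $\Xi^{(r)}\coloneqq\frac{1}{N}\sum_i\E\|\vx_i^{(r)}-\bar{\vx}^{(r)}\|^2$ is the parameter-consensus error. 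Crucially no $\zeta^2$ term appears, because the tracking structure replaces the raw local gradients by consensus-controlled quantities.

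The crux is bounding $\sum_r\Xi^{(r)}$, and this is where the tracking structure both removes data heterogeneity and complicates the analysis. I would introduce a second error quantity for the dispersion of the tracking variable, $\Gamma^{(r)}\coloneqq\frac{1}{N}\sum_i\E\|\vc_i^{(r)}\|^2$ (recall $\bar{\vc}^{(r)}=0$), and derive a coupled pair of linear recursions for $(\Xi^{(r)},\Gamma^{(r)})$. The mixing step contracts both consensus components by the factor $(1-p)$ of Assumption~\ref{assumption:mixing}; however, the recursion $\mC^{(r+1)}=\mC^{(r)}\mW+\mU^{(r+1)}(\mI-\mW)$ feeds parameter movement and stochastic noise into $\Gamma$ through the operator $\mI-\mW$, whose action on the consensus-orthogonal subspace is governed by $1-\lambda_{\text{min}}$. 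Taking second moments squares this operator, and this is exactly the point at which the spectral constant $c=1-\min\{\lambda_{\text{min}},0\}^2$ enters: a negative eigenvalue of $\mW$ near $-1$ makes $\mI-\mW$ large and slows the effective contraction of the tracking recursion. The key step is to show that for $\eta$ small enough the $2\times 2$ recursion matrix governing $(\Xi^{(r)},\Gamma^{(r)})$ is a strict contraction, with residual forcing of order $\eta^2(\sigma^2+\E\|\nabla f(\bar{\vx}^{(r)})\|^2)$, and then to unroll it so that $\sum_r\Xi^{(r)}$ is controlled by the accumulated forcing divided by the contraction gap. I expect this coupled-recursion bookkeeping — propagating the precise powers of $p$, $c$, and $N$ through the two-dimensional linear system — to be the main obstacle, since a loose treatment either loses the favorable $c$-dependence or spuriously reintroduces data heterogeneity.

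With both ingredients in hand I would form a Lyapunov potential $\Phi^{(r)}=\E f(\bar{\vx}^{(r)})-f^\star + a\,\Xi^{(r)} + b\,\Gamma^{(r)}$, with weights $a,b=\Theta(\text{poly}(1/p,1/c))$ chosen so that the positive $\tfrac{\eta L^2}{2}\Xi^{(r)}$ term in the descent bound is absorbed by the contraction of the consensus recursion, leaving a clean one-step inequality $\Phi^{(r+1)}\le\Phi^{(r)}-\tfrac{\eta}{4}\E\|\nabla f(\bar{\vx}^{(r)})\|^2 + \eta^2\tfrac{L\sigma^2}{2N} + \eta^3(\cdots)$. Telescoping over $r=0,\dots,R-1$, dividing by $\eta R$, and using $\Phi^{(0)}=r_0$ under the stated initialization gives a bound of the form $\frac{r_0}{\eta R}+\eta\frac{L\sigma^2}{N}+\eta^2\frac{L^2\sigma^2}{pc}$. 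Finally I would apply the standard step-size tuning lemma over the admissible range of $\eta$; balancing the three terms produces the statistical term $\sqrt{r_0\sigma^2 L/(NR)}$, the consensus-noise term $(r_0\sigma L/((\sqrt{p}c+p\sqrt{N})R))^{2/3}$, and the transient term $Lr_0/(pcR)$. The hypothesis $R>\frac{2}{p}\log(\frac{50}{p}(1+\log\frac{1}{p}))$ is precisely what guarantees the mixing has burned in enough for the tuning lemma to apply, and the polylogarithmic slack it introduces is what the $\tilde{\mathcal{O}}$ absorbs.
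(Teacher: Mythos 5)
You should first note that the paper itself does not prove Theorem \ref{theorem:gt} at all: it is imported verbatim from \citet{koloskova2021an} and stated only so that it can be compared against Corollary \ref{cor:convergence_rate}. So the only question is whether your sketch would actually establish the claimed bound, and it would not.

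The template you propose --- descent lemma along the averaged iterate, a coupled linear recursion for the parameter-consensus error and a tracking-variable error, a Lyapunov potential that absorbs the consensus term, then a generic step-size tuning lemma --- is precisely the argument this paper runs for Momentum Tracking (Lemmas \ref{lemma:descent_lemma} through \ref{lemma:sum_of_gradient}), and at $\beta=0$ that argument yields Corollary \ref{cor:convergence_rate}, i.e.\ the bound in Eq.~(\ref{eq:our_rate_of_gt}) with dependence $\left(\frac{r_0 \sigma L}{p^2 R}\right)^{2/3} + \frac{L r_0}{p^2 R}$. The entire point of the appendix discussion surrounding Theorem \ref{theorem:gt} is that this template provably falls short of Eq.~(\ref{eq:rate_of_gt}): since $c \ge p$, the terms $\frac{L r_0}{p c R}$ and $\bigl(\frac{r_0 \sigma L}{(\sqrt{p} c + p \sqrt{N}) R}\bigr)^{2/3}$ are strictly tighter in general. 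Concretely, three things in your sketch do not survive execution. First, contracting the pair $(\Xi^{(r)}, \Gamma^{(r)})$ via Assumption \ref{assumption:mixing} costs a factor $1/p$ per recursion, hence $1/p^2$ after composing the two recursions (exactly the $36/p^2$ and $864/p^2$ factors in Lemmas \ref{lemma:recursion_for_consensus_distance} and \ref{lemma:recursion_for_e}); your remark that second moments ``square the operator $\mI - \mW$'' is the right intuition for why $\lambda_{\text{min}}^2$ should appear, but it is an assertion, not a derivation, and the constant $c$ never enters your recursions quantitatively. Second, even granting your claimed one-step inequality with an $\eta^2 L^2 \sigma^2/(pc)$ forcing term, balancing gives a middle term of order $\bigl(\frac{r_0 \sigma L}{\sqrt{pc}\, R}\bigr)^{2/3}$ with no $N$-dependence, whereas the cited bound has $\sqrt{p} c + p\sqrt{N}$ in the denominator --- for large $N$ this is a genuine improvement your analysis cannot see. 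Third, the burn-in condition $R > \frac{2}{p}\log(\frac{50}{p}(1+\log\frac{1}{p}))$ and the polylogarithmic factors hidden in $\tilde{\mathcal{O}}$ do not come from a generic tuning lemma; in \citet{koloskova2021an} they arise from a genuinely different device (aggregating the error over windows of roughly $1/p$ consecutive rounds with a carefully staged step size), which is the ingredient that buys the $c$- and $\sqrt{N}$-improvements in the first place. In short, your proposal, executed carefully, proves Corollary \ref{cor:convergence_rate}, not Theorem \ref{theorem:gt}.
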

Comparing the convergence rates in Eqs.~(\ref{eq:our_rate_of_gt}) and (\ref{eq:rate_of_gt}), 
the convergence rate in Eq.~(\ref{eq:rate_of_gt}) is tighter than that in Eq.~(\ref{eq:our_rate_of_gt}) 
because $c \geq p$ for any mixing matrix $\mW$.
However, because the convergence rate in Eq.~(\ref{eq:rate_of_gt}) holds only when the number of round $R$ is larger than $\frac{2}{p} \log (\frac{50}{p}(1 + \log \frac{1}{p}))$, Theorem \ref{theorem:gt} can not describe the behavior of the convergence rate at the beginning of the training.
In contrast, Corollary \ref{cor:convergence_rate} provides the convergence rate for Gradient Tracking that holds for any $R \geq 1$.

\subsection{Comparison with Other Decentralized Learning Methods}
\citet{lu2021optimal} and \citet{yuan2022revisiting} proposed DeTAG and MG-DSGD that can achieve optimal convergence rates by using algorithmic techniques such as gradient accumulation and multiple gossip averaging.
However, Assumption \ref{assumption:heterogeneity} is necessary for both analyses.
Then, as the data heterogeneity becomes large,
the convergence rate of DeTAG deteriorates, and the number of multiple gossip averaging increases. 
The goal of our study is to propose a method with momentum whose convergence rate is independent of data heterogeneity.
Thus, we leave it to future work to compare Momentum Tracking with these methods.

\section{Additional Experiments}
\subsection{Results with Various Network Topologies}
\label{sec:network}
We evaluated Momentum Tracking in more detail by changing the underlying network topology.
Table~\ref{table:network} lists the test accuracy of all comparison methods when we set the underlying network topology to be a hypercube or a semantic exponential graph.

Table~\ref{table:network} indicates that when the data distributions held by each node are statistically homogeneous (i.e., $10$-class),
DSGDm, QG-DSGDm, DecentLaM, and Momentum Tracking are comparable and outperform DSGD and Gradient Tracking for all network topologies.
When the data distributions are heterogeneous (i.e., $4$-class), the results show that Momentum Tracking is more robust to data heterogeneity than DSGDm, QG-DSGDm, and DecentLaM
and outperforms all comparison methods for all network topologies.
Therefore, the results indicate that Momentum Tracking is robust to data heterogeneity regardless of the underlying network topology.

\begin{table}[!h]
\caption{Test accuracy on CIFAR-10 with different underlying network topologies.}
\label{table:network}
\vskip -0.1 in
\centering
\begin{tabular}{lccccccc}
\toprule
           & \multicolumn{7}{c}{\textbf{CIFAR-10}} \\
           & \multicolumn{3}{c}{Hypercube} && \multicolumn{3}{c}{Semantic Exponential Graph} \\
\cmidrule{2-4}
\cmidrule{6-8}
           & 10-class && 4-class && 10-class && 4-class \\
\midrule
DSGD              & $63.3 \pm 0.65$      && $55.9 \pm 4.11$      && $64.0 \pm 0.26$      && $60.7 \pm 1.82$ \\
Gradient Tracking & $61.0 \pm 1.34$      && $60.2 \pm 1.13$      && $62.4 \pm 0.53$      && $62.4 \pm 0.89$ \\
\midrule
DSGDm             & $\bf{73.2 \pm 0.09}$ && $45.0 \pm 5.90$      && $\bf{73.4 \pm 0.13}$ && $51.5 \pm 7.80$ \\
QG-DSGDm          & $73.0 \pm 0.31$      && $62.9 \pm 3.68$      && $\bf{73.4 \pm 0.58}$ && $70.2 \pm 1.09$ \\
DecentLaM         & $72.9 \pm 0.24$      && $69.1 \pm 4.05$      && $72.9 \pm 0.73$      && $71.2 \pm 1.72$ \\
Momentum Tracking & $72.8 \pm 0.15$      && $\bf{72.7 \pm 0.28}$ && $72.7 \pm 0.33$     && $\bf{72.9 \pm 0.07}$ \\
\bottomrule
\end{tabular}
\vskip -0.1 in
\end{table}

\subsection{Results with Other Heterogeneous Setting}

In Sec. \ref{sec:experiment}, we show the results when the data are distributed such that each node had data of randomly selected $k$ classes.
In this section, we show the results in another heterogeneous setting,
where the label distributions of each node are determined by Dirichlet distributions \cite{hsu2019measuring}.

Table \ref{table:dirichlet} lists the results when we distributed data using Dirichlet distributions.
The results indicate that Momentum Tracking is more robust to the data heterogeneity than DSGDm, QG-DSGDm, and DecentLaM in both cases where we use Dirichlet distributions and where we use $k$-class setting.

\begin{table}[h!]
\caption{Test accuracy on CIFAR-10 with different $\alpha$.}
\label{table:dirichlet}
\vskip -0.1 in
    \centering
    \begin{tabular}{lcc}
    \toprule
                 &\multicolumn{2}{c}{\textbf{CIFAR-10 + VGG-11}} \\
                 & $\alpha=10$ (homogeneous case) & $\alpha=0.1$ (heterogeneous case) \\
    \midrule
    QG-DSGDm          & $89.7 \pm 0.07$ & $87.2 \pm 1.54 $\\
    DecentLaM         & $90.1 \pm 0.28$ & $88.6 \pm 0.99$ \\
    Momentum Tracking & $\bf{90.5 \pm 0.01}$ & $\bf{90.2 \pm 0.37}$\\
    \bottomrule
    \end{tabular}
\vskip -0.1 in
\end{table}

\subsection{Initial Value Analysis}
In this section, we discuss the initial values of $\vc_i$ and $\vu_i$.
Table~\ref{table:initial_value} lists the test accuracy for Momentum Tracking when we initialize $\vc_i$ and $\vu_i$ to zero and when we initialize $\vc_i$ and $\vu_i$ as in Theorem \ref{theorem:convergence_rate}.
The results indicate that the test accuracy are almost equivalent on both settings.
Hence, Theorem \ref{theorem:convergence_rate} requires $\vc_i$ and $\vu_i$ to be initialized to $\frac{1}{1 - \beta} (\nabla F_i (\vx_i^{(0)} ; \xi_i^{(0)}) - \frac{1}{N} \sum_{j=1}^N \nabla F_j (\vx_j^{(0)} ; \xi_j^{(0)}))$.
However, in practice, $\vc_i$ and $\vu_i$ can be initialized to zeros without any impact on accuracy.

\begin{table}[!h]
\caption{Test accuracy on FashionMNIST, SVHN, and CIFAR-10 with LeNet. ``$k$-class'' indicates that each node has only the data of randomly selected $k$ classes.}
\label{table:initial_value}
\vskip -0.1 in
\centering
\begin{tabular}{lcccc}
\toprule
           & \multicolumn{4}{c}{\textbf{FashionMNIST}} \\
           & 10-class & 8-class & 6-class & 4-class \\
\midrule
Momentum Tracking                                        & $89.5 \pm 0.36$ & $89.4 \pm 0.05$ & $88.9 \pm 0.47$ & $86.8 \pm 1.56$ \\
Momentum Tracking ($\vc_i^{(0)}=\vu_i^{(0)}=\mathbf{0}$) & $89.5 \pm 0.38$ & $89.4 \pm 0.04$ & $88.7 \pm 0.63$ &  $85.8 \pm 1.53$ \\
\bottomrule \\
\toprule
           & \multicolumn{4}{c}{\textbf{SVHN}} \\
           & 10-class & 8-class & 6-class & 4-class \\
\midrule
Momentum Tracking                                        & $92.6 \pm 0.32$ & $92.4 \pm 0.40$ & $92.3 \pm 0.23$ & $91.7 \pm 0.53$ \\
Momentum Tracking ($\vc_i^{(0)}=\vu_i^{(0)}=\mathbf{0}$) & $92.5 \pm 0.34$ & $92.3 \pm 0.50$ & $92.2 \pm 0.29$ &  $92.0 \pm 0.81$ \\
\bottomrule \\
\toprule
           & \multicolumn{4}{c}{\textbf{CIFAR-10}} \\
           & 10-class & 8-class & 6-class & 4-class \\
\midrule
Momentum Tracking                                        & $72.9 \pm 0.59$ & $73.0 \pm 0.49$ & $72.6 \pm 0.41$ & $70.7 \pm 1.38$ \\
Momentum Tracking ($\vc_i^{(0)}=\vu_i^{(0)}=\mathbf{0}$) & $72.8 \pm 0.35$ & $72.9 \pm 0.32$ & $73.0 \pm 0.41$ & $70.7 \pm 2.00$ \\
\bottomrule
\end{tabular}
\end{table}

\subsection{Comparison with RelaySum}

In this section, we compare Momentum Tracking with RelaySum \cite{vogels2021relaysum},
which is one of the methods that are most robust to data heterogeneity,
and RelaySum with momentum (RelaySumM). 
Table \ref{table:relaysum} lists the accuracy on CIFAR-10 with VGG-11.
The results indicate that RelaySumM is more robust to data heterogeneity than Momentum Tracking
and outperforms Momentum Tracking in the $2$-class setting.
However, the convergence rate of RelaySum is proven to be independent of data heterogeneity only when the momentum is not applied,
and it remains to be unclear whether the convergence rate of RelaySum is independent of data heterogeneity when the momentum is applied.
Thus, it is a clear advantage that the convergence rate of Momentum Tracking is proven to be independent of data heterogeneity for any momentum coefficient $\beta \in [0,1)$.
The main objective and contribution of our study are not to achieve state-of-the-art, 
but to propose a method with momentum whose convergence rate is proven to be independent of the data heterogeneity.
We believe that our proof is helpful for future research that will attempt to analyze the convergence rates when the momentum is applied (e.g., the convergence rate of RelaySumM).

\begin{table}[h!]
\caption{Test accuracy on CIFAR-10 with VGG-11.}
\label{table:relaysum}
\vskip -0.1 in
    \centering
    \begin{tabular}{lcc}
    \toprule
                 &\multicolumn{2}{c}{\textbf{CIFAR-10 + VGG-11}} \\
                 & $10$-class & $2$-class \\
    \midrule
    DSGD              & $91.3 \pm 0.12$ & $71.1 \pm 2.82$ \\
    Gradient Tracking & $88.1 \pm 0.14$ & $83.0 \pm 0.04$  \\
    RelaySum          & $91.1 \pm 0.13$ & $89.0 \pm 0.15$ \\
    \midrule
    DSGDm             & $\bf{92.2 \pm 0.09}$ & $39.6 \pm 5.92$  \\
    QG-DSGDm          & $92.0 \pm 0.02$ & $ 77.8 \pm 1.96$  \\
    DecentLaM         & $92.1 \pm 0.09$ & $85.2 \pm 0.67$  \\
    RelaySumM         & $92.1 \pm 0.13$ & $\bf{89.3 \pm 0.76}$ \\
    Momentum Tracking & $91.9 \pm 0.06$ & $87.0 \pm 0.48$ \\
    \bottomrule
    \end{tabular}
\end{table}

\subsection{Comparision with ABm and GTAdam}
In this section, we compared Momentum Tracking with ABm \citep{xin2020distributed} and GTAdam \citep{carnevale2022gtadam}, showing the results in Table \ref{table:abm}.
The update rules of Momentum Tracking are slightly different from that of ABm and GTAdam, 
but the results indicate that they can achieve almost the same accuracy.
However, as we mention in Remark \ref{remark:abm}, 
ABm and GTAdam are proven to be independent of data heterogeneity only in the strongly convex setting.
Thus, Momentum Tracking is the first method with momentum whose convergence rate is proven to be independent of data heterogeneity in non-convex and stochastic settings.

\begin{table}[h]
    \caption{Test accuracy on CIFAR-10 with LeNet.}
    \vskip - 0.1 in
    \label{table:abm}
    \begin{center}
    \centering
    \begin{tabular}{lcc}
        \toprule
         & $10$-class & $4$-class \\
        \midrule
         Momentum Tracking & $72.9 \pm 0.59$ & $70.7 \pm 1.38$ \\
         ABm               & $71.0 \pm 0.20$ & $71.1 \pm 0.20$ \\
         GTAdam            & $71.4 \pm 0.56$ & $67.7 \pm 3.20$ \\
        \bottomrule
    \end{tabular}
    \end{center}
\end{table}

\subsection{Learning Curves}
In this section, we present the learning curves for the results whose final accuracy are presented in Tables~\ref{table:lenet} and \ref{table:vgg}.
Figs.~\ref{fig:learning_curve_fashion}, \ref{fig:learning_curve_svhn}, and \ref{fig:learning_curve_cifar} show the learning curves for FashionMNSIT, SVHN, and CIFAR-10, respectively, with LeNet.
Figs.~\ref{fig:learning_curve_cifar_vgg} and \ref{fig:learning_curve_cifar_resnet} show the learning curves for CIFAR-10 with VGG-11 and ResNet-34, respectively.

When the data distributions are statistically homogeneous (i.e., $10$-class), 
the results indicate that DSGDm, QG-DSGDm, DecentLaM, and Momentum Tracking are comparable
and can achieve high accuracy faster than DSGD and Gradient Tracking.
When the data distributions are statistically heterogeneous (e.g., $2$-class and $4$-class),
the results indicate that the learning curves for Momentum Tracking are more stable than those for DSGDm, QG-DSGDm, and DecentLaM,
and Momentum Tracking outperforms all comparison methods.
In particular, in Figs.~\ref{fig:learning_curve_cifar_vgg}~and~\ref{fig:learning_curve_cifar_resnet}, 
the accuracy of DSGD, DSGDm, QG-DSGDm, and DecentLaM continue to oscillate in the final training phase in the $2$-class setting, 
whereas the accuracy of Momentum Tracking and Gradient Tracking converge in the $2$-class setting as well as in the $10$-class setting.
Therefore, Momentum Tracking is more robust to data heterogeneity than DSGDm, QG-DSGDm, and DecentLaM.

\begin{figure}[!h]
\centering
  \subfigure{
    \centering
    \includegraphics[width=0.23 \columnwidth]{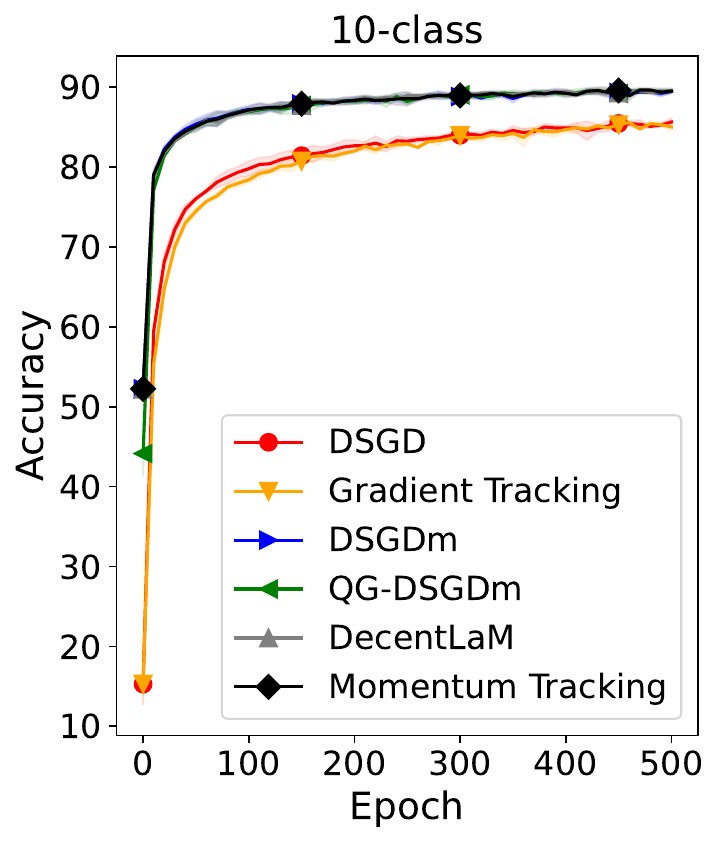}
  }
  \subfigure{
    \centering
    \includegraphics[width=0.23 \columnwidth]{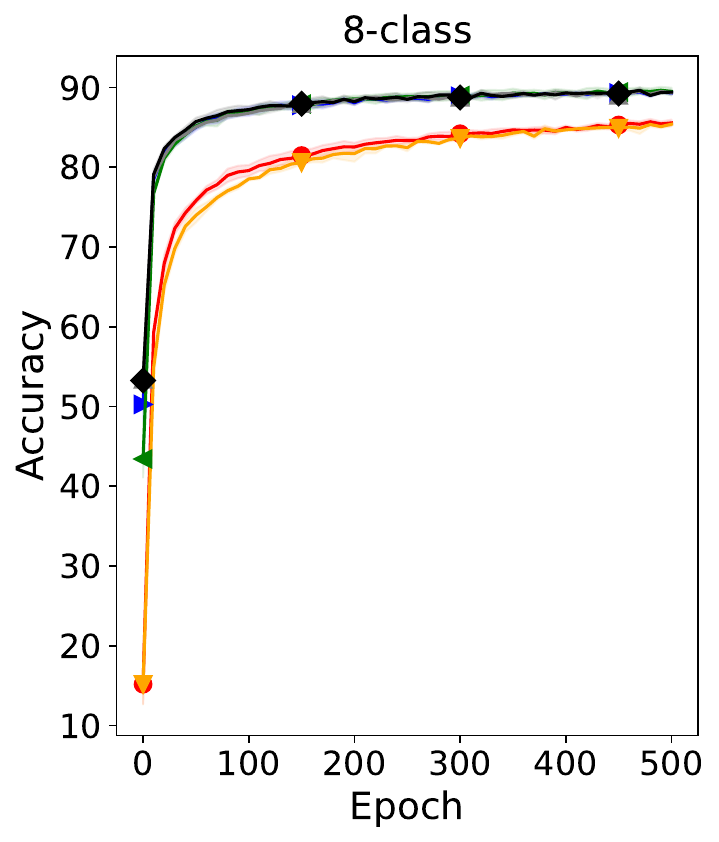}
  }
  \subfigure{
    \centering
    \includegraphics[width=0.23 \columnwidth]{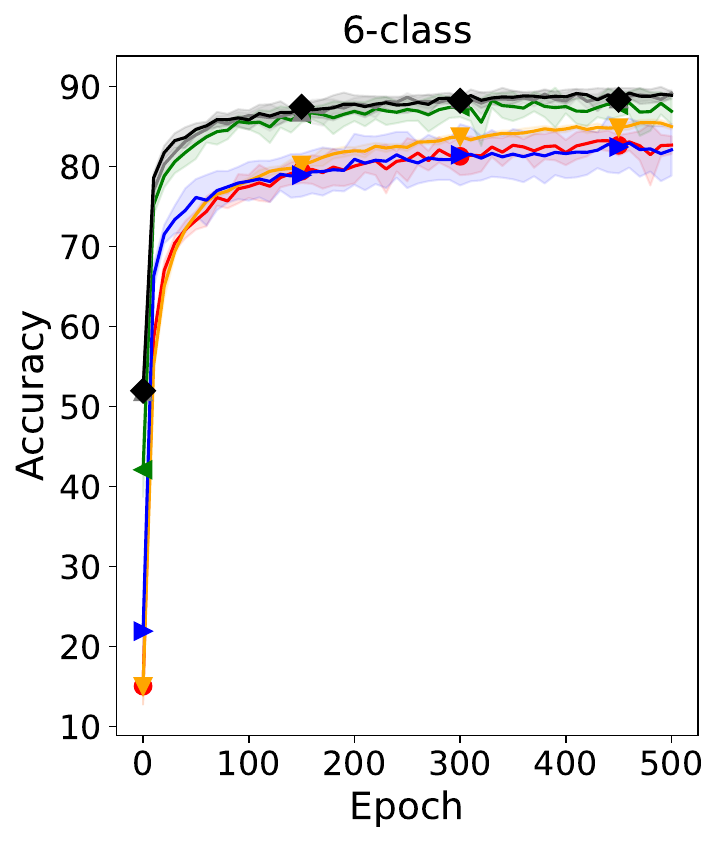}
  }
  \subfigure{
    \centering
    \includegraphics[width=0.23 \columnwidth]{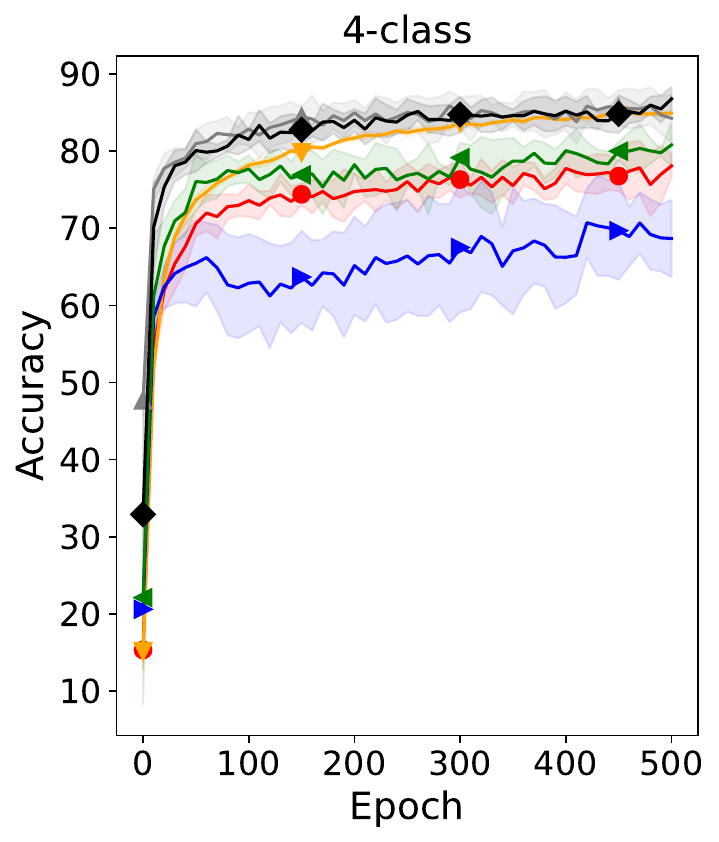}
  }
\caption{Learning curves on FashionMNIST. The accuracy is evaluated per 10 epochs.}
\label{fig:learning_curve_fashion}
\end{figure}

\begin{figure}[!h]
\centering
  \subfigure{
    \centering
    \includegraphics[width=0.23 \columnwidth]{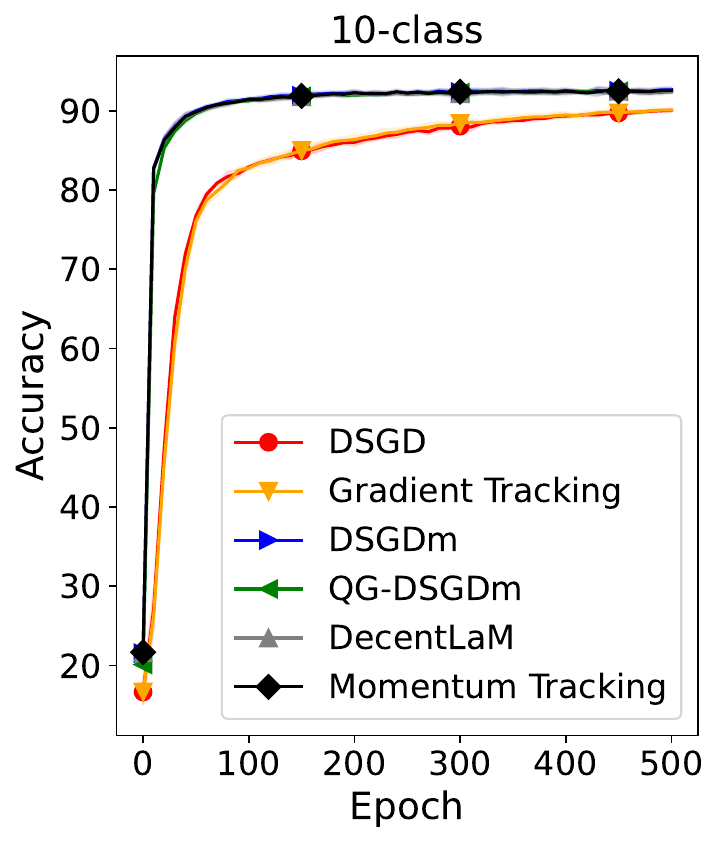}
  }
  \subfigure{
    \centering
    \includegraphics[width=0.23 \columnwidth]{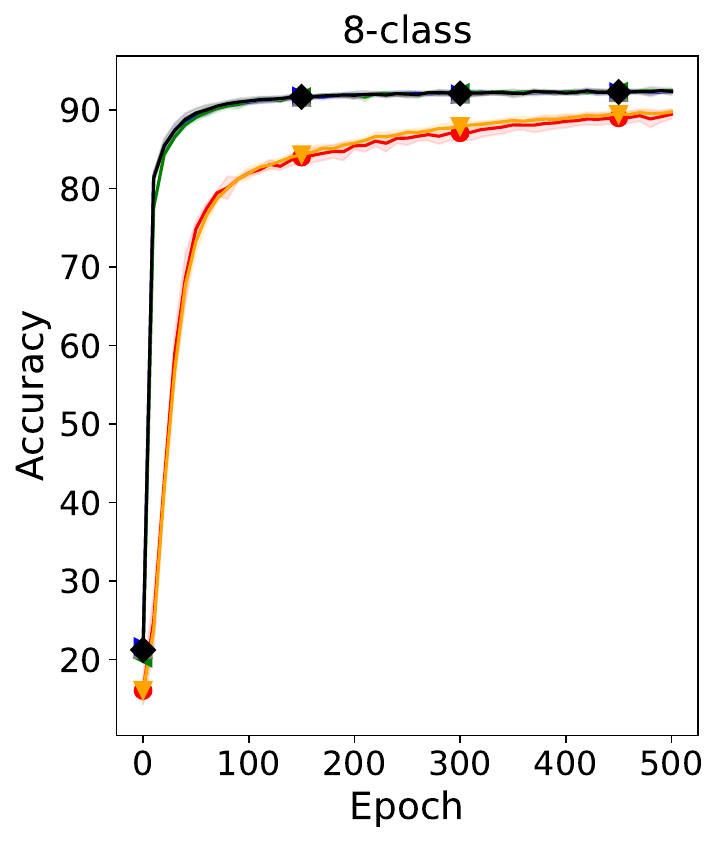}
  }
  \subfigure{
    \centering
    \includegraphics[width=0.23 \columnwidth]{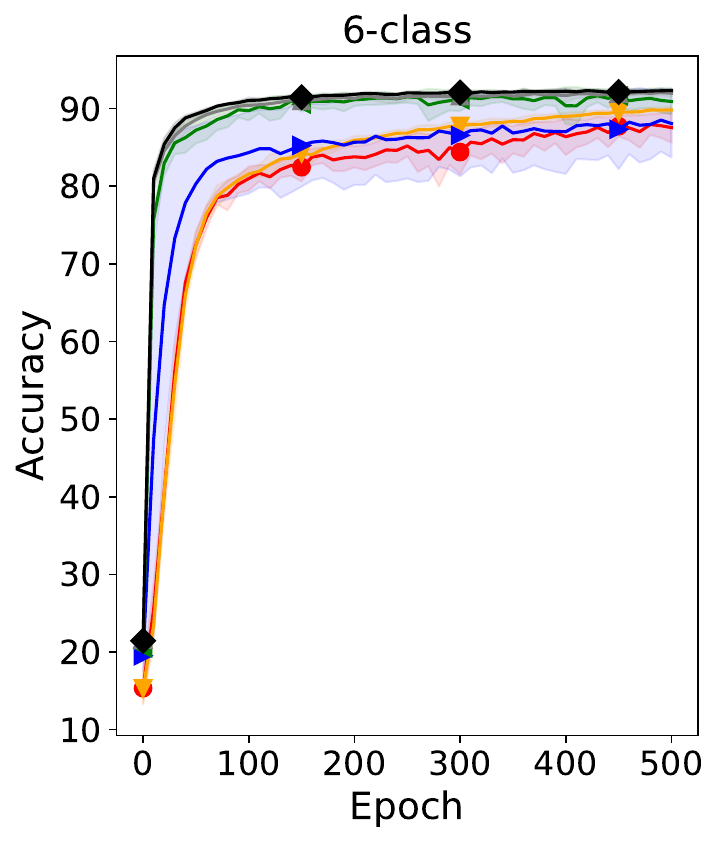}
  }
  \subfigure{
    \centering
    \includegraphics[width=0.23 \columnwidth]{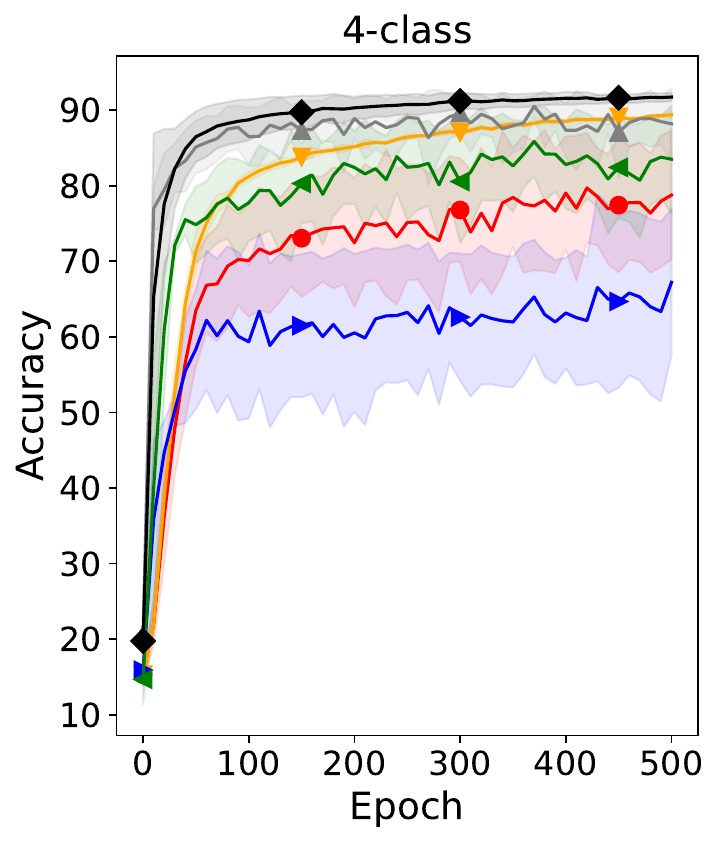}
  }
\caption{Learning curves on SVHN. The accuracy is evaluated per 10 epochs.}
\label{fig:learning_curve_svhn}
\end{figure}

\begin{figure}[!t]
\centering
  \subfigure{
    \centering
    \includegraphics[width=0.23 \columnwidth]{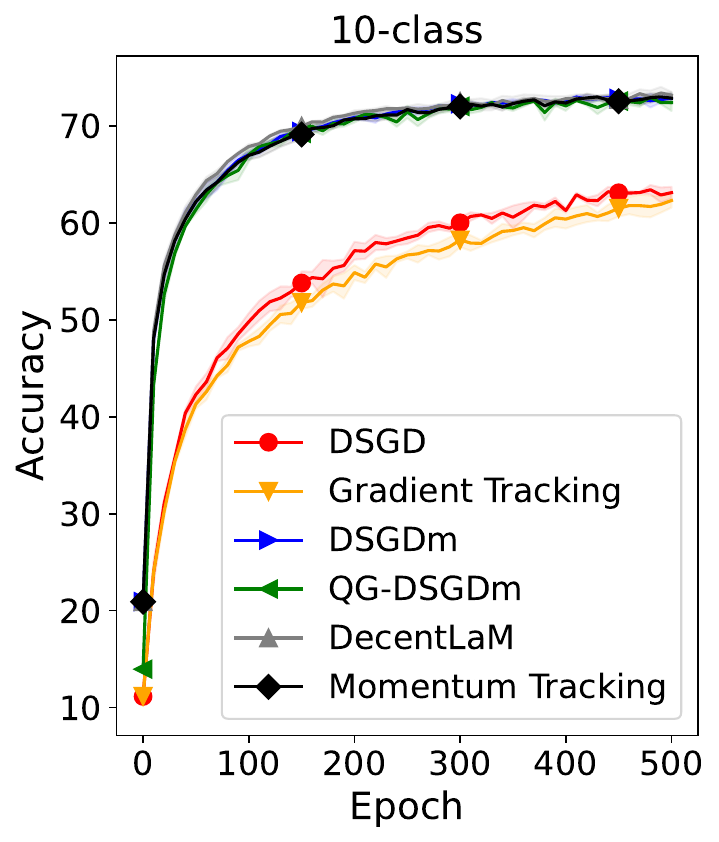}
  }
  \subfigure{
    \centering
    \includegraphics[width=0.23 \columnwidth]{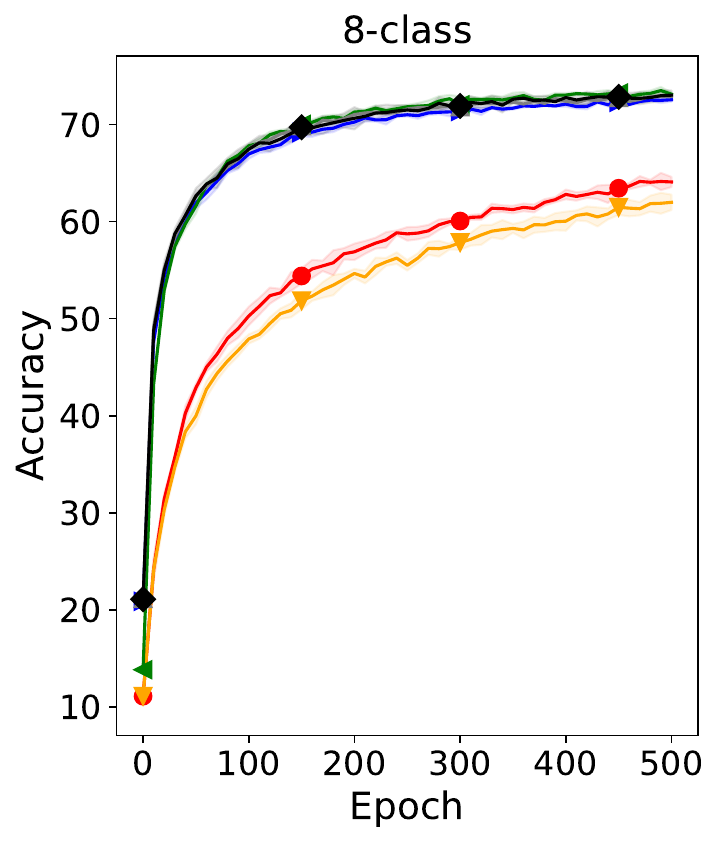}
  }
  \subfigure{
    \centering
    \includegraphics[width=0.23 \columnwidth]{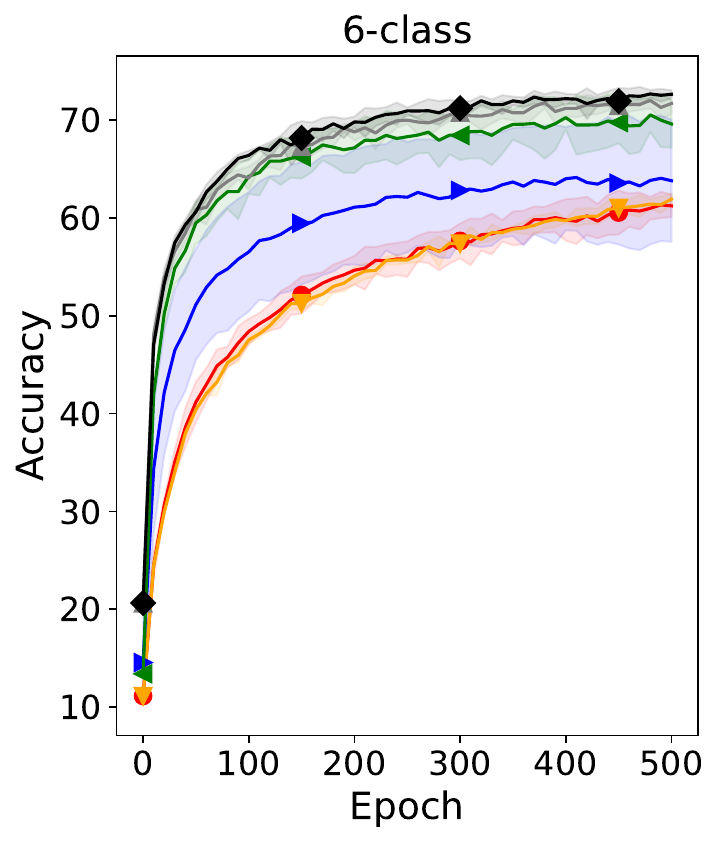}
  }
  \subfigure{
    \centering
    \includegraphics[width=0.23 \columnwidth]{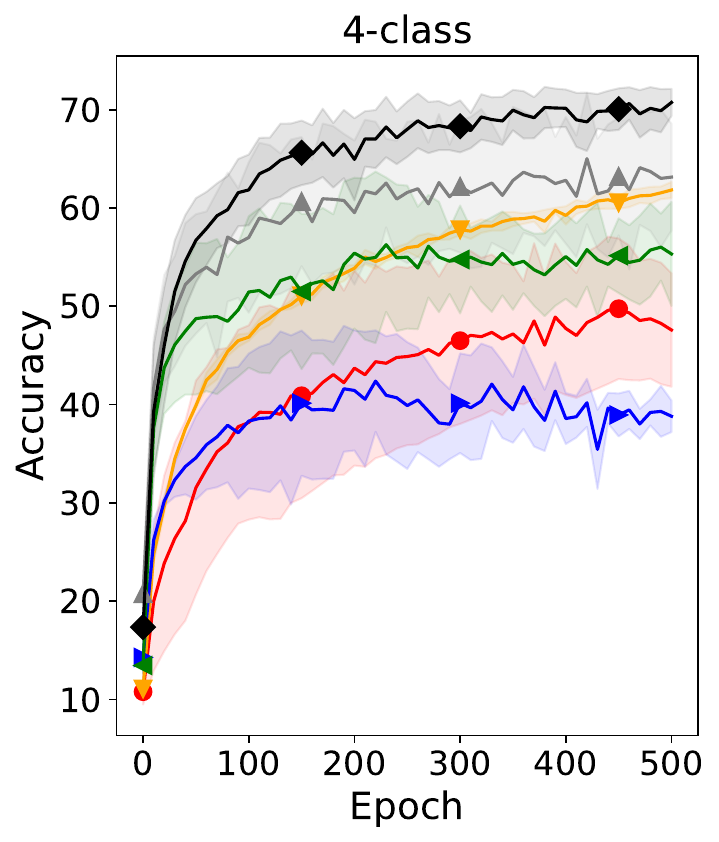}
  }
\caption{Learning curves on CIFAR-10. The accuracy is evaluated per 10 epochs.}
\label{fig:learning_curve_cifar}
\end{figure}

\begin{figure}[!h]
\centering
  \subfigure{
    \centering
    \includegraphics[width=0.31 \columnwidth]{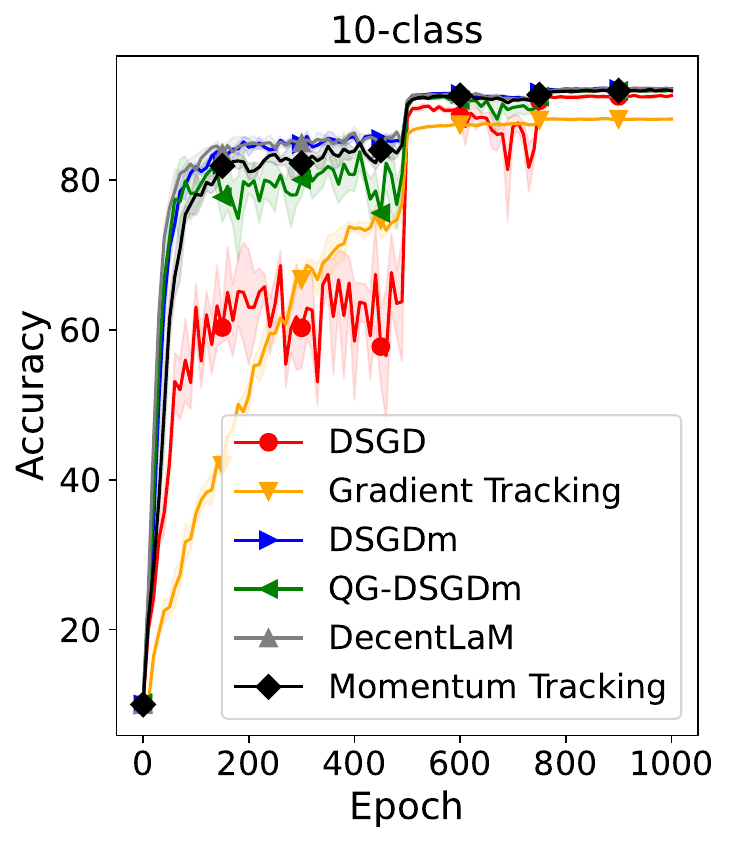}
  }
  \subfigure{
    \centering
    \includegraphics[width=0.31 \columnwidth]{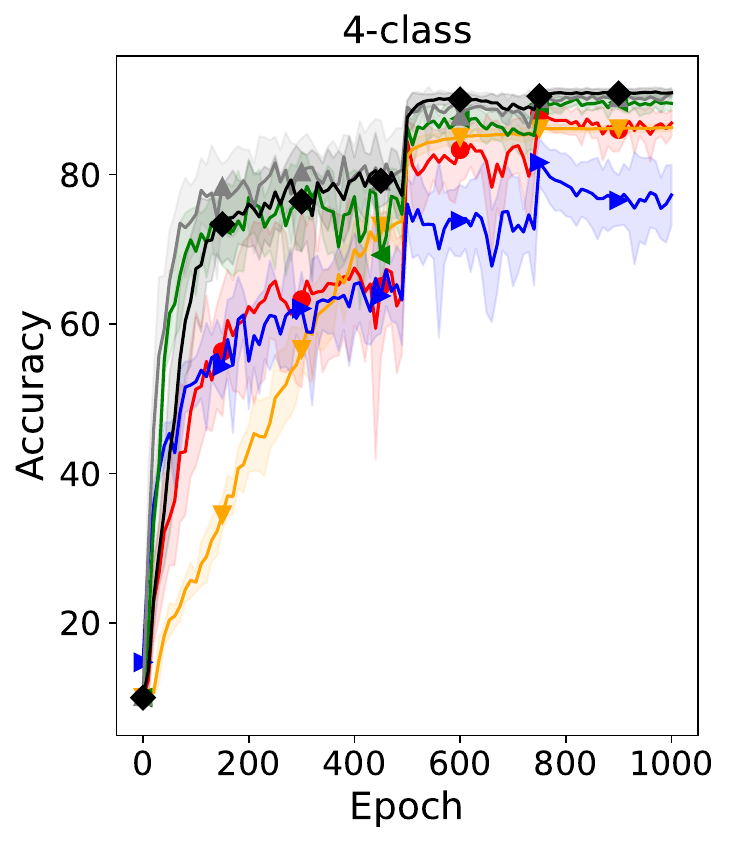}
  }
  \subfigure{
    \centering
    \includegraphics[width=0.31 \columnwidth]{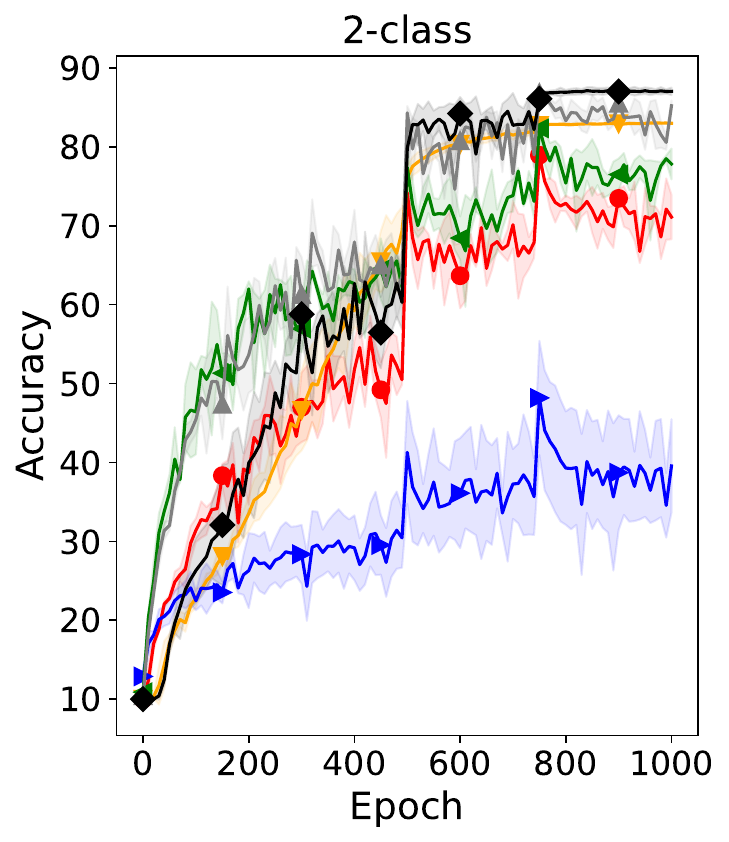}
  }
\vskip -0.1 in
\caption{Learning curves on CIFAR-10 with VGG-11. The accuracy is evaluated per 10 epochs.}
\label{fig:learning_curve_cifar_vgg}
\vskip -0.1 in
\end{figure}

\begin{figure}[!h]
\centering
  \subfigure{
    \centering
    \includegraphics[width=0.31 \columnwidth]{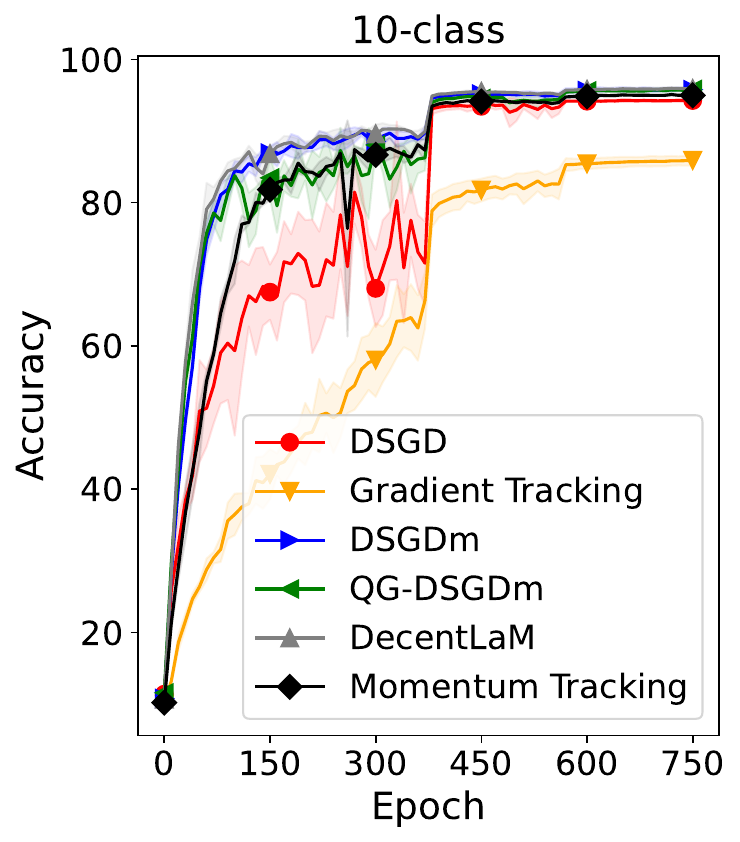}
  }
  \subfigure{
    \centering
    \includegraphics[width=0.31 \columnwidth]{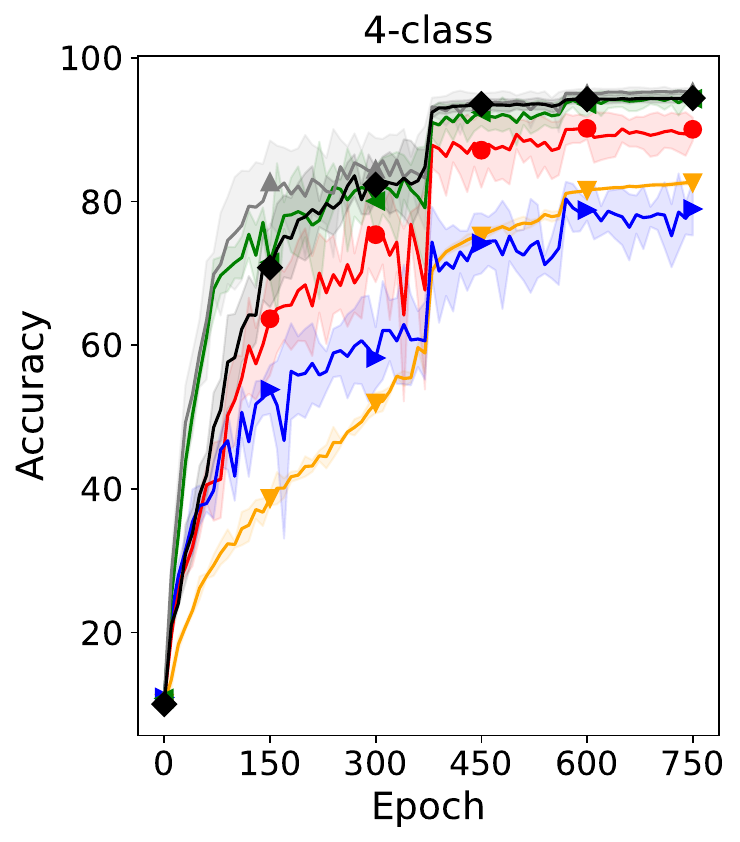}
  }
  \subfigure{
    \centering
    \includegraphics[width=0.31 \columnwidth]{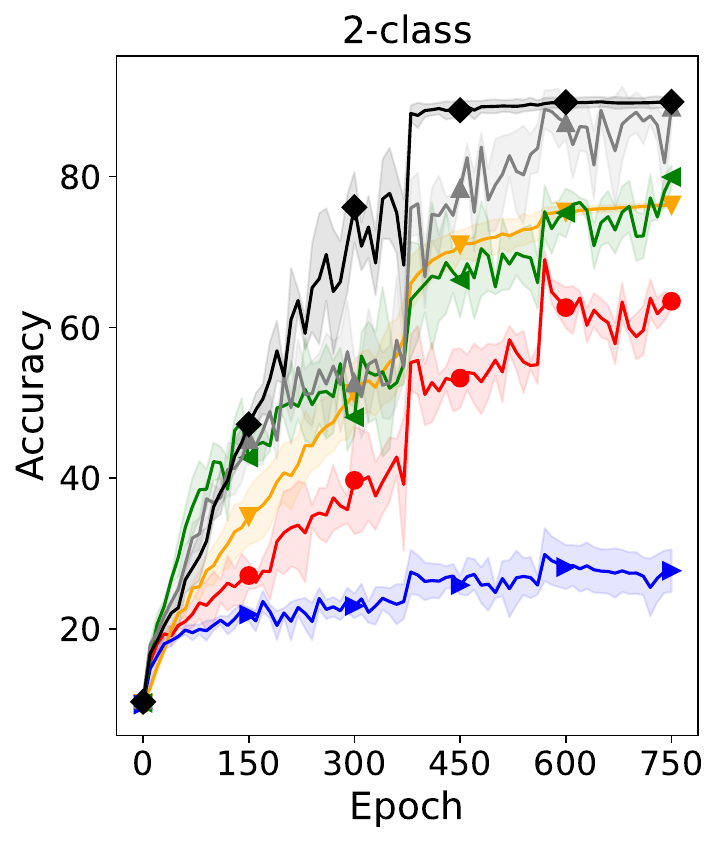}
  }
\vskip -0.1 in
\caption{Learning curves on CIFAR-10 with ResNet-34. The accuracy is evaluated per 10 epochs.}
\label{fig:learning_curve_cifar_resnet}
\vskip -0.1 in
\end{figure}

\newpage~
\newpage
\subsection{Synthetic Experiment}
\label{sec:synthetic}
In this section, we evaluate the convergence rate in more detail using a synthetic dataset.
Following the previous work \citep{koloskova2020unified}, we set the dimension of parameter $d$ to $50$, the number of nodes $N$ to $25$, and the network topology to a ring consisting of $N$ nodes.
We then defined the local objective function $f_i (\vx)$ to be $\frac{1}{2} \| \mA_i \vx - \vb_i \|^2$ where $\mA_i \coloneqq i / \sqrt{N}$ and $\vb_i$ are sampled from $\mathcal{N} (\mathbf{0}, \zeta^2 / i^2 \mathbf{1})$,
and we defined the stochastic gradient $\nabla F_i (\vx ; \xi_i)$ to be $\nabla f_i (\vx) + \epsilon$ where $\epsilon$ is drawn from $\mathcal{N} (\mathbf{0} ;  \sigma^2 / d \mathbf{1})$.
For all comparison methods, we set the step size $\eta$ to $1.0 \times 10^{-4}$.

Figs.~\ref{fig:rate_begining} and \ref{fig:rate} illustrate $\| \nabla f (\bar{\vx}) \|^2$ with respect to the number of rounds $r$ when we vary data heterogeneity $\zeta^2$ as $\{ 0, 25, 50 \}$ and set $\sigma^2$ to one.
The results show that Momentum Tracking converges in the same manner regardless of data heterogeneity $\zeta^2$.
On the other hand, for DSGDm, QG-DSGDm, and DecentLaM,
$\| \nabla f (\bar{\vx}) \|^2$ increases as data heterogeneity $\zeta^2$ increases.
Hence, these results are consistent with our theoretical analysis that the convergence rate of Momentum Tracking is independent of data heterogeneity.

\begin{figure}[!h]
\centering
\vskip -0.1 in
\includegraphics[width=0.95 \columnwidth]{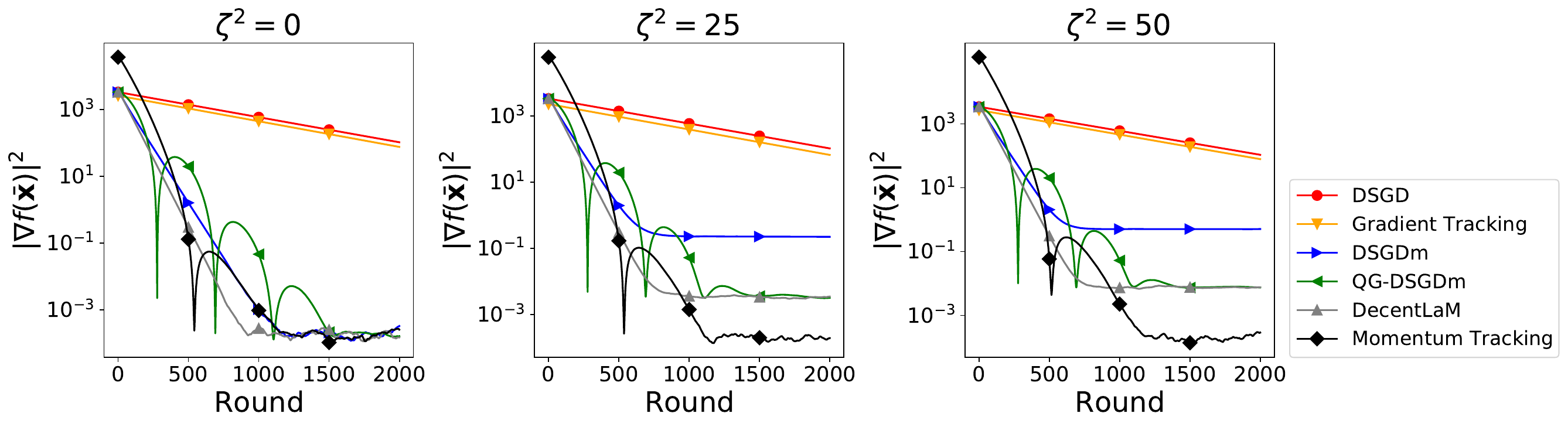}
\vskip -0.1 in
\caption{Comparison of the convergence in the initial training phase in the synthetic experiments.}
\label{fig:rate_begining}
\vskip -0.1 in
\end{figure}

\begin{figure}[!h]
\centering
\vskip -0.1 in
\includegraphics[width=0.95 \columnwidth]{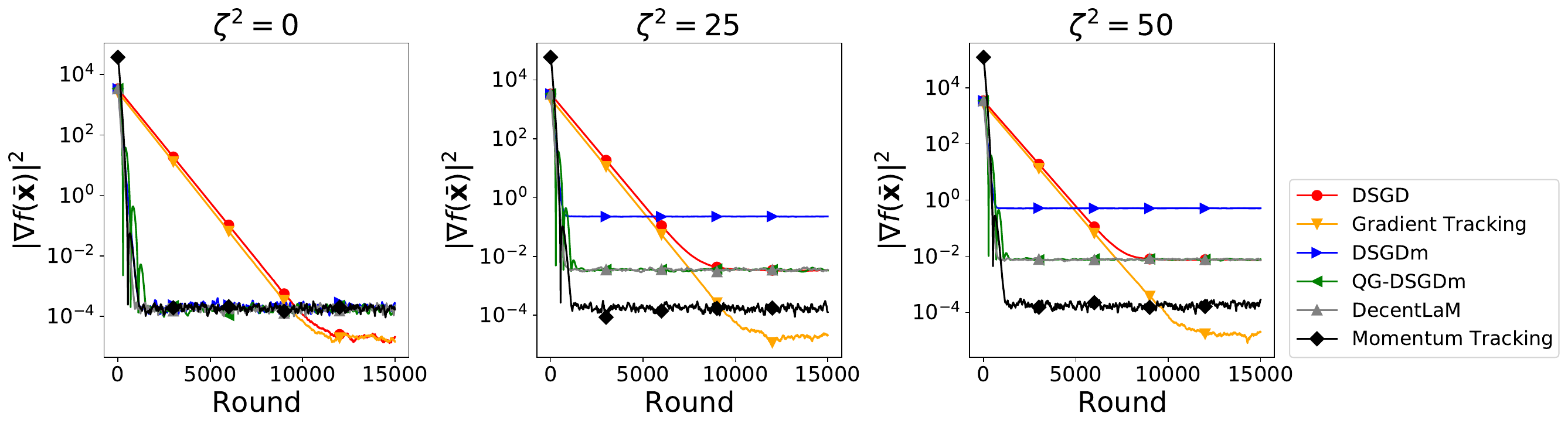}
\vskip -0.1 in
\caption{Comparison of the convergence in the synthetic experiments.}
\label{fig:rate}
\vskip -0.1 in
\end{figure}

\newpage

\section{Proof of Theorem \ref{theorem:convergence_rate}}
\label{sec:proof}

\subsection{Technical Lemma}
\begin{lemma}
For any $\vx, \vy \in \mathbb{R}^d$, $\gamma>0$, it holds that
\begin{align}
\label{eq:triangle}
    \| \vx + \vy \|^2 \leq (1+\gamma) \| \vx \|^2 + (1 + \gamma^{-1}) \| \vy \|^2.
\end{align}
\end{lemma}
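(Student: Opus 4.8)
The plan is to expand the squared norm and then control the resulting cross term with a weighted version of Young's inequality. First I would write, using bilinearity of the inner product on $\mathbb{R}^d$,
\begin{align*}
    \| \vx + \vy \|^2 = \| \vx \|^2 + 2 \langle \vx, \vy \rangle + \| \vy \|^2.
\end{align*}
The only term that is not already in the desired form is the cross term $2 \langle \vx, \vy \rangle$, so the entire argument reduces to bounding it by $\gamma \| \vx \|^2 + \gamma^{-1} \| \vy \|^2$.

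To get this bound I would first apply Cauchy--Schwarz, $\langle \vx, \vy \rangle \leq \| \vx \| \, \| \vy \|$, and then invoke the elementary scalar inequality $2 a b \leq \gamma a^2 + \gamma^{-1} b^2$ with $a = \| \vx \|$ and $b = \| \vy \|$. That scalar inequality is nothing more than the expansion of $(\sqrt{\gamma}\, a - \gamma^{-1/2} b)^2 \geq 0$, which holds for every $\gamma > 0$, so substituting back yields $2 \langle \vx, \vy \rangle \leq \gamma \| \vx \|^2 + \gamma^{-1} \| \vy \|^2$.

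Combining the two steps gives
\begin{align*}
    \| \vx + \vy \|^2 \leq \| \vx \|^2 + \gamma \| \vx \|^2 + \gamma^{-1} \| \vy \|^2 + \| \vy \|^2 = (1 + \gamma) \| \vx \|^2 + (1 + \gamma^{-1}) \| \vy \|^2,
\end{align*}
which is exactly the claim. I do not anticipate any real obstacle here: this is a standard relaxed (or weighted) triangle inequality that is used repeatedly in decentralized optimization analyses to decouple jointly arising error terms. The single modeling choice is the free parameter $\gamma$, which is deliberately left unspecified so that it can be tuned later in the convergence proof to balance competing error contributions.
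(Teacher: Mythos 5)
Your proof is correct and complete: expanding the square and bounding the cross term via Cauchy--Schwarz together with the scalar Young inequality $2ab \leq \gamma a^2 + \gamma^{-1} b^2$ is exactly the standard argument. The paper states this as a technical lemma without proof, so there is nothing to diverge from; your derivation supplies precisely the omitted justification.
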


\begin{lemma}
For any $\va_1, \cdots, \va_N$, it holds that
\begin{align}
\label{eq:sum}
    \left\| \sum_{i=1}^{N} \va_i \right\|^2 \leq N \sum_{i=1}^N \| \va_i \|^2.
\end{align}
\end{lemma}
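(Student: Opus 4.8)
The plan is to derive this inequality directly from the Cauchy--Schwarz inequality, which is the standard route and requires nothing specific to the decentralized-learning setting. First I would insert the constant weights $1$ into each summand, writing $\sum_{i=1}^N \va_i = \sum_{i=1}^N 1\cdot\va_i$, and then apply the triangle inequality for the norm followed by the scalar Cauchy--Schwarz inequality to obtain
\begin{align*}
\left\|\sum_{i=1}^N \va_i\right\|^2 \le \left(\sum_{i=1}^N \|\va_i\|\right)^2 \le \left(\sum_{i=1}^N 1^2\right)\left(\sum_{i=1}^N \|\va_i\|^2\right) = N\sum_{i=1}^N \|\va_i\|^2,
\end{align*}
which is precisely the claim. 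Here the first step uses that squaring preserves the order of nonnegative quantities, and the second is Cauchy--Schwarz applied to the scalars $1$ and $\|\va_i\|$.

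An equally clean alternative is to invoke the convexity of $\vz\mapsto\|\vz\|^2$: Jensen's inequality with uniform weights $1/N$ gives $\|\frac{1}{N}\sum_i \va_i\|^2 \le \frac{1}{N}\sum_i\|\va_i\|^2$, and multiplying through by $N^2$ recovers the statement. A third, fully self-contained route avoids both named inequalities by expanding $\|\sum_i \va_i\|^2 = \sum_i\|\va_i\|^2 + \sum_{i\ne j}\langle\va_i,\va_j\rangle$ and bounding each cross term via $\langle\va_i,\va_j\rangle \le \tfrac{1}{2}(\|\va_i\|^2+\|\va_j\|^2)$; summing over the $N(N-1)$ off-diagonal pairs contributes $(N-1)\sum_i\|\va_i\|^2$, which combines with the diagonal $\sum_i\|\va_i\|^2$ to give $N\sum_i\|\va_i\|^2$.

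Since this is a textbook inequality, there is no genuine obstacle. The only point requiring a moment of care arises in the expansion route, namely the bookkeeping of cross terms: one must verify that each index appears in exactly $N-1$ off-diagonal pairs, so that the cross terms sum to $(N-1)\sum_i\|\va_i\|^2$ rather than to some other multiple. I would most likely present the one-line Cauchy--Schwarz derivation, since it is the shortest and sidesteps this counting entirely.
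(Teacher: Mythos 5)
Your proposal is correct: all three routes (Cauchy--Schwarz on the scalars $1$ and $\|\va_i\|$, Jensen applied to the convex map $\vz \mapsto \|\vz\|^2$, or direct expansion with $\langle \va_i, \va_j\rangle \le \tfrac{1}{2}(\|\va_i\|^2 + \|\va_j\|^2)$ on the cross terms) are valid, and your bookkeeping of the $N(N-1)$ off-diagonal pairs is right. The paper states this as a standard technical lemma without providing any proof, so there is no authorial argument to compare against; your one-line Cauchy--Schwarz derivation is a perfectly adequate justification.
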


\begin{lemma}
For any $\vx, \vy \in \mathbb{R}^d$, $\gamma > 0$, it holds that
\begin{align}
\label{eq:inner_prod}
    2 \langle \vx, \vy \rangle \leq \gamma \| \vx \|^2 + \gamma^{-1} \| \vy \|^2.
\end{align}
\end{lemma}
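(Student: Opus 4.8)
The plan is to derive the inequality from the non-negativity of a single squared norm, i.e.\ a completion-of-squares argument. The key observation is that since $\gamma>0$, we may rescale $\vx$ by $\sqrt{\gamma}$ and $\vy$ by $\gamma^{-1/2}$ so that, upon expanding a squared norm, the cross term reproduces exactly $2\langle \vx,\vy\rangle$ while the two square terms become precisely $\gamma\|\vx\|^2$ and $\gamma^{-1}\|\vy\|^2$.

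Concretely, I would start from the trivially true bound
\begin{align*}
    0 \leq \left\| \sqrt{\gamma}\,\vx - \gamma^{-1/2}\,\vy \right\|^2.
\end{align*}
Expanding the right-hand side using the bilinearity of the inner product and the identity $\sqrt{\gamma}\cdot\gamma^{-1/2}=1$ gives
\begin{align*}
    0 \leq \gamma \| \vx \|^2 - 2 \langle \vx, \vy \rangle + \gamma^{-1} \| \vy \|^2.
\end{align*}
Rearranging this inequality immediately yields the claim $2\langle \vx,\vy\rangle \leq \gamma\|\vx\|^2 + \gamma^{-1}\|\vy\|^2$.

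There is no genuine obstacle here, as the statement is an elementary Young-type (weighted AM--GM) inequality; the only point requiring care is the choice of the rescaling factor $\sqrt{\gamma}$, which is dictated by the requirement that the cross term emerge with coefficient exactly $-2$ and that the weights on $\|\vx\|^2$ and $\|\vy\|^2$ come out as $\gamma$ and $\gamma^{-1}$ respectively. An equivalent route would be to apply the scalar inequality $2ab \leq \gamma a^2 + \gamma^{-1} b^2$ (itself a rearrangement of $(\sqrt{\gamma}\,a - \gamma^{-1/2}\,b)^2 \geq 0$) with $a=\|\vx\|$, $b=\|\vy\|$ and then invoke Cauchy--Schwarz, $\langle \vx,\vy\rangle \leq \|\vx\|\,\|\vy\|$; the direct squared-norm argument above is cleaner since it avoids the intermediate Cauchy--Schwarz step.
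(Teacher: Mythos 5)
Your proof is correct. The paper states this inequality as a technical lemma without providing any proof, and your completion-of-squares argument---expanding $0 \leq \| \sqrt{\gamma}\,\vx - \gamma^{-1/2}\,\vy \|^2$ and rearranging---is precisely the standard derivation of this weighted Young-type inequality, so there is nothing to compare against and no gap to flag.
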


\subsection{Momentum Tracking in Matrix Notation}
We define $\mU^{(r)}$, $\mX^{(r)}$, $\mC^{(r)}$, $\nabla F (\mX^{(r)} ; \xi^{(r)})$, and $\nabla f (\mX^{(r)})$ as follows:
\begin{gather*}
    \mU^{(r)} \coloneqq \left( \vu_1^{(r)}, \cdots, \vu_N^{(r)} \right), \;\;
    \mX^{(r)} \coloneqq \left( \vx_1^{(r)}, \cdots, \vx_N^{(r)} \right), \;\;
    \mC^{(r)} \coloneqq \left( \vc_1^{(r)}, \cdots, \vc_N^{(r)} \right), \\
    \nabla F (\mX^{(r)} ; \xi^{(r)}) \coloneqq \left( \nabla F_1 (\vx_1^{(r)} ; \xi_1^{(r)}), \cdots, \nabla F_N (\vx_N^{(r)} ; \xi_N^{(r)}) \right), \\
    \nabla f (\mX^{(r)}) \coloneqq \left( \nabla f_1 (\vx_1^{(r)}), \cdots, \nabla f_N (\vx_N^{(r)}) \right).
\end{gather*}
Then, the update rule of Momentum Tracking can then be rewritten as follows:
\begin{align}
    \mU^{(r+1)} &= \beta \mU^{(r)} + \nabla F (\mX^{(r)} ; \xi^{(r)}), \\
    \mX^{(r+1)} &= \mX^{(r)} \mW - \eta (\mU^{(r+1)} - \mC^{(r)}), \\
    \mC^{(r+1)} &= (\mC^{(r)} - \mU^{(r+1)}) \mW + \mU^{(r+1)},
\end{align}
where $\mU^{(0)}$ and $\mC^{(0)}$ are initialized as follows:
\begin{align*}
    \mU^{(0)} &= \frac{1}{1-\beta} (\nabla F (\mX^{(0)} ; \xi^{(0)}) - \frac{1}{N} \nabla F (\mX^{(0)} ; \xi^{(0)}) \mathbf{1}\mathbf{1}^\top ) , \\
    \mC^{(0)} &= \frac{1}{1-\beta} (\nabla F (\mX^{(0)} ; \xi^{(0)}) - \frac{1}{N} \nabla F (\mX^{(0)} ; \xi^{(0)}) \mathbf{1}\mathbf{1}^\top ) .
\end{align*}

\subsection{Additional Notation}
We define the update rules of $\vd_i$ and $\ve_i$ as follows:
\begin{align}
    \vd_i^{(r+1)} &= \beta \vd_i^{(r)} + \nabla f_i(\bar{\vx}^{(r)}), \\
    \ve_i^{(r+1)} &= \beta \ve_i^{(r)} + \nabla f(\bar{\vx}^{(r)}),
\end{align}
where $\vd_i^{(0)} = \frac{1}{1-\beta} (\nabla f_i (\bar{\vx}^{(0)}) - \nabla f (\bar{\vx}^{(0)}))$ and $\ve_i^{(0)} = \mathbf{0}$.
Note that it holds that $\bar{\vd}^{(r)} = \bar{\ve}^{(r)}$ for any round $r \geq 0$.
Then, we define $\mD$ and $\mE$ as follows:
\begin{align*}
    \mD^{(r)} \coloneqq \left( \vd_1^{(r)}, \cdots, \vd_N^{(r)} \right), \;\;
    \mE^{(r)} \coloneqq \left( \ve_1^{(r)}, \cdots, \ve_N^{(r)} \right). \;\;
\end{align*}
The update rules of $\mD$ and $\mE$ can be written as follows:
\begin{align}
    \mD^{(r+1)} &= \beta \mD^{(r)} + \nabla f(\bar{\mX}^{(r)}), \\
    \mE^{(r+1)} &= \beta \mE^{(r)} + \frac{1}{N} \nabla f(\bar{\mX}^{(r)}) \mathbf{1}\mathbf{1}^\top,
\end{align}
where $\mD^{(0)}$ and $\mE^{(0)}$ are initialized as follows:
\begin{align*}
    \mD^{(0)} &= \frac{1}{1-\beta} (\nabla f (\bar{\mX}^{(0)}) - \frac{1}{N} \nabla f (\bar{\mX}^{(0)}) \mathbf{1} \mathbf{1}^\top ), \\
    \mE^{(0)} &= \mathbf{0}.
\end{align*}
Note that $\vd_i$, $\ve_i$, $\mD$, and $\mE$ are the only variables used in the proof that do not need to be computed in practice in Alg. \ref{alg:momentum_trackng}.
We define $\Xi$, $\mathcal{E}$, and $\mathcal{D}$ as follows:
\begin{align*}
    \Xi^{(r)} &\coloneqq \frac{1}{N} \mathbb{E} \left\| \mX^{(r)} - \bar{\mX}^{(r)} \right\|^2_F, \\ 
    \mathcal{E}^{(r)} &\coloneqq \frac{1}{N} \mathbb{E} \left\| \mD^{(r+1)} - \mC^{(r)} - \mE^{(r+1)} \right\|^2_F, \\
    \mathcal{D}^{(r)} &\coloneqq \frac{1}{N} \mathbb{E} \left\| \mD^{(r+1)} - \mD^{(r)} - \mE^{(r+1)} + \mE^{(r)} \right\|^2_F.
\end{align*}

Inspired by \citet{yu2019linear}, we define $\bar{\vz}$ as follows:
\begin{align*}
    \bar{\vz}^{(r)} \coloneqq 
    \begin{cases}
    \bar{\vx}^{(r)}, & \text{if} \;\; r=0 \\
    \frac{1}{1-\beta} \bar{\vx}^{(r)} - \frac{\beta}{1-\beta} \bar{\vx}^{(r-1)}, & \text{otherwise}
    \end{cases}.
\end{align*}
In the following, we define $\pm a \coloneqq a - a = 0$ for any $a$
and $\bar{a} \coloneqq \frac{1}{N} \sum_{i=1}^N a_i$ for any $a_1, \cdots, a_N$.
Then, $\mathbb{E} [\cdot]$ denotes the expectation over all randomness that occurs during training (i.e., $\{ \xi_i^{(r)} \}_{i, r}$),
and $\mathbb{E}_r [\cdot]$ denotes the expectation over the randomness that occurs at round $r$ (i.e., $\{ \xi_i^{(r)} \}_{i}$).

\subsection{Proof Sketch}
In this section, we briefly summarize our proof technique. Our proof is based on the analysis of DSGD \cite{koloskova2020unified} that uses the upper bound of the consensus error $\Xi$. 
We extend their technique to deal with data heterogeneity by decomposing the inequality about the consensus error $\Xi$ into
(i) the inequality of the error between global and corrected local momentum $\mathcal{E}$
and (ii) the inequality of the error between update values of global and uncorrected local momentum $\mathcal{D}$.
While bounding the latter error is rather straightforward, bounding the former error requires our momentum correction term, 
which makes the error recursion contractive; otherwise, the error between global and local momentum results in the heterogeneity term in the final convergence error.
In the following, we show the proof sketch and explain in more detail how to bound $\Xi$ from above without using $\zeta^2$.

We derive the inequality about the consensus error $\Xi$ as follows (See Lemma \ref{lemma:recursion_for_consensus_distance}):
\begin{align*}
    \Xi^{(r+1)} 
    \leq \left(1 - \frac{p}{2}\right) \Xi^{(r)} 
    + \frac{9}{p} \eta^2 \mathcal{E}^{(r)}
    + \frac{9}{N p} \eta^2 \sum_{i=1}^{N} \left( \mathbb{E} \left\|  \mathbf{u}_i^{(r+1)} - \mathbf{d}_i^{(r+1)} \right\|^2 
    +  \mathbb{E} \left\| \mathbf{e}_i^{(r+1)} \right\|^2 \right)
\end{align*}
$\mathcal{E}$ represents the the error between global momentum $\mathbf{e}_i (= \bar{\mathbf{e}})$ and corrected local momentum $(\mathbf{d}_i - \mathbf{c}_i$).
Thus, intuitively, if we remove the tracking term $\mathbf{c}_i$ from Momentum Tracking, 
$\mathcal{E}$ becomes $\frac{1}{N} \sum_i \mathbb{E} \| \mathbf{d}_i  - \mathbf{e}_i \|^2$, which causes the data heterogeneity $\zeta^2$ to appear in the upper bound of $\Xi$.
In the following, we explain how to eliminate $\mathcal{E}^{(r)}$ and $\zeta^2$ from the upper bound of $\Xi^{(r+1)}$,
which is the most important component of our proof.

To bound $\mathcal{E}$ from above, we derive the following two inequalities
(see Lemmas \ref{lemma:recursion_for_e} and \ref{lemma:recursion_for_d}):
\begin{align*}
    \mathcal{E}^{(r+1)} 
    &\leq \left(1 - \frac{p}{2}\right) \mathcal{E}^{(r)} 
    + \frac{18 \beta^2}{p} \mathcal{D}^{(r)}
    + \frac{144 L^4}{p} \eta^2 \Xi^{(r)} 
    + C_1, \\
    \mathcal{D}^{(r+1)} 
    &\leq \frac{2 \beta^2}{1+\beta^2} \mathcal{D}^{(r)}  
    + \frac{32 L^4 \eta^2}{1 - \beta^2} \Xi^{(r)}
    + C_2,
\end{align*}
where $C_1$ and $C_2$ are variables independent of $\Xi$, $\mathcal{D}$, $\mathcal{E}$, and $\zeta^2$.
Here, the most important benefit to adding the tracking term $\mathbf{c}_i$ is that the coefficient of $\mathcal{E}^{(r)}$ becomes less than 1. (i.e., $(1 - \frac{p}{2}) < 1$).
Roughly speaking, the above two inequalities imply that $\mathcal{E}$ and $\mathcal{D}$ become gradually smaller
because $(1 - \frac{p}{2}) < 1$ and $\frac{2 \beta^2}{1 + \beta^2} < 1$ hold for any $\beta \in [0,1)$.

Next, we derive a new inequality by combining the above three inequalities.
We define an auxiliary error term $\Theta$ as follows:
\begin{align*}
    \Theta^{(r)} \coloneqq \Xi^{(r)} +  \frac{36}{p^2} \eta^2 \mathcal{E}^{(r)} + \frac{A \beta^2}{p^3} \eta^2 \mathcal{D}^{(r)},
\end{align*}
where $A > 0$ is defined in Lemma \ref{lemma:recusion_for_three_terms}.
Then, by combining the above three inequalities, we obtain the following (see Lemma \ref{lemma:recusion_for_three_terms}):
\begin{align*}
    \Theta^{(r+1)} 
    &\leq \left(1 - \frac{p}{t}\right) \Theta^{(r)} 
    + C_3,
\end{align*}
where $C_3$ is a variable independent of $\Xi$, $\mathcal{D}$, $\mathcal{E}$ and $\zeta^2$, and $t \geq 4$ is defined in Lemma \ref{lemma:recusion_for_three_terms}.

Using $\Xi \leq \Theta$ and applying the above inequality recursively, we obtain (see Lemma \ref{lemma:simplified_recursion})
\begin{align*}
    \Xi^{(r+1)} \leq \Theta^{(r+1)}
    \leq\left(1 - \frac{p}{t}\right)^{r+1} \Theta^{(0)}
    + C_4,
\end{align*}
where $C_4$ is a variable independent of $\Xi$, $\mathcal{D}$, $\mathcal{E}$, and $\zeta^2$.

Using $(1 - \frac{p}{t}) < 1$, it holds that $\sum_{r=0}^R (1 - \frac{p}{t})^{r+1} \Theta^{(0)} \leq \frac{t}{p} \Theta^{(0)}$.
Finally, using this inequality and the fact that $C_4$ and $\Theta^{(0)}$ are independent of $\zeta^2$ due to the assumption of initial values,
we can eliminate $\mathcal{E}$ from the upper bound of $\Xi$ and derive the upper bound of $\Xi$ that does not contain $\zeta^2$ as follows (see Lemma \ref{lemma:sum_of_consensus}):
\begin{align*}
    \frac{4 L^2}{R+1} \sum_{r=0}^{R} \Xi^{(r)}
    &\leq \frac{1}{2 (R+1)} \sum_{k=0}^{R} \left\| \nabla f(\bar{\mathbf{x}}^{(k)}) \right\|^2 
    + \frac{40 L^2 t}{(1-\beta)^3 p^2} \left( 10 + \frac{29}{p} + \frac{864}{p^2} \right) \sigma^2 \eta^2.
\end{align*}
These are the essential techniques to bound $\sum_{r=0}^R \Xi^{(r)}$ from above without using $\zeta^2$ and make the convergence rate independent of $\zeta^2$.

\newpage

\subsection{Useful Lemma}
\begin{lemma}
\label{lemma:sum_c}
For any round $r \geq 0$, it holds that $\bar{\vc}^{(r)}=\mathbf{0}$.
\end{lemma}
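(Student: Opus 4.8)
The plan is to prove the claim by induction on the round $r$, working in the matrix notation introduced above, where the node-average can be written compactly as $\bar{\vc}^{(r)} = \frac{1}{N}\mC^{(r)}\mathbf{1}$. The statement $\bar{\vc}^{(r)} = \mathbf{0}$ is therefore equivalent to $\mC^{(r)}\mathbf{1} = \mathbf{0}$, and the whole argument reduces to showing that (i) the initialization makes this hold at $r=0$ and (ii) the update rule for $\mC$ preserves it.

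For the base case, I would substitute the given initialization $\mC^{(0)} = \frac{1}{1-\beta}\bigl(\nabla F(\mX^{(0)};\xi^{(0)}) - \frac{1}{N}\nabla F(\mX^{(0)};\xi^{(0)})\mathbf{1}\mathbf{1}^\top\bigr)$ and right-multiply by $\mathbf{1}$. Using $\mathbf{1}^\top\mathbf{1} = N$, the second term collapses to $\nabla F(\mX^{(0)};\xi^{(0)})\mathbf{1}$, which exactly cancels the first term, giving $\mC^{(0)}\mathbf{1} = \mathbf{0}$. This is the point where the specific form of the initialization is used; it is chosen precisely so that the correction variable starts out mean-zero.

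For the inductive step, assume $\mC^{(r)}\mathbf{1} = \mathbf{0}$ and apply the update $\mC^{(r+1)} = (\mC^{(r)} - \mU^{(r+1)})\mW + \mU^{(r+1)}$. Right-multiplying by $\mathbf{1}$ and invoking the doubly-stochastic property $\mW\mathbf{1} = \mathbf{1}$ yields
\begin{align*}
    \mC^{(r+1)}\mathbf{1}
    = (\mC^{(r)} - \mU^{(r+1)})\mathbf{1} + \mU^{(r+1)}\mathbf{1}
    = \mC^{(r)}\mathbf{1},
\end{align*}
so the $\mU^{(r+1)}$ contributions cancel and the node-average of $\mC$ is invariant from round to round. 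Combined with the induction hypothesis, this gives $\mC^{(r+1)}\mathbf{1} = \mathbf{0}$, completing the induction and hence dividing by $N$ yields $\bar{\vc}^{(r)} = \mathbf{0}$ for all $r \ge 0$.

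There is no real obstacle here: the lemma is essentially a bookkeeping statement that the ``tracking'' update is conservative in its mean. The only two ingredients that must be identified are the telescoping cancellation of $\mU^{(r+1)}$ in the $\mC$-update and the double stochasticity $\mW\mathbf{1}=\mathbf{1}$ of the mixing matrix; once these are noted, both the base case and the inductive step are immediate. This mean-zero property of $\vc_i$ is what later justifies treating $\vc_i$ as a correction that recenters the local momentum $\vu_i$ toward the global momentum without perturbing the average dynamics.
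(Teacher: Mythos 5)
Your proof is correct and is essentially the same argument as the paper's: the paper writes the computation with explicit double sums and uses $\sum_i W_{ij}=1$ where you right-multiply the matrix form by $\mathbf{1}$ and use $\mW\mathbf{1}=\mathbf{1}$, but the two ingredients (the mixing-matrix stochasticity and the cancellation of the $\mU^{(r+1)}$ terms, plus the mean-zero initialization of $\mC^{(0)}$) are identical.
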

\begin{proof}
For any round $r\geq 0$, we have
\begin{align*}
    \sum_{i=1}^{N} \vc_i^{(r+1)}
    &= \sum_{i=1}^{N} \sum_{j=1}^{N} W_{ij} (\vc_j^{(r)} - \vu_j^{(r+1)}) + \sum_{i=1}^{N} \vu_i^{(r+1)} \\
    &= \sum_{j=1}^{N} (\vc_j^{(r)} - \vu_j^{(r+1)}) \sum_{i=1}^{N} W_{ij} + \sum_{i=1}^{N} \vu_i^{(r+1)}.
\end{align*}
Because $\mW$ is a mixing matrix, we obtain
\begin{align*}
    \sum_{i=1}^{N} \vc_i^{(r+1)}
    &= \sum_{j=1}^{N} \vc_j^{(r)}.
\end{align*}
Since we have
\begin{align*}
    \sum_{i=1}^N \vc_i^{(0)} 
    &= \frac{1}{1-\beta} \sum_{i=1}^N \left( \nabla F_i (\vx_i^{(0)} ; \xi_i^{(0)}) - \frac{1}{N} \sum_{j=1}^N \nabla F_j (\vx_j^{(0)} ; \xi_j^{(0)}) \right)
    = \mathbf{0},
\end{align*}
we obtain the statement.
\end{proof}

\begin{lemma}
\label{lemma:average_property}
For any round $r\geq 0$, it holds that
\begin{align*}
    \bar{\vx}^{(r+1)} = \bar{\vx}^{(r)} - \eta \bar{\vu}^{(r+1)}.
\end{align*}
\end{lemma}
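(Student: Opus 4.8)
The plan is to obtain this identity by taking the node-average of the model-parameter update rule Eq.~(\ref{eq:mt_2}) and then simplifying the two resulting contributions separately. I would start from $\bar{\vx}^{(r+1)} = \frac{1}{N}\sum_{i=1}^N \vx_i^{(r+1)}$, substitute Eq.~(\ref{eq:mt_2}), and split the right-hand side into a gossip term $\frac{1}{N}\sum_{i=1}^N \sum_{j\in\mathcal{N}_i^{+}} W_{ij}\vx_j^{(r)}$ and a correction term $-\eta\frac{1}{N}\sum_{i=1}^N(\vu_i^{(r+1)} - \vc_i^{(r)})$.

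For the gossip term I would exchange the order of summation and collect the total weight acting on each $\vx_j^{(r)}$, obtaining $\frac{1}{N}\sum_{j=1}^N \big(\sum_{i=1}^N W_{ij}\big)\vx_j^{(r)}$. Since $\mW$ is a mixing matrix, its columns sum to one, i.e. $\sum_{i=1}^N W_{ij}=1$, so the gossip term collapses to $\frac{1}{N}\sum_{j=1}^N \vx_j^{(r)} = \bar{\vx}^{(r)}$. Here I also use that $W_{ij}=0$ whenever $j\notin\mathcal{N}_i^{+}$, so summing over neighbors in $\mathcal{N}_i^{+}$ is the same as summing over all indices $j$.

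For the correction term, linearity of the average gives $-\eta\big(\bar{\vu}^{(r+1)} - \bar{\vc}^{(r)}\big)$, and I would then invoke Lemma~\ref{lemma:sum_c}, which asserts $\bar{\vc}^{(r)}=\mathbf{0}$, to discard the second piece. Combining the two simplifications immediately yields $\bar{\vx}^{(r+1)} = \bar{\vx}^{(r)} - \eta\bar{\vu}^{(r+1)}$, as claimed.

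There is essentially no real obstacle here: the lemma is a direct consequence of the column-stochasticity of $\mW$ together with the already-established fact $\bar{\vc}^{(r)}=\mathbf{0}$. The only point requiring a little care is identifying the correct stochasticity property: because Eq.~(\ref{eq:mt_2}) is summed over the \emph{first} index $i$, it is the column sums $\sum_{i=1}^N W_{ij}=1$ (equivalently $\mW^\top\mathbf{1}=\mathbf{1}$) that must be used rather than the row sums, though for a symmetric mixing matrix the two coincide.
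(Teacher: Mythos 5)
Your argument is correct and follows essentially the same route as the paper: average Eq.~(\ref{eq:mt_2}), swap the order of summation to use $\sum_{i} W_{ij}=1$ for the gossip term, and invoke Lemma~\ref{lemma:sum_c} to eliminate $\bar{\vc}^{(r)}$. Your remark about needing the column sums (rather than the row sums) is a fair point of care, and it matches the paper's appeal to $\mW$ being a mixing matrix.
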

\begin{proof}
We have
\begin{align*}
    \bar{\vx}^{(r+1)} 
    &= \frac{1}{N} \sum_{i=1}^{N} \sum_{j=1}^{N} W_{ij} \vx_j^{(r)} - \eta (\bar{\vu}^{(r+1)} - \bar{\vc}^{(r)}) \\
    &= \frac{1}{N} \sum_{j=1}^{N} \vx_j^{(r)} \sum_{i=1}^{N} W_{ij}  - \eta (\bar{\vu}^{(r+1)} - \bar{\vc}^{(r)}).
\end{align*}
The fact that $\mW$ is a mixing matrix gives us 
\begin{align*}
    \bar{\vx}^{(r+1)} 
    &= \bar{\vx}^{(r)} - \eta (\bar{\vu}^{(r+1)} - \bar{\vc}^{(r)}).
\end{align*}
Then, using Lemma \ref{lemma:sum_c}, we get the statement.
\end{proof}

\begin{lemma}
\label{lemma:z_update}
For any round $r\geq 0$, it holds that
\begin{align*}
    \bar{\vz}^{(r+1)} - \bar{\vz}^{(r)} = - \frac{\eta}{1-\beta} \frac{1}{N} \sum_{i=1}^N \nabla F_i (\vx^{(r)}_i ; \xi_i^{(r)}).
\end{align*}
\end{lemma}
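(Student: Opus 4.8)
The plan is to treat $\bar{\vz}$ as an auxiliary ``de-momentumized'' iterate and to verify the identity by direct computation, splitting into the boundary case $r=0$ and the generic case $r\geq 1$ since the definition of $\bar{\vz}^{(r)}$ branches there. The two ingredients I would use repeatedly are Lemma~\ref{lemma:average_property}, which gives $\bar{\vx}^{(r+1)}-\bar{\vx}^{(r)}=-\eta\bar{\vu}^{(r+1)}$, and the averaged form of the momentum update~(\ref{eq:mt_1}), namely $\bar{\vu}^{(r+1)}=\beta\bar{\vu}^{(r)}+\frac{1}{N}\sum_{i=1}^N\nabla F_i(\vx_i^{(r)};\xi_i^{(r)})$, obtained by averaging over $i$.

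For $r\geq 1$ I would substitute the definition $\bar{\vz}^{(r)}=\frac{1}{1-\beta}\bar{\vx}^{(r)}-\frac{\beta}{1-\beta}\bar{\vx}^{(r-1)}$ into $\bar{\vz}^{(r+1)}-\bar{\vz}^{(r)}$ and regroup to obtain $\frac{1}{1-\beta}(\bar{\vx}^{(r+1)}-\bar{\vx}^{(r)})-\frac{\beta}{1-\beta}(\bar{\vx}^{(r)}-\bar{\vx}^{(r-1)})$. Applying Lemma~\ref{lemma:average_property} to both consecutive differences turns this into $-\frac{\eta}{1-\beta}(\bar{\vu}^{(r+1)}-\beta\bar{\vu}^{(r)})$, and the averaged momentum recursion collapses the parenthesis to exactly $\frac{1}{N}\sum_{i=1}^N\nabla F_i(\vx_i^{(r)};\xi_i^{(r)})$, which is the claim.

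For $r=0$ the definition uses $\bar{\vz}^{(0)}=\bar{\vx}^{(0)}$, so $\bar{\vz}^{(1)}-\bar{\vz}^{(0)}=\frac{1}{1-\beta}\bar{\vx}^{(1)}-\frac{\beta}{1-\beta}\bar{\vx}^{(0)}-\bar{\vx}^{(0)}$; writing $\bar{\vx}^{(0)}=\frac{1-\beta}{1-\beta}\bar{\vx}^{(0)}$ and simplifying yields $\frac{1}{1-\beta}(\bar{\vx}^{(1)}-\bar{\vx}^{(0)})=-\frac{\eta}{1-\beta}\bar{\vu}^{(1)}$ via Lemma~\ref{lemma:average_property}. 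It then remains to identify $\bar{\vu}^{(1)}$ with the averaged gradient, and this is where the chosen initialization enters: averaging $\vu_i^{(0)}=\frac{1}{1-\beta}(\nabla F_i(\vx_i^{(0)};\xi_i^{(0)})-\frac{1}{N}\sum_j\nabla F_j(\vx_j^{(0)};\xi_j^{(0)}))$ over $i$ gives $\bar{\vu}^{(0)}=\mathbf{0}$, so $\bar{\vu}^{(1)}=\frac{1}{N}\sum_{i=1}^N\nabla F_i(\vx_i^{(0)};\xi_i^{(0)})$, completing this case.

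The statement is essentially algebraic, so I do not expect a genuine obstacle; the one point that needs care is the $r=0$ boundary case, where one must remember that the precise initialization of $\vu_i^{(0)}$ is exactly what forces $\bar{\vu}^{(0)}=\mathbf{0}$ and makes the telescoping clean. Conceptually, the content of the lemma is that $\bar{\vz}$ is constructed so that its one-step increment strips away the momentum-memory term $\beta\bar{\vu}^{(r)}$ and retains only the fresh averaged stochastic gradient, which is precisely why $\bar{\vz}$ is the convenient iterate for the subsequent descent analysis.
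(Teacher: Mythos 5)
Your proposal is correct and follows essentially the same route as the paper's proof: the same split into $r=0$ and $r\geq 1$, the same regrouping of $\bar{\vz}^{(r+1)}-\bar{\vz}^{(r)}$ into consecutive differences of $\bar{\vx}$, the same use of Lemma~\ref{lemma:average_property} together with the averaged momentum recursion, and the same observation that the initialization forces $\bar{\vu}^{(0)}=\mathbf{0}$ in the boundary case. No gaps.
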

\begin{proof}
For any $r \geq 1$, we have
\begin{align*}
    \bar{\vz}^{(r+1)} - \bar{\vz}^{(r)} 
    &= \frac{1}{1-\beta} (\bar{\vx}^{(r+1)}- \bar{\vx}^{(r)}) - \frac{\beta}{1-\beta} (\bar{\vx}^{(r)} - \bar{\vx}^{(r-1)}) \\
    &= - \frac{\eta}{1-\beta} \bar{\vu}^{(r+1)} + \frac{\eta \beta}{1-\beta} \bar{\vu}^{(r)} \\
    &= - \frac{\eta}{1-\beta} \frac{1}{N} \sum_{i=1}^N \nabla F_i(\vx_i^{(r)} ; \xi_i^{(r)}),
\end{align*}
where we use Lemma \ref{lemma:average_property}.
When $r=0$, we have
\begin{align*}
    \bar{\vz}^{(1)} - \bar{\vz}^{(0)} 
    &= \frac{1}{1-\beta} ( \bar{\vx}^{(1)} - \bar{\vx}^{(0)} ) \\
    &= - \frac{\eta}{1-\beta} \bar{\vu}^{(1)} \\
    &= - \frac{\eta}{1-\beta} \frac{1}{N} \sum_{i=1}^N \nabla F_i(\vx_i^{(0)} ; \xi_i^{(0)}),
\end{align*}
where we use $\bar{\vu}^{(0)} = \mathbf{0}$.
This concludes the proof.
\end{proof}

\begin{lemma}
\label{lemma:x_bar_minus_z_bar}
Suppose that Assumptions \ref{assumption:lower_bound}, \ref{assumption:mixing}, \ref{assumption:smoothness}, and \ref{assumption:stochasic_gradient} hold.
For any round $R \geq 0$, it holds that
\begin{align*}
    \sum_{r=0}^R \mathbb{E} \left\| \bar{\vx}^{(r)} - \bar{\vz}^{(r)} \right\|^2
    &\leq \frac{\beta^2 \eta^2 }{(1-\beta)^4} \sum_{r=0}^{R} \mathbb{E} \left\| \frac{1}{N} \sum_{i=1}^N \nabla f_i (\vx_i^{(r)}) \right\|^2
    + \frac{\beta^2 \sigma^2 \eta^2 }{N (1-\beta)^4} R.
\end{align*}
\end{lemma}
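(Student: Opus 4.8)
The plan is to reduce the left-hand side to a weighted sum of squared averaged momenta $\|\bar{\vu}^{(r)}\|^2$ and then control that sum by the squared full gradients plus a variance term.

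First I would establish the exact identity $\bar{\vx}^{(r)} - \bar{\vz}^{(r)} = \frac{\beta\eta}{1-\beta}\bar{\vu}^{(r)}$. For $r \geq 1$, substituting $\bar{\vz}^{(r)} = \frac{1}{1-\beta}\bar{\vx}^{(r)} - \frac{\beta}{1-\beta}\bar{\vx}^{(r-1)}$ gives $\bar{\vx}^{(r)} - \bar{\vz}^{(r)} = \frac{\beta}{1-\beta}(\bar{\vx}^{(r-1)} - \bar{\vx}^{(r)})$, and Lemma~\ref{lemma:average_property} turns $\bar{\vx}^{(r-1)} - \bar{\vx}^{(r)}$ into $\eta\bar{\vu}^{(r)}$; the case $r=0$ holds trivially since both sides vanish (as $\bar{\vu}^{(0)}=\mathbf{0}$). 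Squaring and summing reduces the claim to showing
\begin{align*}
    \sum_{r=0}^R \mathbb{E}\left\|\bar{\vu}^{(r)}\right\|^2 \leq \frac{1}{(1-\beta)^2}\sum_{r=0}^R\mathbb{E}\left\|\frac{1}{N}\sum_{i=1}^N\nabla f_i(\vx_i^{(r)})\right\|^2 + \frac{\sigma^2 R}{N(1-\beta)^2},
\end{align*}
after which multiplying through by $\frac{\beta^2\eta^2}{(1-\beta)^2}$ yields the target.

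Next I would unroll the averaged momentum. Since $\bar{\vu}^{(r+1)} = \beta\bar{\vu}^{(r)} + \frac{1}{N}\sum_i \nabla F_i(\vx_i^{(r)};\xi_i^{(r)})$ with $\bar{\vu}^{(0)}=\mathbf{0}$, writing $g^{(s)} \coloneqq \frac{1}{N}\sum_i \nabla F_i(\vx_i^{(s)};\xi_i^{(s)})$ gives $\bar{\vu}^{(r)} = \sum_{s=0}^{r-1}\beta^{r-1-s}g^{(s)}$. Because the geometric weights sum to at most $(1-\beta)^{-1}$, convexity of $\|\cdot\|^2$ yields the pathwise bound $\|\bar{\vu}^{(r)}\|^2 \leq \frac{1}{1-\beta}\sum_{s=0}^{r-1}\beta^{r-1-s}\|g^{(s)}\|^2$; summing over $r$ and exchanging the order of summation (the inner geometric series again being at most $(1-\beta)^{-1}$) gives $\sum_{r=0}^R\|\bar{\vu}^{(r)}\|^2 \leq (1-\beta)^{-2}\sum_{s=0}^{R-1}\|g^{(s)}\|^2$. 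The crucial point is to apply convexity to the \emph{stochastic} gradients $g^{(s)}$ directly and keep the estimate pathwise, rather than first splitting each $g^{(s)}$ into mean and noise; this avoids the troublesome cross terms $\langle \nabla f(\cdot), \text{noise}\rangle$ at distinct time indices, which do not vanish because later iterates depend on earlier noise.

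Finally I would take expectations and split into mean and noise only at the very end, termwise. Conditioning on the iterates at round $s$, the noise $g^{(s)} - \frac{1}{N}\sum_i\nabla f_i(\vx_i^{(s)})$ has conditional mean zero, and using independence of $\{\xi_i^{(s)}\}_i$ across nodes together with Assumption~\ref{assumption:stochasic_gradient} its conditional second moment is at most $\sigma^2/N$; hence $\mathbb{E}\|g^{(s)}\|^2 \leq \mathbb{E}\|\frac{1}{N}\sum_i\nabla f_i(\vx_i^{(s)})\|^2 + \sigma^2/N$. Substituting this into the pathwise bound, extending the nonnegative gradient sum from $R-1$ to $R$, and using $\sum_{s=0}^{R-1}\sigma^2/N = \sigma^2 R/N$ gives the reduced inequality, and multiplying by $\frac{\beta^2\eta^2}{(1-\beta)^2}$ produces the statement. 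The only genuinely delicate step is the cross-term issue described above; everything else is routine geometric-series bookkeeping.
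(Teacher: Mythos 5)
Your proposal is correct and follows essentially the same route as the paper's proof: reduce $\bar{\vx}^{(r)}-\bar{\vz}^{(r)}$ to $\tfrac{\beta\eta}{1-\beta}\bar{\vu}^{(r)}$ via Lemma~\ref{lemma:average_property}, unroll the momentum into a geometric sum of averaged stochastic gradients, apply Jensen's inequality with the normalized geometric weights, split each term into mean plus $\sigma^2/N$ noise only afterwards, and exchange the order of the double sum using $\sum_{r}\beta^{r-k-1}\le\tfrac{1}{1-\beta}$. The one presentational difference --- applying Jensen pathwise before taking expectations rather than inside the expectation --- is mathematically equivalent, and your explicit remark about avoiding cross terms between gradients and noise at distinct rounds is exactly the reason the paper's ordering of steps works.
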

\begin{proof}
From Lemma \ref{lemma:average_property}, we have
\begin{align*}
    \mathbb{E} \left\| \bar{\vx}^{(r)} - \bar{\vz}^{(r)} \right\|^2
    &= \mathbb{E} \left\| \frac{\eta \beta}{1-\beta} \bar{\vu}^{(r)} \right\|^2 \\
    &=  \frac{\beta^2 \eta^2 }{(1-\beta)^2} \mathbb{E} \left\| \sum_{k=0}^{r-1} \beta^{r-k-1} \frac{1}{N} \sum_{i=1}^N \nabla F_i (\vx_i^{(k)} ; \xi_i^{(k)}) \right\|^2,
\end{align*}
for any $r \geq 1$.
Defining $s^{(r)} \coloneqq \sum_{k=0}^{r} \beta^{r-k}$, we obtain
\begin{align*}
    &\mathbb{E} \left\| \bar{\vx}^{(r)} - \bar{\vz}^{(r)} \right\|^2 \\
    &= \frac{\beta^2 \eta^2 }{(1-\beta)^2} {s^{(r-1)}}^2 \mathbb{E} \left\| \sum_{k=0}^{r-1} \frac{\beta^{r-k-1}}{s^{(r-1)}} \frac{1}{N} \sum_{i=1}^N \nabla F_i (\vx_i^{(k)} ; \xi_i^{(k)}) \right\|^2 \\
    &\stackrel{(a)}{\leq} \frac{\beta^2 \eta^2 }{(1-\beta)^2} s^{(r-1)} \sum_{k=0}^{r-1} \beta^{r-k-1} \mathbb{E} \left\| \frac{1}{N} \sum_{i=1}^N \nabla F_i (\vx_i^{(k)} ; \xi_i^{(k)}) \right\|^2 \\
    &\stackrel{(\ref{eq:stochastic_gradient})}{\leq} \frac{\beta^2 \eta^2 }{(1-\beta)^2} s^{(r-1)} \sum_{k=0}^{r-1} \beta^{r-k-1} \mathbb{E} \left\| \frac{1}{N} \sum_{i=1}^N \nabla f_i (\vx_i^{(k)}) \right\|^2
    + \frac{\beta^2 \sigma^2 \eta^2 }{N (1-\beta)^2} s^{(r-1)} \sum_{k=0}^{r-1} \beta^{r-k-1},
\end{align*}
where we use Jensen's inequality in (a).
Using $s^{(r-1)} \leq \frac{1}{1-\beta}$, we obtain
\begin{align*}
    \mathbb{E} \left\| \bar{\vx}^{(r)} - \bar{\vz}^{(r)} \right\|^2
    &\leq \frac{\beta^2 \eta^2 }{(1-\beta)^3} \sum_{k=0}^{r-1} \beta^{r-k-1} \mathbb{E} \left\| \frac{1}{N} \sum_{i=1}^N \nabla f_i (\vx_i^{(k)}) \right\|^2
    + \frac{\beta^2 \sigma^2 \eta^2 }{N (1-\beta)^4}.
\end{align*}
Recursive addition yields
\begin{align*}
    \sum_{r=1}^R \mathbb{E} \left\| \bar{\vx}^{(r)} - \bar{\vz}^{(r)} \right\|^2
    &\leq \frac{\beta^2 \eta^2 }{(1-\beta)^3} \sum_{r=1}^R \sum_{k=0}^{r-1} \beta^{r-k-1} \mathbb{E} \left\| \frac{1}{N} \sum_{i=1}^N \nabla f_i (\vx_i^{(k)}) \right\|^2
    + \frac{\beta^2 \sigma^2 \eta^2 }{N (1-\beta)^4} R \\
    &= \frac{\beta^2 \eta^2 }{(1-\beta)^3} \sum_{k=0}^{R-1} \mathbb{E} \left\| \frac{1}{N} \sum_{i=1}^N \nabla f_i (\vx_i^{(k)}) \right\|^2 \sum_{r=k+1}^R \beta^{r-k-1}
    + \frac{\beta^2 \sigma^2 \eta^2 }{N (1-\beta)^4} R \\
    &\leq \frac{\beta^2 \eta^2 }{(1-\beta)^4} \sum_{k=0}^{R-1} \mathbb{E} \left\| \frac{1}{N} \sum_{i=1}^N \nabla f_i (\vx_i^{(k)}) \right\|^2
    + \frac{\beta^2 \sigma^2 \eta^2 }{N (1-\beta)^4} R,
\end{align*}
where we use $\sum_{r=k+1}^R \beta^{r-k-1} \leq \frac{1}{1-\beta}$ in the last inequality.
From the definition of $\bar{\vz}^{(0)}$, we have $\| \bar{\vx}^{(0)} - \bar{\vz}^{(0)} \|^2=0$. 
This yields the statement.
\end{proof}

\begin{lemma}
\label{lemma:x_bar_update}
Suppose that Assumptions \ref{assumption:lower_bound}, \ref{assumption:mixing}, \ref{assumption:smoothness}, and \ref{assumption:stochasic_gradient} hold.
For any round $r \geq 0$, it holds that
\begin{align*}
    \mathbb{E} \left\| \bar{\vx}^{(r+1)} - \bar{\vx}^{(r)} \right\|^2
    &\leq 
    4 L^2 \eta^2 \Xi^{(r)}
    + 2 \beta^2 \eta^2 \mathbb{E} \left\| \bar{\vu}^{(r)} \right\|^2 
    + 4 \eta^2 \mathbb{E} \left\| \nabla f (\bar{\vx}^{(r)}) \right\|^2
    + \frac{\sigma^2 \eta^2}{N}.
\end{align*}
\end{lemma}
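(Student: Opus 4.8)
The plan is to reduce the statement to a bound on $\mathbb{E}\|\bar{\vu}^{(r+1)}\|^2$ and then peel off the momentum and stochastic-noise contributions one at a time. First I would invoke Lemma~\ref{lemma:average_property}, which gives $\bar{\vx}^{(r+1)} - \bar{\vx}^{(r)} = -\eta\,\bar{\vu}^{(r+1)}$, so that $\mathbb{E}\|\bar{\vx}^{(r+1)} - \bar{\vx}^{(r)}\|^2 = \eta^2\,\mathbb{E}\|\bar{\vu}^{(r+1)}\|^2$. Averaging the momentum update~(\ref{eq:mt_1}) over the nodes yields $\bar{\vu}^{(r+1)} = \beta\,\bar{\vu}^{(r)} + \frac{1}{N}\sum_{i=1}^N \nabla F_i(\vx_i^{(r)};\xi_i^{(r)})$, so it suffices to bound the squared norm of this quantity by $4L^2\Xi^{(r)} + 2\beta^2\mathbb{E}\|\bar{\vu}^{(r)}\|^2 + 4\mathbb{E}\|\nabla f(\bar{\vx}^{(r)})\|^2 + \sigma^2/N$ and then restore the factor $\eta^2$.

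Next I would isolate the stochastic noise by a bias--variance (conditional-expectation) argument. Conditioning on all randomness through round $r-1$, the quantities $\bar{\vu}^{(r)}$ and each $\vx_i^{(r)}$ are deterministic, and $\mathbb{E}_r[\frac{1}{N}\sum_{i=1}^N \nabla F_i(\vx_i^{(r)};\xi_i^{(r)})] = \frac{1}{N}\sum_{i=1}^N \nabla f_i(\vx_i^{(r)})$. Splitting $\frac{1}{N}\sum_i \nabla F_i = \frac{1}{N}\sum_i \nabla f_i + (\text{zero-mean noise})$, the cross term vanishes in conditional expectation because $\beta\,\bar{\vu}^{(r)} + \frac{1}{N}\sum_i \nabla f_i(\vx_i^{(r)})$ is measurable with respect to the past, leaving $\mathbb{E}\|\bar{\vu}^{(r+1)}\|^2 \leq \mathbb{E}\|\beta\bar{\vu}^{(r)} + \frac{1}{N}\sum_i \nabla f_i(\vx_i^{(r)})\|^2 + \frac{1}{N^2}\sum_{i=1}^N \mathbb{E}\|\nabla F_i(\vx_i^{(r)};\xi_i^{(r)}) - \nabla f_i(\vx_i^{(r)})\|^2$. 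Using independence of the samples $\xi_i^{(r)}$ across nodes together with Assumption~\ref{assumption:stochasic_gradient} (Eq.~(\ref{eq:stochastic_gradient})), the variance term is at most $\sigma^2/N$, which supplies the last term of the lemma.

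Finally I would handle the remaining deterministic term $\mathbb{E}\|\beta\bar{\vu}^{(r)} + \frac{1}{N}\sum_i \nabla f_i(\vx_i^{(r)})\|^2$ by two applications of Eq.~(\ref{eq:triangle}) with $\gamma=1$, i.e. $\|\va+\vb\|^2\le 2\|\va\|^2+2\|\vb\|^2$. The first split separates $\beta\bar{\vu}^{(r)}$ from $\frac{1}{N}\sum_i \nabla f_i(\vx_i^{(r)})$, producing the coefficient $2\beta^2$ on $\mathbb{E}\|\bar{\vu}^{(r)}\|^2$ and a factor $2$ on the gradient average. I would then rewrite $\frac{1}{N}\sum_i \nabla f_i(\vx_i^{(r)}) = \nabla f(\bar{\vx}^{(r)}) + \frac{1}{N}\sum_i\bigl(\nabla f_i(\vx_i^{(r)}) - \nabla f_i(\bar{\vx}^{(r)})\bigr)$, using $\nabla f(\bar{\vx}^{(r)}) = \frac{1}{N}\sum_i \nabla f_i(\bar{\vx}^{(r)})$, and apply Eq.~(\ref{eq:triangle}) again; Jensen's inequality (Eq.~(\ref{eq:sum})) together with the $L$-smoothness of each $f_i$ (Eq.~(\ref{eq:smoothness})) then bounds the discrepancy term by $L^2\Xi^{(r)}$. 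Collecting the resulting $4L^2\Xi^{(r)}$, $2\beta^2\mathbb{E}\|\bar{\vu}^{(r)}\|^2$, and $4\mathbb{E}\|\nabla f(\bar{\vx}^{(r)})\|^2$ pieces yields the claim. The only delicate point is the bookkeeping of constants in the two $\|\va+\vb\|^2\le 2\|\va\|^2+2\|\vb\|^2$ splits so that exactly the coefficients $4$, $2$, and $4$ emerge; the noise step is otherwise routine once the measurability of the conditional mean is noted.
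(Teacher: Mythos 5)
Your proposal is correct and follows essentially the same route as the paper's proof: reduce to $\eta^2\mathbb{E}\|\bar{\vu}^{(r+1)}\|^2$ via Lemma~\ref{lemma:average_property}, peel off the stochastic noise to get $\sigma^2/N$, then apply Eq.~(\ref{eq:triangle}) with $\gamma=1$ twice together with Eq.~(\ref{eq:sum}) and Eq.~(\ref{eq:smoothness}) to obtain the coefficients $4L^2$, $2\beta^2$, and $4$. The only difference is cosmetic: you spell out the conditional-expectation/independence argument behind the variance step, which the paper compresses into a single invocation of Assumption~\ref{assumption:stochasic_gradient}.
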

\begin{proof}
From Lemma \ref{lemma:average_property}, we have
\begin{align*}
    \mathbb{E}_r \left\| \bar{\vx}^{(r+1)} - \bar{\vx}^{(r)} \right\|^2
    &= \eta^2 \mathbb{E}_r \left\| \beta \bar{\vu}^{(r)} + \frac{1}{N} \sum_{i=1}^N \nabla F_i (\vx_i^{(r)} ; \xi_i^{(r)}) \right\|^2 \\
    &\stackrel{(\ref{eq:stochastic_gradient})}{\leq} \eta^2 \left\| \beta \bar{\vu}^{(r)} + \frac{1}{N} \sum_{i=1}^N \nabla f_i (\vx_i^{(r)}) \right\|^2
    + \frac{\sigma^2 \eta^2}{N} \\
    &\stackrel{(\ref{eq:triangle})}{\leq} 2 \beta^2 \eta^2 \left\| \bar{\vu}^{(r)} \right\|^2 +  2 \eta^2 \underbrace{\left\| \frac{1}{N} \sum_{i=1}^N \nabla f_i (\vx_i^{(r)}) \right\|^2}_{T}
    + \frac{\sigma^2 \eta^2}{N}.
\end{align*}
Then, $T$ can be bounded from above as follows:
\begin{align*}
    T
    &= \left\| \frac{1}{N} \sum_{i=1}^N \nabla f_i (\vx_i^{(r)}) \pm \nabla f_i (\bar{\vx}^{(r)}) \right\|^2 \\
    &\stackrel{(\ref{eq:triangle})}{\leq} 2 \left\| \frac{1}{N} \sum_{i=1}^N \nabla f_i (\vx_i^{(r)}) - \nabla f_i (\bar{\vx}^{(r)}) \right\|^2 
    + 2 \left\| \nabla f (\bar{\vx}^{(r)}) \right\|^2 \\
    &\stackrel{(\ref{eq:sum})}{\leq} \frac{2}{N} \sum_{i=1}^N \left\| \nabla f_i (\vx_i^{(r)}) - \nabla f_i (\bar{\vx}^{(r)}) \right\|^2 
    + 2 \left\| \nabla f (\bar{\vx}^{(r)}) \right\|^2 \\
    &\stackrel{(\ref{eq:smoothness})}{\leq} \frac{2 L^2}{N} \sum_{i=1}^N \left\| \vx_i^{(r)} - \bar{\vx}^{(r)} \right\|^2 
    + 2 \left\| \nabla f (\bar{\vx}^{(r)}) \right\|^2.
\end{align*}
Then, we obtain the statement.
\end{proof}

\begin{lemma}
\label{lemma:e}
For any round $r \geq 0$, it holds that
\begin{align*}
    \mathbb{E} \left\| \bar{\ve}^{(r+1)} \right\|^2
    &\leq \frac{1}{1-\beta} \sum_{k=0}^{r} \beta^{r-k} \mathbb{E} \left\| \nabla f(\bar{\vx}^{(k)}) \right\|^2.
\end{align*}
\end{lemma}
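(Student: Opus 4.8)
The plan is to exploit the fact that the auxiliary variables $\ve_i$ all obey an identical deterministic recursion driven by the common term $\nabla f(\bar{\vx}^{(r)})$, so that $\bar{\ve}^{(r)}$ inherits the same simple recursion. Since $\ve_i^{(0)} = \mathbf{0}$ for every $i$, averaging the update $\ve_i^{(r+1)} = \beta \ve_i^{(r)} + \nabla f(\bar{\vx}^{(r)})$ over $i$ gives $\bar{\ve}^{(r+1)} = \beta \bar{\ve}^{(r)} + \nabla f(\bar{\vx}^{(r)})$ with $\bar{\ve}^{(0)} = \mathbf{0}$. First I would unroll this recursion to obtain the closed form
\begin{align*}
    \bar{\ve}^{(r+1)} = \sum_{k=0}^{r} \beta^{r-k} \nabla f(\bar{\vx}^{(k)}),
\end{align*}
which expresses $\bar{\ve}^{(r+1)}$ as a geometrically weighted sum of past full gradients evaluated at the average iterate.

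Next, following the same device used in the proof of Lemma~\ref{lemma:x_bar_minus_z_bar}, I would introduce $s^{(r)} \coloneqq \sum_{k=0}^{r} \beta^{r-k}$ and rewrite the sum as a convex combination by factoring out $s^{(r)}$, so that the coefficients $\beta^{r-k}/s^{(r)}$ are nonnegative and sum to one. Applying Jensen's inequality to the convex function $\|\cdot\|^2$ then yields
\begin{align*}
    \left\| \bar{\ve}^{(r+1)} \right\|^2
    = {s^{(r)}}^2 \left\| \sum_{k=0}^{r} \frac{\beta^{r-k}}{s^{(r)}} \nabla f(\bar{\vx}^{(k)}) \right\|^2
    \leq s^{(r)} \sum_{k=0}^{r} \beta^{r-k} \left\| \nabla f(\bar{\vx}^{(k)}) \right\|^2.
\end{align*}

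Finally, I would bound $s^{(r)} = \sum_{j=0}^{r} \beta^{j} = \frac{1-\beta^{r+1}}{1-\beta} \leq \frac{1}{1-\beta}$, substitute this into the previous display, and take expectations to conclude. I do not anticipate a genuine obstacle here: the argument is a routine geometric-weighting plus Jensen estimate, structurally identical to the momentum-unrolling already carried out in Lemma~\ref{lemma:x_bar_minus_z_bar}. The only point requiring a little care is confirming that $\bar{\ve}$ satisfies a clean scalar recursion with zero initialization — this is immediate because the driving term $\nabla f(\bar{\vx}^{(r)})$ does not depend on the node index $i$, so no cross-node heterogeneity enters the averaged quantity.
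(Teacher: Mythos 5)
Your proposal is correct and follows exactly the paper's argument: unroll the recursion for $\bar{\ve}$ from the zero initialization, factor out $s^{(r)} = \sum_{k=0}^{r}\beta^{r-k}$ to apply Jensen's inequality to $\|\cdot\|^2$, and bound $s^{(r)} \leq \frac{1}{1-\beta}$. No gaps.
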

\begin{proof}
We have
\begin{align*}
    \mathbb{E} \left\| \bar{\ve}^{(r+1)} \right\|^2
    &= \mathbb{E} \left\| \sum_{k=0}^{r} \beta^{r-k} \nabla f(\bar{\vx}^{(k)}) \right\|^2,
\end{align*}
where we use $\bar{\ve}^{(0)} = \mathbf{0}$.
Defining $s^{(r)} \coloneqq \sum_{k=0}^{r} \beta^{r-k}$, we obtain
\begin{align*}
    \mathbb{E} \left\| \bar{\ve}^{(r+1)} \right\|^2
    &= {s^{(r)}}^2 \mathbb{E} \left\| \sum_{k=0}^{r} \frac{\beta^{r-k}}{s^{(r)}} \nabla f(\bar{\vx}^{(k)}) \right\|^2 \\
    &\leq s^{(r)} \sum_{k=0}^{r} \beta^{r-k} \mathbb{E} \left\| \nabla f(\bar{\vx}^{(k)}) \right\|^2,
\end{align*}
where we use Jensen's inequality.
Using $s^{(r)} \leq \frac{1}{1-\beta}$, we obtain the statement.
\end{proof}

\begin{lemma}
\label{lemma:u_minus_d}
Suppose that Assumptions \ref{assumption:lower_bound}, \ref{assumption:mixing}, \ref{assumption:smoothness}, and \ref{assumption:stochasic_gradient} hold.
For any round $r \geq 0$, it holds that
\begin{align*}
    \frac{1}{N} \mathbb{E} \left\| \mD^{(r+1)} - \mU^{(r+1)} \right\|^2_F
    \leq \frac{L^2}{1-\beta} \sum_{k=0}^{r} \beta^{r-k} \Xi^{(k)} 
    + \frac{5 \sigma^2}{(1-\beta)^3}.
\end{align*}
\end{lemma}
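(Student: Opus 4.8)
The plan is to introduce the difference matrix $\mM^{(r)} \coloneqq \mD^{(r)} - \mU^{(r)}$ and exploit the fact that it obeys a clean one-step recursion. Subtracting the update rule of $\mU^{(r)}$ from that of $\mD^{(r)}$ gives $\mM^{(r+1)} = \beta \mM^{(r)} + \Delta^{(r)}$, where $\Delta^{(r)} \coloneqq \nabla f(\bar{\mX}^{(r)}) - \nabla F(\mX^{(r)} ; \xi^{(r)})$. I would then split $\Delta^{(r)}$ into a deterministic part and a zero-mean stochastic part, writing $\Delta^{(r)} = (\nabla f(\bar{\mX}^{(r)}) - \nabla f(\mX^{(r)})) + (\nabla f(\mX^{(r)}) - \nabla F(\mX^{(r)} ; \xi^{(r)}))$. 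By the $L$-smoothness in Assumption~\ref{assumption:smoothness}, the deterministic part is controlled column-wise by $L \| \bar{\vx}^{(r)} - \vx_i^{(r)} \|$, so $\frac{1}{N}\mathbb{E}\| \nabla f(\bar{\mX}^{(r)}) - \nabla f(\mX^{(r)}) \|_F^2 \leq L^2 \Xi^{(r)}$; by Assumption~\ref{assumption:stochasic_gradient} the noise part satisfies $\frac{1}{N}\mathbb{E}\| \nabla f(\mX^{(r)}) - \nabla F(\mX^{(r)};\xi^{(r)}) \|_F^2 \leq \sigma^2$.

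Next I would turn the recursion into a scalar inequality on $m^{(r)} \coloneqq \frac{1}{N}\mathbb{E}\| \mM^{(r)} \|_F^2$. Applying inequality~(\ref{eq:triangle}) to $\| \beta\mM^{(r)} + \Delta^{(r)} \|_F^2$ with the tuned choice $\gamma = \frac{1-\beta}{\beta}$ (valid for $\beta>0$; the case $\beta=0$ is immediate since then $\mM^{(r+1)}=\Delta^{(r)}$), so that $(1+\gamma)\beta^2 = \beta$ and $1+\gamma^{-1} = \frac{1}{1-\beta}$, yields $\mathbb{E}\| \mM^{(r+1)} \|_F^2 \leq \beta \, \mathbb{E}\| \mM^{(r)} \|_F^2 + \frac{1}{1-\beta}\mathbb{E}\| \Delta^{(r)} \|_F^2$. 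The key point is that the expectation of $\| \Delta^{(r)} \|_F^2$ separates cleanly into its deterministic and noise contributions: conditioned on the history up to round $r-1$, the matrix $\mX^{(r)}$ is determined while the noise increment has conditional mean zero, so the cross term vanishes and $\mathbb{E}\| \Delta^{(r)} \|_F^2 \leq N(L^2\Xi^{(r)} + \sigma^2)$. Dividing by $N$ gives the one-step recursion
\begin{align*}
    m^{(r+1)} \leq \beta m^{(r)} + \frac{L^2}{1-\beta}\Xi^{(r)} + \frac{\sigma^2}{1-\beta}.
\end{align*}

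Finally I would unroll this recursion as $m^{(r+1)} \leq \beta^{r+1} m^{(0)} + \sum_{k=0}^r \beta^{r-k}\left( \frac{L^2}{1-\beta}\Xi^{(k)} + \frac{\sigma^2}{1-\beta} \right)$ and bound the geometric sum $\sum_{k=0}^r\beta^{r-k} \leq \frac{1}{1-\beta}$, which already produces the desired $\frac{L^2}{1-\beta}\sum_{k=0}^r\beta^{r-k}\Xi^{(k)}$ term together with a $\frac{\sigma^2}{(1-\beta)^2}$ term. The remaining task is to bound the initial term $\beta^{r+1}m^{(0)} \leq m^{(0)}$. Here I would use that $\mD^{(0)} - \mU^{(0)} = \frac{1}{1-\beta}\Delta^{(0)}(\mI - \frac{1}{N}\mathbf{1}\mathbf{1}^\top)$, that right-multiplication by the orthogonal projection $\mI - \frac{1}{N}\mathbf{1}\mathbf{1}^\top$ does not increase the Frobenius norm, and crucially that all nodes are initialized identically, so $\mX^{(0)} = \bar{\mX}^{(0)}$ and the deterministic part of $\Delta^{(0)}$ vanishes; hence $m^{(0)} \leq \frac{\sigma^2}{(1-\beta)^2}$. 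Collecting the two $\sigma^2$ contributions gives $\frac{2\sigma^2}{(1-\beta)^2} \leq \frac{5\sigma^2}{(1-\beta)^3}$ (using $1-\beta < 1$), so the claimed bound holds comfortably within the stated constant.

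The main obstacle I anticipate is the stochastic bookkeeping: one must argue, via a conditional-expectation argument treating the history up to round $r-1$ as fixed, that the fresh noise increment is orthogonal in expectation to the history-measurable quantities $\mM^{(r)}$ and $\nabla f(\mX^{(r)})$, so that the cross terms drop out and only the variance $\sigma^2$ survives rather than a first-moment gradient term. The second delicate point is the initial condition: the prescribed initialization of $\vu_i$ and $\vc_i$ together with the identical initialization of the $\vx_i$ is exactly what removes any data-heterogeneity contribution at $r=0$, and verifying that the centered initial residual contributes only a $\sigma^2$ term (and no $\Xi$ or heterogeneity term) is what ultimately keeps the bound independent of $\zeta^2$.
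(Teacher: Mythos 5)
Your proof is correct and follows essentially the same route as the paper's: both reduce $\mD^{(r+1)}-\mU^{(r+1)}$ to a $\beta$-weighted geometric sum of the increments $\nabla f(\bar{\mX}^{(k)})-\nabla F(\mX^{(k)};\xi^{(k)})$, split each increment into a smoothness-controlled part bounded by $L^2\Xi^{(k)}$ and a conditionally zero-mean noise part bounded by $\sigma^2$, and use $\mX^{(0)}=\bar{\mX}^{(0)}$ to reduce the initial residual to a pure variance contribution. The only difference is bookkeeping: you apply the weighted Young inequality one step at a time and unroll a scalar recursion, whereas the paper unrolls the matrix recursion first and applies Jensen's inequality to the resulting convex combination; both yield the same $\frac{1}{1-\beta}$ factors and land comfortably within the stated constant $5$.
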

\begin{proof}
We have
\begin{align*}
    &\mathbb{E} \left\| \mD^{(r+1)} - \mU^{(r+1)} \right\|^2_F \\
    &= \mathbb{E} \left\| \sum_{k=0}^{r} \beta^{r-k} (\nabla f (\bar{\mX}^{(k)}) -  \nabla F (\mX^{(k)} ; \xi^{(k)}) ) + \beta^{r+1} (\mD^{(0)} - \mU^{(0)}) \right\|^2_F.
\end{align*}
Defining $s^{(r)} \coloneqq \sum_{k=0}^r \beta^{r-k}$, we obtain
\begin{align*}
    &\mathbb{E} \left\| \mD^{(r+1)} - \mU^{(r+1)} \right\|^2_F \\
    &= {s^{(r+1)}}^2 \mathbb{E} \left\| \sum_{k=0}^{r} \frac{\beta^{r-k}}{s^{(r+1)}} (\nabla f (\bar{\mX}^{(k)}) -  \nabla F (\mX^{(k)} ; \xi^{(k)}) ) + \frac{\beta^{r+1}}{s^{(r+1)}} (\mD^{(0)} - \mU^{(0)}) \right\|^2_F \\
    &\stackrel{(a)}{\leq} s^{(r+1)} \sum_{k=0}^{r} \beta^{r-k} \mathbb{E} \left\| \nabla f (\bar{\mX}^{(k)}) -  \nabla F (\mX^{(k)} ; \xi^{(k)}) \right\|^2_F
    + s^{(r+1)} \beta^{r+1} \mathbb{E} \left\| \mD^{(0)} - \mU^{(0)} \right\|^2_F \\
    &\stackrel{(\ref{eq:sum})}{\leq} s^{(r+1)} \sum_{k=0}^{r} \beta^{r-k} \mathbb{E} \left\| \nabla f (\bar{\mX}^{(k)}) -  \nabla F (\mX^{(k)} ; \xi^{(k)}) \right\|^2_F \\
    &\qquad + \frac{2 s^{(r+1)}}{ (1-\beta)^2 } \beta^{r+1} \mathbb{E} \left\| \nabla f (\bar{\mX}^{(0)}) - \nabla F (\mX^{(0)} ; \xi^{(0)}) \right\|^2_F \\
    &\qquad + \frac{2 s^{(r+1)}}{ (1-\beta)^2 } \beta^{r+1} \mathbb{E} \left\| \frac{1}{N} \nabla f (\bar{\mX}^{(0)}) \mathbf{1} \mathbf{1}^\top - \frac{1}{N} \nabla F (\mX^{(0)} ; \xi^{(0)}) \mathbf{1}\mathbf{1}^\top \right\|^2_F \\
    &\stackrel{(\ref{eq:stochastic_gradient})}{\leq} s^{(r+1)} \sum_{k=0}^{r} \beta^{r-k} \mathbb{E} \left\| \nabla f (\bar{\mX}^{(k)}) -  \nabla f (\mX^{(k)}) \right\|^2_F 
    + s^{(r+1)} \sum_{k=0}^{r} \beta^{r-k} N \sigma^2 
    + \frac{4 s^{(r+1)}}{ (1-\beta)^2 } \beta^{r+1} N \sigma^2,
\end{align*}
where we use Jensen's inequality for (a) and use $\mX^{(0)} = \bar{\mX}^{(0)}$ for the last inequality.
Then, using $s^{(r)} \leq \frac{1}{1-\beta}$, we obtain
\begin{align*}
    &\mathbb{E} \left\| \mD^{(r+1)} - \mU^{(r+1)} \right\|^2_F \\
    &\leq \frac{1}{1-\beta} \sum_{k=0}^{r} \beta^{r-k} \mathbb{E} \left\| \nabla f (\bar{\mX}^{(k)}) -  \nabla f (\mX^{(k)}) \right\|^2_F 
    + \frac{N \sigma^2}{(1-\beta)^2} 
    + \frac{4 N \sigma^2}{ (1-\beta)^3 } \beta^{r+1} \\
    &\stackrel{\beta \in [0,1)}{\leq} \frac{1}{1-\beta} \sum_{k=0}^{r} \beta^{r-k} \mathbb{E} \left\| \nabla f (\bar{\mX}^{(k)}) -  \nabla f (\mX^{(k)}) \right\|^2_F 
    + \frac{5 N \sigma^2}{(1-\beta)^3} \\
    &\stackrel{(\ref{eq:smoothness})}{\leq} \frac{L^2}{1-\beta} \sum_{k=0}^{r} \beta^{r-k} \mathbb{E} \left\| \bar{\mX}^{(k)} -  \mX^{(k)} \right\|^2_F 
    + \frac{5 N \sigma^2}{(1-\beta)^3}.
\end{align*}
This concludes the proof.
\end{proof}

\begin{lemma}
\label{lemma:u_bar_minus_d_bar}
Suppose that Assumptions \ref{assumption:lower_bound}, \ref{assumption:mixing}, \ref{assumption:smoothness}, and \ref{assumption:stochasic_gradient} hold.
For any round $r \geq 0$, it holds that
\begin{align*}
    \mathbb{E} \left\| \bar{\vu}^{(r+1)} - \bar{\vd}^{(r+1)} \right\|^2
    \leq \frac{L^2}{1-\beta} \sum_{k=0}^r \beta^{r-k} \Xi^{(k)} 
    + \frac{\sigma^2}{N (1-\beta)^2}
\end{align*}
\end{lemma}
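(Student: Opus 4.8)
The plan is to track the averaged discrepancy $\bar{\vu}^{(r)} - \bar{\vd}^{(r)}$ through its own recursion. Averaging the update rules for $\vu_i$ and $\vd_i$ over the nodes gives $\bar{\vu}^{(r+1)} = \beta \bar{\vu}^{(r)} + \frac{1}{N}\sum_{i=1}^N \nabla F_i(\vx_i^{(r)};\xi_i^{(r)})$ and $\bar{\vd}^{(r+1)} = \beta \bar{\vd}^{(r)} + \nabla f(\bar{\vx}^{(r)})$, so their difference obeys the clean linear recursion $\bar{\vu}^{(r+1)} - \bar{\vd}^{(r+1)} = \beta(\bar{\vu}^{(r)} - \bar{\vd}^{(r)}) + \frac{1}{N}\sum_{i=1}^N (\nabla F_i(\vx_i^{(r)};\xi_i^{(r)}) - \nabla f_i(\bar{\vx}^{(r)}))$. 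Since both $\bar{\vu}^{(0)} = \mathbf{0}$ and $\bar{\vd}^{(0)} = \mathbf{0}$ (the latter because $\vd_i^{(0)} = \frac{1}{1-\beta}(\nabla f_i(\bar{\vx}^{(0)}) - \nabla f(\bar{\vx}^{(0)}))$ averages to zero), there is no boundary term, and I would unroll the recursion into the momentum-weighted sum $\bar{\vu}^{(r+1)} - \bar{\vd}^{(r+1)} = \sum_{k=0}^r \beta^{r-k} \vg^{(k)}$, where $\vg^{(k)} \coloneqq \frac{1}{N}\sum_{i=1}^N(\nabla F_i(\vx_i^{(k)};\xi_i^{(k)}) - \nabla f_i(\bar{\vx}^{(k)}))$.

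Next, exactly as in the proof of Lemma~\ref{lemma:u_minus_d}, I would apply Jensen's inequality to this weighted sum. Setting $s^{(r)} \coloneqq \sum_{k=0}^r\beta^{r-k}$ and viewing $\beta^{r-k}/s^{(r)}$ as convex weights, the convexity of $\|\cdot\|^2$ yields $\mathbb{E}\|\bar{\vu}^{(r+1)} - \bar{\vd}^{(r+1)}\|^2 \le s^{(r)}\sum_{k=0}^r\beta^{r-k}\,\mathbb{E}\|\vg^{(k)}\|^2$. This is the step that matters most: pulling the norm inside the sum sidesteps the correlation between the round-$k$ stochastic noise and the later round-$l$ ($l>k$) consensus errors, which would otherwise obstruct a naive expansion of the squared norm into cross terms.

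It then remains to bound each $\mathbb{E}\|\vg^{(k)}\|^2$. I would split $\vg^{(k)}$ into a stochastic-noise part $\frac{1}{N}\sum_i(\nabla F_i(\vx_i^{(k)};\xi_i^{(k)}) - \nabla f_i(\vx_i^{(k)}))$ and a consensus-bias part $\frac{1}{N}\sum_i(\nabla f_i(\vx_i^{(k)}) - \nabla f_i(\bar{\vx}^{(k)}))$. Conditioning on the randomness through round $k$, the noise part has mean zero, so the cross term vanishes and the two parts decouple; the bias part is controlled by Eq.~(\ref{eq:sum}) and the $L$-smoothness~(\ref{eq:smoothness}), giving $L^2 \Xi^{(k)}$. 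The main obstacle is the noise part, where the point is to extract the $1/N$ factor: since the samples $\xi_i^{(k)}$ are independent across nodes, the conditional cross terms between distinct nodes vanish, and the per-node bound~(\ref{eq:stochastic_gradient}) then gives $\sigma^2/N$ rather than $\sigma^2$. Hence $\mathbb{E}\|\vg^{(k)}\|^2 \le \sigma^2/N + L^2\Xi^{(k)}$. Substituting this and using $s^{(r)} \le \frac{1}{1-\beta}$, the noise contribution sums to $(s^{(r)})^2\sigma^2/N \le \frac{\sigma^2}{N(1-\beta)^2}$ and the bias contribution to $s^{(r)} L^2 \sum_{k=0}^r\beta^{r-k}\Xi^{(k)} \le \frac{L^2}{1-\beta}\sum_{k=0}^r\beta^{r-k}\Xi^{(k)}$, which is exactly the claimed bound. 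The only genuinely delicate coordination is using node-independence (for variance reduction) together with the within-round conditional mean-zero property (for decoupling noise from bias); the rest is the same geometric-series bookkeeping as in the preceding lemmas.
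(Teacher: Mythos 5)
Your proposal is correct and follows essentially the same route as the paper's proof: unroll the zero-initialized recursion for $\bar{\vu}^{(r+1)}-\bar{\vd}^{(r+1)}$ into the momentum-weighted sum, apply Jensen's inequality with weights $\beta^{r-k}/s^{(r)}$, bound each term by splitting off the conditionally mean-zero, node-independent noise (yielding $\sigma^2/N$ via Assumption~\ref{assumption:stochasic_gradient}) from the consensus bias (yielding $L^2\Xi^{(k)}$ via Eq.~(\ref{eq:sum}) and Assumption~\ref{assumption:smoothness}), and finish with $s^{(r)}\le\frac{1}{1-\beta}$. You merely make explicit the conditional-expectation and cross-node-independence reasoning that the paper leaves implicit in its citation of Eq.~(\ref{eq:stochastic_gradient}).
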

\begin{proof}
We have
\begin{align*}
    &\mathbb{E} \left\| \bar{\vu}^{(r+1)} - \bar{\vd}^{(r+1)} \right\|^2
    = \mathbb{E} \left\| \sum_{k=0}^r \beta^{r-k} (\frac{1}{N} \sum_{i=1}^N \nabla F_i (\vx_i^{(k)} ; \xi_i^{(k)}) - \nabla f (\bar{\vx}^{(k)}) ) \right\|^2,
\end{align*}
where we use $\bar{\vu}^{(0)} = \bar{\vd}^{(0)} = \mathbf{0}$.
Defining $s^{(r)} \coloneqq \sum_{k=0}^r \beta^{r-k}$, we obtain
\begin{align*}
    &\mathbb{E} \left\| \bar{\vu}^{(r+1)} - \bar{\vd}^{(r+1)} \right\|^2 \\
    &= {s^{(r)}}^2 \mathbb{E} \left\| \sum_{k=0}^r \frac{\beta^{r-k}}{s^{(r)}} (\frac{1}{N} \sum_{i=1}^N \nabla F_i (\vx_i^{(k)} ; \xi_i^{(k)}) - \nabla f (\bar{\vx}^{(k)}) ) \right\|^2 \\
    &\stackrel{(a)}{\leq} s^{(r)} \sum_{k=0}^r \beta^{r-k} \mathbb{E} \left\| \frac{1}{N} \sum_{i=1}^N \nabla F_i (\vx_i^{(k)} ; \xi_i^{(k)}) - \nabla f (\bar{\vx}^{(k)}) \right\|^2 \\
    &\stackrel{(\ref{eq:stochastic_gradient})}{\leq} s^{(r)} \sum_{k=0}^r \beta^{r-k} \mathbb{E} \left\| \frac{1}{N} \sum_{i=1}^N \nabla f_i (\vx_i^{(k)}) - \nabla f (\bar{\vx}^{(k)}) \right\|^2 
    + s^{(r)} \sum_{k=0}^r \beta^{r-k} \frac{\sigma^2}{N} \\
    &\stackrel{(\ref{eq:sum})}{\leq} s^{(r)} \sum_{k=0}^r \beta^{r-k} \frac{1}{N} \sum_{i=1}^N \mathbb{E} \left\| \nabla f_i (\vx_i^{(k)}) - \nabla f_i (\bar{\vx}^{(k)}) \right\|^2 
    + s^{(r)} \sum_{k=0}^r \beta^{r-k} \frac{\sigma^2}{N},
\end{align*}
where we use Jensen's inequality in (a).
Then, using $s^{(r)} \leq \frac{1}{1-\beta}$, we obtain
\begin{align*}
    &\mathbb{E} \left\| \bar{\vu}^{(r+1)} - \bar{\vd}^{(r+1)} \right\|^2 \\
    &\leq \frac{1}{1-\beta} \sum_{k=0}^r \beta^{r-k} \frac{1}{N} \sum_{i=1}^N \mathbb{E} \left\| \nabla f_i (\vx_i^{(k)}) - \nabla f_i (\bar{\vx}^{(k)}) \right\|^2 
    + \frac{\sigma^2}{N (1-\beta)^2} \\
    &\stackrel{(\ref{eq:smoothness})}{\leq} \frac{L^2}{1-\beta} \sum_{k=0}^r \beta^{r-k} \frac{1}{N} \sum_{i=1}^N \mathbb{E} \left\| \vx_i^{(k)} - \bar{\vx}^{(k)} \right\|^2 
    + \frac{\sigma^2}{N (1-\beta)^2}.
\end{align*}
This concludes the proof.
\end{proof}

\begin{lemma}
\label{lemma:u_bar}
Suppose that Assumptions \ref{assumption:lower_bound}, \ref{assumption:mixing}, \ref{assumption:smoothness}, and \ref{assumption:stochasic_gradient} hold.
For any round $r \geq 0$, it holds that
\begin{align*}
    \mathbb{E} \left\| \bar{\vu}^{(r+1)} \right\|^2
    \leq \frac{2 L^2}{1-\beta} \sum_{k=0}^r \beta^{r-k} \Xi^{(k)} 
    + \frac{2}{1-\beta} \sum_{k=0}^{r} \beta^{r-k} \left\| \nabla f(\bar{\vx}^{(k)}) \right\|^2
    + \frac{2\sigma^2}{N (1-\beta)^2}.
\end{align*}
\end{lemma}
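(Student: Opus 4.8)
The plan is to decompose the averaged momentum $\bar{\vu}^{(r+1)}$ around the surrogate $\bar{\vd}^{(r+1)}$ and then invoke the two preceding lemmas, so that essentially no fresh estimation is needed. Concretely, I would start from the identity
\begin{align*}
\bar{\vu}^{(r+1)} = \left( \bar{\vu}^{(r+1)} - \bar{\vd}^{(r+1)} \right) + \bar{\vd}^{(r+1)},
\end{align*}
and apply the elementary inequality~(\ref{eq:triangle}) with $\gamma = 1$ (which produces the factor-$2$ splitting) to obtain
\begin{align*}
\mathbb{E} \left\| \bar{\vu}^{(r+1)} \right\|^2 \leq 2\, \mathbb{E} \left\| \bar{\vu}^{(r+1)} - \bar{\vd}^{(r+1)} \right\|^2 + 2\, \mathbb{E} \left\| \bar{\vd}^{(r+1)} \right\|^2.
\end{align*}

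Next I would replace $\bar{\vd}^{(r+1)}$ by $\bar{\ve}^{(r+1)}$, which is legitimate because the excerpt already records that $\bar{\vd}^{(r)} = \bar{\ve}^{(r)}$ for all $r \geq 0$ (both averages are driven by $\nabla f(\bar{\vx}^{(k)})$ and start from the same mean). The first term is then controlled verbatim by Lemma~\ref{lemma:u_bar_minus_d_bar}, giving $\frac{L^2}{1-\beta} \sum_{k=0}^r \beta^{r-k} \Xi^{(k)} + \frac{\sigma^2}{N(1-\beta)^2}$, and the second term by Lemma~\ref{lemma:e}, giving $\frac{1}{1-\beta} \sum_{k=0}^r \beta^{r-k} \mathbb{E} \| \nabla f(\bar{\vx}^{(k)}) \|^2$. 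Multiplying each estimate by $2$ and adding recovers the claim, with the three resulting terms matching the consensus/curvature term $\frac{2L^2}{1-\beta} \sum_k \beta^{r-k} \Xi^{(k)}$, the gradient term $\frac{2}{1-\beta} \sum_k \beta^{r-k} \| \nabla f(\bar{\vx}^{(k)}) \|^2$, and the noise term $\frac{2\sigma^2}{N(1-\beta)^2}$, respectively.

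Since the two component bounds are available, there is little genuine difficulty here; the only point requiring care is the bookkeeping of the geometric weights $\beta^{r-k}$ so that both summations align on the same index set $k = 0, \dots, r$ (they do, as both lemmas unroll the same momentum recursion from $\bar{\vu}^{(0)} = \bar{\vd}^{(0)} = \mathbf{0}$). An alternative, self-contained route would bypass the decomposition and unroll $\bar{\vu}^{(r+1)} = \sum_{k=0}^r \beta^{r-k} \frac{1}{N} \sum_{i=1}^N \nabla F_i(\vx_i^{(k)} ; \xi_i^{(k)})$ directly, apply Jensen's inequality through the normalization $s^{(r)} \coloneqq \sum_{k=0}^r \beta^{r-k} \leq \frac{1}{1-\beta}$ exactly as in Lemmas~\ref{lemma:e} and~\ref{lemma:u_bar_minus_d_bar}, peel off the stochastic-gradient noise via~(\ref{eq:stochastic_gradient}), and then add and subtract $\nabla f_i(\bar{\vx}^{(k)})$ to expose the smoothness term~(\ref{eq:smoothness}) (using~(\ref{eq:sum})) together with the full gradient $\nabla f(\bar{\vx}^{(k)})$. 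I expect the decomposition route to be the cleaner of the two and would present it as the main argument.
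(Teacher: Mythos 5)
Your proposal is correct and is essentially identical to the paper's proof: the paper also writes $\mathbb{E}\|\bar{\vu}^{(r+1)}\|^2 = \mathbb{E}\|\bar{\vu}^{(r+1)} \pm \bar{\vd}^{(r+1)}\|^2$, applies inequality~(\ref{eq:triangle}) to get the factor-$2$ splitting, and then invokes Lemmas~\ref{lemma:u_bar_minus_d_bar} and~\ref{lemma:e} (via $\bar{\vd}^{(r)} = \bar{\ve}^{(r)}$) for the two resulting terms. Your bookkeeping of the geometric weights and the zero initial averages is also consistent with the paper.
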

\begin{proof}
We have
\begin{align*}
    \mathbb{E} \left\| \bar{\vu}^{(r+1)} \right\|^2
    &= \mathbb{E} \left\| \bar{\vu}^{(r+1)} \pm \bar{\vd}^{(r+1)} \right\|^2 \\
    &\stackrel{(\ref{eq:triangle})}{\leq} 2 \mathbb{E} \left\| \bar{\vu}^{(r+1)} - \bar{\vd}^{(r+1)} \right\|^2
    + 2 \mathbb{E} \left\| \bar{\vd}^{(r+1)} \right\|^2.
\end{align*}
From Lemmas \ref{lemma:e} and \ref{lemma:u_bar_minus_d_bar}, we obtain the statement.
\end{proof}

\newpage
\subsection{Main Proof}

\begin{lemma}[Descent Lemma]
\label{lemma:descent_lemma}
Suppose that Assumptions \ref{assumption:lower_bound}, \ref{assumption:mixing}, \ref{assumption:smoothness}, and \ref{assumption:stochasic_gradient} hold.
If the step size $\eta$ satisfies
\begin{align*}
    \eta \leq \frac{1-\beta}{4 L},
\end{align*}
then it holds that for any round $r \geq 0$,
\begin{align*}
    \mathbb{E} f (\bar{\vz}^{(r+1)}) 
    &\leq \mathbb{E} f (\bar{\vz}^{(r)}) 
    + \frac{L^2 \eta}{1-\beta} \mathbb{E} \left\| \bar{\vx}^{(r)} - \bar{\vz}^{(r)} \right\|^2  
    + \frac{L^2 \eta}{1-\beta} \Xi^{(r)} \\
    &\qquad - \frac{\eta}{4 (1 - \beta)}  \mathbb{E} \left\| \frac{1}{N} \sum_{i=1}^N \nabla f_i (\vx^{(r)}_i) \right\|^2
    - \frac{\eta}{4 (1-\beta)} \mathbb{E} \left\| \nabla f (\bar{\vx}^{(r)}) \right\|^2 
    + \frac{L \sigma^2 \eta^2}{2 N (1-\beta)^2}.
\end{align*}
\end{lemma}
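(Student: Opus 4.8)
The plan is to run a textbook descent argument on the virtual sequence $\bar{\vz}^{(r)}$, whose one-step increment is made clean by Lemma \ref{lemma:z_update}. Since $f = \frac{1}{N}\sum_i f_i$ inherits $L$-smoothness from Assumption \ref{assumption:smoothness}, I would start from the smoothness upper bound
\[
  f(\bar{\vz}^{(r+1)}) \le f(\bar{\vz}^{(r)}) + \langle \nabla f(\bar{\vz}^{(r)}),\, \bar{\vz}^{(r+1)} - \bar{\vz}^{(r)} \rangle + \tfrac{L}{2}\,\|\bar{\vz}^{(r+1)} - \bar{\vz}^{(r)}\|^2
\]
and substitute $\bar{\vz}^{(r+1)} - \bar{\vz}^{(r)} = -\frac{\eta}{1-\beta}\frac{1}{N}\sum_i \nabla F_i(\vx_i^{(r)};\xi_i^{(r)})$ from Lemma \ref{lemma:z_update}, which is precisely why this virtual iterate was introduced.

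Next I would take the conditional expectation $\mathbb{E}_r[\cdot]$, using that $\bar{\vz}^{(r)}$, $\bar{\vx}^{(r)}$ and each $\vx_i^{(r)}$ are determined by the randomness before round $r$ and are hence constants under $\mathbb{E}_r$. Unbiasedness turns the linear term into $-\frac{\eta}{1-\beta}\langle \nabla f(\bar{\vz}^{(r)}),\, \vg^{(r)}\rangle$ with $\vg^{(r)}\coloneqq \frac{1}{N}\sum_i \nabla f_i(\vx_i^{(r)})$. For the quadratic term I would apply the bias--variance decomposition together with the independence of $\{\xi_i^{(r)}\}_i$ across nodes, so that the cross terms vanish and Assumption \ref{assumption:stochasic_gradient} bounds the fluctuation by $\sigma^2/N$; this generates the variance term $\frac{L\sigma^2\eta^2}{2N(1-\beta)^2}$ exactly, and leaves a leftover $+\frac{L\eta^2}{2(1-\beta)^2}\|\vg^{(r)}\|^2$.

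The heart of the proof is controlling the inner product so as to manufacture the two negative gradient terms. I would insert $\pm\nabla f(\bar{\vx}^{(r)})$ into both arguments and invoke $L$-smoothness in the two forms $\|\nabla f(\bar{\vz}^{(r)}) - \nabla f(\bar{\vx}^{(r)})\| \le L\|\bar{\vz}^{(r)} - \bar{\vx}^{(r)}\|$ and, via (\ref{eq:sum}) and (\ref{eq:smoothness}), $\|\vg^{(r)} - \nabla f(\bar{\vx}^{(r)})\|^2 \le L^2\Xi^{(r)}$. Combining these with Young's inequality (\ref{eq:inner_prod}) and (\ref{eq:triangle}), the linear term together with the mean part of the second-order term reduces to a quadratic form in the three vectors $\nabla f(\bar{\vz}^{(r)})$, $\nabla f(\bar{\vx}^{(r)})$, $\vg^{(r)}$; a completion-of-squares shows it is dominated by $\frac{L^2\eta}{1-\beta}\big(\|\bar{\vx}^{(r)} - \bar{\vz}^{(r)}\|^2 + \Xi^{(r)}\big) - \frac{\eta}{4(1-\beta)}\big(\|\vg^{(r)}\|^2 + \|\nabla f(\bar{\vx}^{(r)})\|^2\big)$. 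The step-size restriction $\eta \le \frac{1-\beta}{4L}$ is exactly what guarantees $\frac{L\eta}{2(1-\beta)} \le \frac{1}{8}$, so that the positive $\|\vg^{(r)}\|^2$ left over from the smoothness term is absorbed into the negative $\|\vg^{(r)}\|^2$ produced by the inner-product estimate. Taking the full expectation $\mathbb{E}$ and rearranging then yields the claim.

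The hard part will be the constant bookkeeping in this last step. A naive single application of Young's inequality either overshoots the coefficient of $\|\bar{\vx}^{(r)} - \bar{\vz}^{(r)}\|^2$ or fails to leave enough negative $\|\vg^{(r)}\|^2$ budget; one has to split the inner product so that the slack in the $\|\nabla f(\bar{\vx}^{(r)})\|^2$ and $\Xi^{(r)}$ coefficients is traded against the $\|\vg^{(r)}\|^2$ budget. Equivalently, the delicate point is verifying that the resulting $3\times 3$ quadratic form in $(\nabla f(\bar{\vz}^{(r)}),\nabla f(\bar{\vx}^{(r)}),\vg^{(r)})$ is positive semidefinite with the stated coefficients, rather than bounding each cross term in isolation. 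Everything else — the smoothness inequality, unbiasedness, and the node-wise variance split — is routine.
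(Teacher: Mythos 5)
Your proposal is correct and follows essentially the same route as the paper's proof: smoothness applied to the virtual iterate $\bar{\vz}^{(r)}$ with the increment from Lemma \ref{lemma:z_update}, conditional expectation yielding the $\sigma^2/N$ variance term, and a split of the inner product around $\nabla f(\bar{\vx}^{(r)})$ (the paper bounds $\langle \nabla f(\bar{\vz}^{(r)})-\nabla f(\bar{\vx}^{(r)}),\vg^{(r)}\rangle$ via Young's inequality with $\gamma=2$ and handles $-\langle\nabla f(\bar{\vx}^{(r)}),\vg^{(r)}\rangle$ via the polarization identity). The delicate bookkeeping you flag is resolved in the paper exactly as you suggest: the leftover $\frac{L\eta^2}{2(1-\beta)^2}\|\vg^{(r)}\|^2$ is not charged to the negative $\|\vg^{(r)}\|^2$ budget but is re-expanded as $\|\vg^{(r)}\|^2\le \frac{2L^2}{N}\sum_i\|\vx_i^{(r)}-\bar{\vx}^{(r)}\|^2+2\|\nabla f(\bar{\vx}^{(r)})\|^2$ and absorbed into the slack of the $\Xi^{(r)}$ and $\|\nabla f(\bar{\vx}^{(r)})\|^2$ coefficients using $\frac{L\eta}{1-\beta}\le\frac{1}{4}$, which is what preserves the full $-\frac{\eta}{4(1-\beta)}\|\vg^{(r)}\|^2$ term.
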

\begin{proof}
From Assumption \ref{assumption:smoothness} and Lemma \ref{lemma:z_update}, we have
\begin{align*}
    &\mathbb{E}_r f (\bar{\vz}^{(r+1)}) \\
    &\leq f (\bar{\vz}^{(r)}) + \mathbb{E}_r \langle \nabla  f (\bar{\vz}^{(r)}), \bar{\vz}^{(r+1)} - \bar{\vz}^{(r)} \rangle + \frac{L}{2} \mathbb{E}_r \left\| \bar{\vz}^{(r+1)} - \bar{\vz}^{(r)} \right\|^2 \\
    &= f (\bar{\vz}^{(r)}) - \frac{\eta}{1-\beta} \left\langle \nabla  f (\bar{\vz}^{(r)}), \frac{1}{N} \sum_{i=1}^N \nabla f_i (\vx^{(r)}_i) \right\rangle
    + \frac{L \eta^2}{2 (1-\beta)^2} \mathbb{E}_r \left\| \frac{1}{N} \sum_{i=1}^N \nabla F_i (\vx^{(r)}_i ; \xi_i^{(r)}) \right\|^2 \\
    &\stackrel{(\ref{eq:stochastic_gradient})}{\leq} f (\bar{\vz}^{(r)}) - \frac{\eta}{1-\beta} \left\langle \nabla  f (\bar{\vz}^{(r)}), \frac{1}{N} \sum_{i=1}^N \nabla f_i (\vx^{(r)}_i) \right\rangle \\
    &\qquad + \frac{L \eta^2}{2 (1-\beta)^2} \left\| \frac{1}{N} \sum_{i=1}^N \nabla f_i (\vx^{(r)}_i) \right\|^2 + \frac{L \sigma^2 \eta^2}{2 N (1-\beta)^2} \\
    &= f (\bar{\vz}^{(r)}) 
    + \frac{\eta}{1-\beta} \underbrace{\left\langle \nabla f (\bar{\vx}^{(r)}) - \nabla f (\bar{\vz}^{(r)}), \frac{1}{N} \sum_{i=1}^N \nabla f_i (\vx^{(r)}_i) \right\rangle}_{T_1} \\
    &\qquad - \frac{\eta}{1-\beta} \underbrace{\left\langle \nabla f (\bar{\vx}^{(r)}), \frac{1}{N} \sum_{i=1}^N \nabla f_i (\vx^{(r)}_i) \right\rangle}_{T_2}
    + \frac{L \eta^2}{2 (1-\beta)^2} \underbrace{\left\| \frac{1}{N} \sum_{i=1}^N \nabla f_i (\vx_i^{(r)}) \right\|^2}_{T_3}
    + \frac{L \sigma^2 \eta^2}{2 N (1-\beta)^2}.
\end{align*}
We can bound $T_1$ from above as follows:
\begin{align*}
    T_1 
    &\stackrel{(\ref{eq:inner_prod}), \gamma=2}{\leq} \left\| \nabla f (\bar{\vx}^{(r)}) - \nabla f (\bar{\vz}^{(r)}) \right\|^2 + \frac{1}{4} \left\| \frac{1}{N} \sum_{i=1}^N \nabla f_i (\vx^{(r)}_i) \right\|^2 \\
    &\stackrel{(\ref{eq:smoothness})}{\leq} L^2 \left\| \bar{\vx}^{(r)} - \bar{\vz}^{(r)} \right\|^2 + \frac{1}{4} \left\| \frac{1}{N} \sum_{i=1}^N \nabla f_i (\vx^{(r)}_i) \right\|^2.
\end{align*}
We can bound $-T_2$ from above as follows:
\begin{align*}
    - T_2 
    &= \frac{1}{2} \left\| \nabla f (\bar{\vx}^{(r)}) - \frac{1}{N} \sum_{i=1}^N \nabla f_i (\vx^{(r)}_i) \right\|^2
    - \frac{1}{2} \left\| \nabla f (\bar{\vx}^{(r)}) \right\|^2
    - \frac{1}{2} \left\| \frac{1}{N} \sum_{i=1}^N \nabla f_i (\vx^{(r)}_i) \right\|^2 \\
    &\stackrel{(\ref{eq:sum})}{\leq} \frac{1}{2} \frac{1}{N} \sum_{i=1}^N \left\| \nabla f_i (\bar{\vx}^{(r)}) - \nabla f_i (\vx^{(r)}_i) \right\|^2
    - \frac{1}{2} \left\| \nabla f (\bar{\vx}^{(r)}) \right\|^2
    - \frac{1}{2} \left\| \frac{1}{N} \sum_{i=1}^N \nabla f_i (\vx^{(r)}_i) \right\|^2 \\
    &\stackrel{(\ref{eq:smoothness})}{\leq} \frac{L^2}{2} \frac{1}{N} \sum_{i=1}^N \left\| \bar{\vx}^{(r)} - \vx^{(r)}_i \right\|^2
    - \frac{1}{2} \left\| \nabla f (\bar{\vx}^{(r)}) \right\|^2
    - \frac{1}{2} \left\| \frac{1}{N} \sum_{i=1}^N \nabla f_i (\vx^{(r)}_i) \right\|^2.
\end{align*}
Then, we can bound $T_3$ from above as follows:
\begin{align*}
    T_3
    &= \left\| \frac{1}{N} \sum_{i=1}^N \nabla f_i (\vx^{(r)}_i) \pm \nabla f (\bar{\vx}^{(r)}) \right\|^2 \\
    &\stackrel{(\ref{eq:triangle})}{\leq} 2 \left\| \frac{1}{N} \sum_{i=1}^N \nabla f_i (\vx^{(r)}_i) - \nabla f (\bar{\vx}^{(r)}) \right\|^2
    + 2 \left\| \nabla f (\bar{\vx}^{(r)}) \right\|^2 \\
    &\stackrel{(\ref{eq:sum})}{\leq} \frac{2}{N} \sum_{i=1}^N \left\| \nabla f_i (\vx^{(r)}_i) - \nabla f_i (\bar{\vx}^{(r)}) \right\|^2
    + 2 \left\| \nabla f (\bar{\vx}^{(r)}) \right\|^2 \\
    &\stackrel{(\ref{eq:smoothness})}{\leq} \frac{2 L^2}{N} \sum_{i=1}^N \left\| \vx^{(r)}_i - \bar{\vx}^{(r)} \right\|^2
    + 2 \left\| \nabla f (\bar{\vx}^{(r)}) \right\|^2.
\end{align*}
By combining them, we obtain
\begin{align*}
    &\mathbb{E}_r f (\bar{\vz}^{(r+1)}) \\
    &\leq f (\bar{\vz}^{(r)}) 
    + \frac{L^2 \eta}{1-\beta} \left\| \bar{\vx}^{(r)} - \bar{\vz}^{(r)} \right\|^2  
    - \frac{\eta}{4 (1 - \beta)}  \left\| \frac{1}{N} \sum_{i=1}^N \nabla f_i (\vx^{(r)}_i) \right\|^2 \\
    &\qquad + \frac{L^2}{1-\beta} \left( \frac{1}{2} + \frac{L \eta}{1-\beta} \right) \eta \Xi^{(r)} 
    - \frac{1}{1-\beta} \left( \frac{1}{2} - \frac{L \eta}{1-\beta} \right) \eta \left\| \nabla f (\bar{\vx}^{(r)}) \right\|^2 
    + \frac{L \sigma^2 \eta^2}{2 N (1-\beta)^2}.
\end{align*}
Using $\eta \leq \frac{1-\beta}{4 L}$, we get the statement.
\end{proof}

\begin{lemma}[Recursion for $\Xi$]
\label{lemma:recursion_for_consensus_distance}
Suppose that Assumptions \ref{assumption:lower_bound}, \ref{assumption:mixing}, \ref{assumption:smoothness}, and \ref{assumption:stochasic_gradient} hold.
Then, it holds that for any round $r \geq 0$,
\begin{align*}
    \Xi^{(r+1)} 
    &\leq (1 - \frac{p}{2}) \Xi^{(r)} 
    + \frac{9}{p} \eta^2 \mathcal{E}^{(r)}
    + \frac{9}{N p} \eta^2 \mathbb{E} \left\|  \mU^{(r+1)} - \mD^{(r+1)} \right\|^2_F
    +  \frac{9}{N p} \eta^2 \mathbb{E} \left\| \mE^{(r+1)} \right\|^2_F.
\end{align*}
\end{lemma}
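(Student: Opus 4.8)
The plan is to unroll $\Xi^{(r+1)}=\frac1N\mathbb{E}\|\mX^{(r+1)}-\bar{\mX}^{(r+1)}\|_F^2$ through the matrix update $\mX^{(r+1)}=\mX^{(r)}\mW-\eta(\mU^{(r+1)}-\mC^{(r)})$, extract the mixing contraction from the $\mW$-step, and rewrite the leftover gradient-tracking residual in terms of $\mathcal{E}^{(r)}$ and the auxiliary sequences $\mD,\mE$.

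First I would write $\mX^{(r+1)}-\bar{\mX}^{(r+1)} = \mX^{(r+1)}(\mI-\tfrac1N\mathbf{1}\mathbf{1}^\top)$ and substitute the update. Because $\mW$ is doubly stochastic, $\mW\tfrac1N\mathbf{1}\mathbf{1}^\top=\tfrac1N\mathbf{1}\mathbf{1}^\top$, so the mixing piece equals $\mX^{(r)}\mW-\bar{\mX}^{(r)}$, to which Assumption~\ref{assumption:mixing} applies: $\mathbb{E}\|\mX^{(r)}\mW-\bar{\mX}^{(r)}\|_F^2\le (1-p)N\Xi^{(r)}$. Applying the Young-type bound~(\ref{eq:triangle}) to the sum of this contracted term and $-\eta(\mU^{(r+1)}-\mC^{(r)})(\mI-\tfrac1N\mathbf{1}\mathbf{1}^\top)$ with the choice $\gamma=\tfrac p2$ produces the factor $(1+\tfrac p2)(1-p)\le 1-\tfrac p2$ on the first term and $1+\tfrac2p\le\tfrac3p$ (using $p\le1$) on the second. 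Since $\mI-\tfrac1N\mathbf{1}\mathbf{1}^\top$ is an orthogonal projection, $\|(\mU^{(r+1)}-\mC^{(r)})(\mI-\tfrac1N\mathbf{1}\mathbf{1}^\top)\|_F\le\|\mU^{(r+1)}-\mC^{(r)}\|_F$, leaving $\Xi^{(r+1)}\le(1-\tfrac p2)\Xi^{(r)}+\tfrac{3\eta^2}{pN}\mathbb{E}\|\mU^{(r+1)}-\mC^{(r)}\|_F^2$.

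The key step is then the algebraic splitting $\mU^{(r+1)}-\mC^{(r)}=(\mD^{(r+1)}-\mC^{(r)}-\mE^{(r+1)})+(\mU^{(r+1)}-\mD^{(r+1)})+\mE^{(r+1)}$, engineered so that the first bracket is exactly the quantity measured by $\mathcal{E}^{(r)}$. Applying~(\ref{eq:sum}) with three summands turns $\mathbb{E}\|\mU^{(r+1)}-\mC^{(r)}\|_F^2$ into $3$ times the sum of the three squared Frobenius norms; folding the $\tfrac1N$ into the first reproduces $\tfrac9p\eta^2\mathcal{E}^{(r)}$, while the remaining two give the stated $\tfrac{9}{Np}\eta^2$-weighted terms in $\|\mU^{(r+1)}-\mD^{(r+1)}\|_F^2$ and $\|\mE^{(r+1)}\|_F^2$.

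I expect no deep obstacle; this is a standard consensus-contraction recursion. The only real content is (i) confirming that $\bar{\mX}^{(r+1)}$ stays the column average so the mixing contraction applies unchanged, which rests on $\bar{\vc}^{(r)}=\mathbf{0}$ (Lemma~\ref{lemma:sum_c}) together with Lemma~\ref{lemma:average_property}, and (ii) choosing the three-term decomposition whose first bracket coincides with $\mathcal{E}^{(r)}$. Note that $\mE^{(r+1)}$ has identical columns and would vanish under the projection; the statement retains it because the projection is discarded \emph{before} the split, and $\|\mE^{(r+1)}\|_F^2$ is controlled later via Lemma~\ref{lemma:e} in terms of $\|\nabla f(\bar{\vx})\|^2$.
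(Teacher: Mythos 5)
Your proposal is correct and follows essentially the same route as the paper's proof: the same Young-type split with $\gamma=\tfrac p2$ giving the factors $(1+\tfrac p2)(1-p)\le 1-\tfrac p2$ and $1+\tfrac2p\le\tfrac3p$, followed by the same three-term decomposition $\mU^{(r+1)}-\mC^{(r)}=(\mD^{(r+1)}-\mC^{(r)}-\mE^{(r+1)})+(\mU^{(r+1)}-\mD^{(r+1)})+\mE^{(r+1)}$ and inequality~(\ref{eq:sum}). The only cosmetic difference is that you carry the centering projection $\mI-\tfrac1N\mathbf{1}\mathbf{1}^\top$ explicitly where the paper instead uses $\|\mX^{(r+1)}-\bar{\mX}^{(r+1)}\|_F^2\le\|\mX^{(r+1)}-\bar{\mX}^{(r)}\|_F^2$, and your closing observation that $\mE^{(r+1)}$ would vanish under the projection (the bound being kept looser by dropping it first) is accurate.
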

\begin{proof}
Because $\sum_{i=1}^N \| \va_i - \bar{\va} \|^2 \leq \sum_{i=1}^N \| \va_i \|^2$ for any $\va_1, \cdots, \va_N \in \mathbb{R}^d$,
we have
\begin{align*}
    N \Xi^{(r)} = \mathbb{E} \left\| (\mX^{(r)} - \bar{\mX}^{(r-1)}) + (\bar{\mX}^{(r-1)} - \bar{\mX}^{(r)}) \right\|^2_F \leq  \mathbb{E} \left\| \mX^{(r)} - \bar{\mX}^{(r-1)} \right\|^2_F.
\end{align*}
Then, we have
\begin{align*}
    \left\| \mX^{(r+1)} - \bar{\mX}^{(r+1)} \right\|^2_F
    &\leq \left\| \mX^{(r+1)} - \bar{\mX}^{(r)} \right\|^2_F \\
    &= \left\| \mX^{(r)} \mW - \eta (\mU^{(r+1)} - \mC^{(r)}) - \bar{\mX}^{(r)} \right\|^2_F \\
    &\stackrel{(\ref{eq:triangle})}{\leq} (1 + \gamma) \left\| \mX^{(r)} \mW - \bar{\mX}^{(r)} \right\|^2_F + (1 + \gamma^{-1}) \eta^2 \left\|  \mU^{(r+1)} - \mC^{(r)} \right\|^2_F \\
    &\stackrel{(\ref{eq:mixing})}{\leq} (1 + \gamma) (1 - p) \left\| \mX^{(r)} - \bar{\mX}^{(r)} \right\|^2_F + (1 + \gamma^{-1}) \eta^2 \left\|  \mU^{(r+1)} - \mC^{(r)} \right\|^2_F.
\end{align*}
By substituting $\gamma=\frac{p}{2}$ and using $p \leq 1$, we obtain
\begin{align*}
    &\left\| \mX^{(r+1)} - \bar{\mX}^{(r+1)} \right\|^2_F \\
    &\leq (1 - \frac{p}{2}) \left\| \mX^{(r)} - \bar{\mX}^{(r)} \right\|^2_F + \frac{3}{p} \eta^2 \left\|  \mU^{(r+1)} - \mC^{(r)} \right\|^2_F \\
    &= (1 - \frac{p}{2}) \left\| \mX^{(r)} - \bar{\mX}^{(r)} \right\|^2_F + \frac{3}{p} \eta^2 \left\|  \mU^{(r+1)} \pm \mD^{(r+1)} \pm \mE^{(r+1)} - \mC^{(r)} \right\|^2_F \\
    &\stackrel{(\ref{eq:sum})}{\leq} (1 - \frac{p}{2}) \left\| \mX^{(r)} - \bar{\mX}^{(r)} \right\|^2_F \\
    &\qquad + \frac{9}{p} \eta^2 \left\|  \mU^{(r+1)} - \mD^{(r+1)} \right\|^2_F + \frac{9}{p} \eta^2 \left\| \mD^{(r+1)}- \mC^{(r)} - \mE^{(r+1)} \right\|^2_F +  \frac{9}{p} \eta^2 \left\| \mE^{(r+1)} \right\|^2_F.
\end{align*}
This concludes the proof.
\end{proof}

\begin{lemma}[Recursion for $\mathcal{E}$]
\label{lemma:recursion_for_e}
Suppose that Assumptions \ref{assumption:lower_bound}, \ref{assumption:mixing}, \ref{assumption:smoothness}, and \ref{assumption:stochasic_gradient} hold.
Then, it holds that for any round $r \geq 0$,
\begin{align*}
    \mathcal{E}^{(r+1)} 
    &\leq (1 - \frac{p}{2}) \mathcal{E}^{(r)} 
    + \frac{18 \beta^2}{p} \mathcal{D}^{(r)}
    + \frac{24}{N p} \mathbb{E} \left\| \mU^{(r+1)} - \mD^{(r+1)} \right\|^2_F \\
    &\qquad + \frac{144 L^4}{p} \eta^2 \Xi^{(r)}
    + \frac{72 \beta^2 L^2}{p} \eta^2 \mathbb{E} \left\| \bar{\vu}^{(r)} \right\|^2 
    + \frac{144 L^2}{p} \eta^2 \mathbb{E} \left\| \nabla f (\bar{\vx}^{(r)}) \right\|^2
    + \frac{36 L^2 \sigma^2 \eta^2}{N p}.
\end{align*}

\end{lemma}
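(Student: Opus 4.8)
Set $\mY^{(r)} \coloneqq \mD^{(r+1)} - \mC^{(r)} - \mE^{(r+1)}$, so that $\mathcal{E}^{(r)} = \frac{1}{N}\mathbb{E}\|\mY^{(r)}\|_F^2$. The plan is to show that one step of Momentum Tracking acts on $\mY^{(r)}$ as a mixing step plus a small perturbation, $\mY^{(r+1)} = \mY^{(r)}\mW + \mP^{(r)}$, and then to invoke the mixing contraction of Assumption~\ref{assumption:mixing}. For this to yield genuine contraction I first verify that $\mY^{(r)}$ has zero column average: its column mean equals $\bar\vd^{(r+1)} - \bar\vc^{(r)} - \bar\ve^{(r+1)}$, which vanishes because $\bar\vc^{(r)} = \mathbf{0}$ by Lemma~\ref{lemma:sum_c} and $\bar\ve^{(r+1)} = \bar\vd^{(r+1)}$ (noted in the setup). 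Hence $\bar\mY^{(r)} = \mathbf{0}$, and Assumption~\ref{assumption:mixing} applied to $\mY^{(r)}$ gives $\|\mY^{(r)}\mW\|_F^2 \le (1-p)\|\mY^{(r)}\|_F^2$.

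\textbf{Deriving the recursion.} I would substitute $\mC^{(r+1)} = (\mC^{(r)} - \mU^{(r+1)})\mW + \mU^{(r+1)}$ into the definition of $\mY^{(r+1)}$, eliminate $\mC^{(r)}$ using $\mC^{(r)} = \mD^{(r+1)} - \mE^{(r+1)} - \mY^{(r)}$, and repeatedly use that every column of $\mE^{(r)}$ is identical, so $\mE^{(r)}\mW = \mE^{(r)}$ and therefore $\mE^{(r)}(\mI - \mW) = \mathbf{0}$. Writing $\mM^{(r)} \coloneqq \mD^{(r)} - \mE^{(r)}$, this bookkeeping collapses to
\[ \mP^{(r)} = \big(\mM^{(r+2)} - \mM^{(r+1)}\big) + \big(\mD^{(r+1)} - \mU^{(r+1)}\big)(\mI - \mW). \]
The key cancellation is that $\mE^{(r+1)}$ drops out of the $(\mI-\mW)$ term, leaving exactly $\mD^{(r+1)} - \mU^{(r+1)}$, which is already controlled by Lemma~\ref{lemma:u_minus_d}. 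Using the momentum recursions for $\mD$ and $\mE$ at rounds $r$ and $r+1$, one has $\mM^{(r+2)} - \mM^{(r+1)} = \beta(\mM^{(r+1)} - \mM^{(r)}) + (\mG^{(r+1)} - \mG^{(r)})$, where $\mG^{(r)} \coloneqq \nabla f(\bar\mX^{(r)}) - \frac{1}{N}\nabla f(\bar\mX^{(r)})\mathbf{1}\mathbf{1}^\top$, and $\frac{1}{N}\|\mM^{(r+1)} - \mM^{(r)}\|_F^2 = \mathcal{D}^{(r)}$ by definition.

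\textbf{Assembling the bound.} Applying (\ref{eq:triangle}) with $\gamma = p/2$ to $\mY^{(r)}\mW + \mP^{(r)}$ together with the contraction above gives $\mathcal{E}^{(r+1)} \le (1-\frac{p}{2})\mathcal{E}^{(r)} + \frac{3}{p}\cdot\frac{1}{N}\mathbb{E}\|\mP^{(r)}\|_F^2$. I would then split $\|\mP^{(r)}\|_F^2$ via (\ref{eq:triangle}) (first peeling off the $(\mI-\mW)$ term, then separating the $\beta$-term from the gradient difference) into three pieces handled separately: the $\beta$-piece contributes $\mathcal{D}^{(r)}$ directly; the term $\|(\mD^{(r+1)} - \mU^{(r+1)})(\mI-\mW)\|_F^2 \le 4\|\mD^{(r+1)} - \mU^{(r+1)}\|_F^2$ since the spectral norm of $\mI - \mW$ is at most $2$; and $\frac{1}{N}\|\mG^{(r+1)} - \mG^{(r)}\|_F^2 \le 4L^2\,\mathbb{E}\|\bar\vx^{(r+1)} - \bar\vx^{(r)}\|^2$ by (\ref{eq:triangle}) and smoothness (\ref{eq:smoothness}) applied both to $\nabla f_i$ and to its average. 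Finally Lemma~\ref{lemma:x_bar_update} expands $\mathbb{E}\|\bar\vx^{(r+1)} - \bar\vx^{(r)}\|^2$ into the $\Xi^{(r)}$, $\|\bar\vu^{(r)}\|^2$, $\|\nabla f(\bar\vx^{(r)})\|^2$, and $\sigma^2/N$ terms, and collecting constants yields the stated inequality.

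\textbf{Main obstacle.} The difficulty is concentrated in the algebraic rearrangement of the second paragraph: arranging for $\mW$ to act on $\mY^{(r)}$ alone so that the contraction applies, while showing the remainder reduces to the three already-controlled quantities. The two facts that make this work — the vanishing column mean of $\mY^{(r)}$ (so nothing survives outside the contracting subspace) and the identity $\mE^{(r)}(\mI-\mW)=\mathbf{0}$ (so the perturbation's tracking error is exactly $\mD^{(r+1)} - \mU^{(r+1)}$) — are the heart of the argument. The remaining work is routine application of (\ref{eq:triangle}), (\ref{eq:sum}), and smoothness, with care needed only to propagate the numerical constants through the nested Young's inequalities.
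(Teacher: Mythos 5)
Your proposal is correct and follows essentially the same route as the paper: the paper arrives at the identical decomposition $\mY^{(r+1)} = \mY^{(r)}\mW + (\mM^{(r+2)}-\mM^{(r+1)}) + (\mD^{(r+1)}-\mU^{(r+1)})(\mI-\mW)$ by inserting $\pm\mD^{(r+1)} \pm \mD^{(r+1)}\mW \pm \mE^{(r+1)}$ rather than by substitution, and then uses exactly the ingredients you name — the mixing contraction (valid since $\bar{\vc}^{(r)}=\mathbf{0}$ and $\mE^{(r+1)}=\frac{1}{N}\mD^{(r+1)}\mathbf{1}\mathbf{1}^\top$), the bound $\|\mW-\mI\|_{\mathrm{op}}\le 2$, and Lemma~\ref{lemma:x_bar_update}. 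The only discrepancy is bookkeeping: the paper splits $\mM^{(r+2)}-\mM^{(r+1)}$ into its three constituents in a single application of (\ref{eq:sum}) (yielding coefficients $3\beta^2$ and $6L^2$, hence the stated $144/72/144/36$), whereas your nested two-way splits yield $2\beta^2$ and $8L^2$, i.e.\ $192L^4\eta^2/p$ on $\Xi^{(r)}$, so to recover the exact stated constants you should split that term the paper's way.
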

\begin{proof}
We have
\begin{align*}
    &\mathbb{E} \left\| \mD^{(r+2)} - \mC^{(r+1)} - \mE^{(r+2)} \right\|^2_F \\
    &= \mathbb{E} \left\| \mD^{(r+2)} - (\mC^{(r)} - \mU^{(r+1)}) \mW - \mU^{(r+1)} - \mE^{(r+2)} \pm \mD^{(r+1)} \pm \mD^{(r+1)} \mW \pm \mE^{(r+1)} \right\|^2_F \\
    &\stackrel{(\ref{eq:triangle}), (\ref{eq:sum})}{\leq} (1+\gamma) \mathbb{E} \left\| (\mD^{(r+1)} - \mC^{(r)}) \mW - \mE^{(r+1)} \right\|^2_F \\
    &\qquad + 2 (1+\gamma^{-1}) \mathbb{E} \left\| (\mU^{(r+1)} - \mD^{(r+1)}) (\mW - \mI) \right\|^2_F \\
    &\qquad + 2 (1 + \gamma^{-1}) \mathbb{E} \left\|  \mD^{(r+2)} - \mD^{(r+1)} + \mE^{(r+1)} - \mE^{(r+2)}  \right\|^2_F \\
    &\stackrel{(\ref{eq:mixing})}{\leq} (1+\gamma) (1-p) \mathbb{E} \left\| \mD^{(r+1)} - \mC^{(r)} - \mE^{(r+1)} \right\|^2_F \\
    &\qquad + 2 (1+\gamma^{-1}) \mathbb{E} \left\| (\mU^{(r+1)} - \mD^{(r+1)}) (\mW - \mI) \right\|^2_F \\
    &\qquad + 2 (1 + \gamma^{-1}) \mathbb{E} \left\|  \mD^{(r+2)} - \mD^{(r+1)} + \mE^{(r+1)} - \mE^{(r+2)}  \right\|^2_F,
\end{align*}
where we use Lemma \ref{lemma:sum_c} and $\mE^{(r+1)} = \frac{1}{N} \mD^{(r+1)} \mathbf{1}\mathbf{1}^\top$ in the last inequality.
Then, we have
\begin{align*}
    &\mathbb{E} \left\| \mD^{(r+2)} - \mC^{(r+1)} - \mE^{(r+2)} \right\|^2_F \\
    &\stackrel{(a)}{\leq} (1+\gamma) (1-p) \mathbb{E} \left\| \mD^{(r+1)} - \mC^{(r)} - \mE^{(r+1)} \right\|^2_F \\
    &\qquad + 2 (1+\gamma^{-1}) \mathbb{E} \left\| \mU^{(r+1)} - \mD^{(r+1)} \right\|^2_F \left\| \mW - \mI \right\|^2_{\mathrm{op}}\\
    &\qquad + 2 (1 + \gamma^{-1}) \mathbb{E} \left\|  \mD^{(r+2)} - \mD^{(r+1)} + \mE^{(r+1)} - \mE^{(r+2)}  \right\|^2_F \\
    &\stackrel{(b)}{\leq} (1+\gamma) (1-p) \mathbb{E} \left\| \mD^{(r+1)} - \mC^{(r)} - \mE^{(r+1)} \right\|^2_F 
    + 8 (1+\gamma^{-1}) \mathbb{E} \left\| \mU^{(r+1)} - \mD^{(r+1)} \right\|^2_F \\
    &\qquad + 2 (1 + \gamma^{-1}) \mathbb{E} \left\|  \mD^{(r+2)} - \mD^{(r+1)} + \mE^{(r+1)} - \mE^{(r+2)}  \right\|^2_F,
\end{align*}
where $\| \cdot \|_{\mathrm{op}}$ denotes the operator norm.
In (a), we use the following definition of the operator norm:
$\|\mW - \mI\|_{\mathrm{op}} \coloneqq \sup_{\hat\vv \in \mathbb{R}^{d} \setminus \{\vzero\}} \frac{\|(\mW - \mI)\hat\vv\|}{\|\hat\vv\|} \ge \frac{\|(\mW - \mI)\vv\|}{\|\vv\|}$ for any $\vv \in \mathbb{R}^{d} \setminus \{\vzero\}$.
In (b), we use Gershgorin circle theorem and the fact that $\mW$ is a mixing matrix.
Substituting $\gamma=\frac{p}{2}$, we obtain
\begin{align*}
    &\mathbb{E} \left\| \mD^{(r+2)} - \mC^{(r+1)} - \mE^{(r+2)} \right\|^2_F \\
    &\leq (1 - \frac{p}{2}) \mathbb{E} \left\| \mD^{(r+1)} - \mC^{(r)} - \mE^{(r+1)} \right\|^2_F 
    + \frac{24}{p} \mathbb{E} \left\| \mU^{(r+1)} - \mD^{(r+1)} \right\|^2_F \\
    &\qquad + \frac{6}{p} \underbrace{\mathbb{E} \left\|  \mD^{(r+2)} - \mD^{(r+1)} + \mE^{(r+1)} - \mE^{(r+2)} \right\|^2_F}_{T}.
\end{align*}
Then, we can bound $T$ from above by expanding $\mD^{(r+2)}$, $\mD^{(r+1)}$, $\mE^{(r+2)}$, and $\mE^{(r+1)}$ as follows:
\begin{align*}
    T 
    &\stackrel{(\ref{eq:sum})}{\leq} 3 \beta^2 \mathbb{E} \left\| \mD^{(r+1)} - \mD^{(r)} + \mE^{(r)} - \mE^{(r+1)} \right\|^2_F 
    + 3 \mathbb{E} \left\| \nabla f (\bar{\mX}^{(r+1)}) - \nabla f (\bar{\mX}^{(r)}) \right\|^2_F \\
    &\qquad + 3 \mathbb{E} \left\| \frac{1}{N} \nabla f (\bar{\mX}^{(r)}) \mathbf{1}\mathbf{1}^\top - \frac{1}{N} \nabla f (\bar{\mX}^{(r+1)}) \mathbf{1}\mathbf{1}^\top\right\|^2_F \\
    &\stackrel{(\ref{eq:smoothness})}{\leq} 3 \beta^2 \mathbb{E} \left\| \mD^{(r+1)} - \mD^{(r)} + \mE^{(r)} - \mE^{(r+1)} \right\|^2_F 
    + 6 L^2 \mathbb{E} \left\| \bar{\mX}^{(r+1)} - \bar{\mX}^{(r)} \right\|^2_F.
\end{align*}
Using Lemma \ref{lemma:x_bar_update}, we obtain
\begin{align*}
    T 
    &\leq 3 \beta^2 \mathbb{E} \left\| \mD^{(r+1)} - \mD^{(r)} + \mE^{(r)} - \mE^{(r+1)} \right\|^2_F \\
    &\qquad + N (24 L^4 \eta^2 \Xi^{(r)}
    + 12 \beta^2 L^2 \eta^2 \mathbb{E} \left\| \bar{\vu}^{(r)} \right\|^2 
    + 24 L^2 \eta^2 \mathbb{E} \left\| \nabla f (\bar{\vx}^{(r)}) \right\|^2
    + \frac{6 L^2 \sigma^2 \eta^2}{N}).
\end{align*}
This concludes the proof.
\end{proof}

\begin{lemma}[Recursion for $\mathcal{D}$]
\label{lemma:recursion_for_d}
Suppose that Assumptions \ref{assumption:lower_bound}, \ref{assumption:mixing}, \ref{assumption:smoothness}, and \ref{assumption:stochasic_gradient} hold.
Then, it holds that for any round $r \geq 0$,
\begin{align*}
    \mathcal{D}^{(r+1)} 
    &\leq \frac{2 \beta^2}{1+\beta^2} \mathcal{D}^{(r)}  
    + \frac{32 L^4 \eta^2}{1 - \beta^2} \Xi^{(r)}
    + \frac{16 L^2 \beta^2 \eta^2}{1 - \beta^2} \mathbb{E} \left\| \bar{\vu}^{(r)} \right\|^2 \\
    &\qquad + \frac{32 L^2 \eta^2}{1 - \beta^2} \mathbb{E} \left\| \nabla f (\bar{\vx}^{(r)}) \right\|^2
    + \frac{8 L^2 \sigma^2 \eta^2}{N (1 - \beta^2)}.
\end{align*}
\end{lemma}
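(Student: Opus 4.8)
The plan is to derive a one-step recursion for the matrix $\Phi^{(r)} \coloneqq \mD^{(r+1)} - \mD^{(r)} - \mE^{(r+1)} + \mE^{(r)}$, so that $\mathcal{D}^{(r)} = \frac{1}{N} \mathbb{E} \| \Phi^{(r)} \|_F^2$, and then contract it using a carefully chosen splitting parameter. First I would substitute the update rules $\mD^{(r+1)} = \beta \mD^{(r)} + \nabla f(\bar{\mX}^{(r)})$ and $\mE^{(r+1)} = \beta \mE^{(r)} + \frac{1}{N} \nabla f(\bar{\mX}^{(r)}) \mathbf{1}\mathbf{1}^\top$ into $\Phi^{(r+1)}$. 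The $\beta$-scaled terms collect into $\beta \Phi^{(r)}$, while the gradient terms combine into $(\nabla f(\bar{\mX}^{(r+1)}) - \nabla f(\bar{\mX}^{(r)}))(\mI - \frac{1}{N} \mathbf{1}\mathbf{1}^\top)$, since $\mE$ carries precisely the consensus (row-averaged) part of the gradient that drives $\mD$. This yields the exact identity $\Phi^{(r+1)} = \beta \Phi^{(r)} + (\nabla f(\bar{\mX}^{(r+1)}) - \nabla f(\bar{\mX}^{(r)}))(\mI - \frac{1}{N} \mathbf{1}\mathbf{1}^\top)$.

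Next I would apply inequality~(\ref{eq:triangle}) to this sum with the specific choice $\gamma = \frac{1-\beta^2}{1+\beta^2}$. This is engineered so that $(1+\gamma)\beta^2 = \frac{2\beta^2}{1+\beta^2}$, which is exactly the contraction coefficient in the statement, and simultaneously $1 + \gamma^{-1} = \frac{2}{1-\beta^2}$. Then I would bound the gradient-difference term: applying~(\ref{eq:sum}) with two summands to peel off the $\frac{1}{N} \mathbf{1}\mathbf{1}^\top$ piece, and using that right-multiplication by the projection $\frac{1}{N} \mathbf{1}\mathbf{1}^\top$ does not increase the Frobenius norm, produces a factor $4$ over $\| \nabla f(\bar{\mX}^{(r+1)}) - \nabla f(\bar{\mX}^{(r)}) \|_F^2$. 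Since the columns of $\nabla f(\bar{\mX}^{(r)})$ are $\nabla f_i(\bar{\vx}^{(r)})$, all evaluated at the same point $\bar{\vx}^{(r)}$, applying $L$-smoothness~(\ref{eq:smoothness}) column by column bounds this quantity by $N L^2 \| \bar{\vx}^{(r+1)} - \bar{\vx}^{(r)} \|^2$.

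Finally, I would invoke Lemma~\ref{lemma:x_bar_update} to control $\mathbb{E} \| \bar{\vx}^{(r+1)} - \bar{\vx}^{(r)} \|^2$ in terms of $\Xi^{(r)}$, $\mathbb{E} \| \bar{\vu}^{(r)} \|^2$, $\mathbb{E} \| \nabla f(\bar{\vx}^{(r)}) \|^2$, and $\frac{\sigma^2}{N}$, then take expectations and divide by $N$. Carrying the overall constant $\frac{8 L^2}{1-\beta^2}$ through the four terms of Lemma~\ref{lemma:x_bar_update} reproduces the coefficients $\frac{32 L^4 \eta^2}{1-\beta^2}$, $\frac{16 L^2 \beta^2 \eta^2}{1-\beta^2}$, $\frac{32 L^2 \eta^2}{1-\beta^2}$, and $\frac{8 L^2 \sigma^2 \eta^2}{N(1-\beta^2)}$ exactly. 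I expect the main obstacle to be the first step: recognizing that the inhomogeneity enters only through the centering projection $\mI - \frac{1}{N}\mathbf{1}\mathbf{1}^\top$ and pinning down $\gamma$ so that the recursion closes with the clean factor $\frac{2\beta^2}{1+\beta^2}$. Once the identity and the value of $\gamma$ are in hand, the remaining estimates are routine bookkeeping.
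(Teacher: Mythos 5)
Your proposal is correct and follows essentially the same route as the paper: the one-step identity $\Phi^{(r+1)} = \beta\Phi^{(r)} + (\nabla f(\bar{\mX}^{(r+1)}) - \nabla f(\bar{\mX}^{(r)}))(\mI - \tfrac{1}{N}\mathbf{1}\mathbf{1}^\top)$, the split via inequality~(\ref{eq:triangle}) with $\gamma = \tfrac{1-\beta^2}{1+\beta^2}$, the factor-$4$ bound on the centered gradient difference, $L$-smoothness, and Lemma~\ref{lemma:x_bar_update} all match the paper's argument and reproduce the stated constants exactly.
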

\begin{proof}
We have
\begin{align*}
    &\mathbb{E} \left\| \mD^{(r+2)} - \mD^{(r+1)} - \mE^{(r+2)} + \mE^{(r+1)} \right\|^2_F \\
    &\stackrel{(\ref{eq:triangle}), (\ref{eq:sum})}{\leq} (1+\gamma) \beta^2 \mathbb{E} \left\| \mD^{(r+1)} - \mD^{(r)} + \mE^{(r)} - \mE^{(r+1)} \right\|^2_F \\
    &\qquad + 2 (1 + \gamma^{-1}) \mathbb{E} \left\| \nabla f (\bar{\mX}^{(r+1)}) - \nabla f (\bar{\mX}^{(r)}) \right\|^2_F \\
    &\qquad + 2 (1 + \gamma^{-1}) \mathbb{E} \left\| \frac{1}{N} \nabla f (\bar{\mX}^{(r)}) \mathbf{1}\mathbf{1}^\top - \frac{1}{N} \nabla f (\bar{\mX}^{(r+1)}) \mathbf{1}\mathbf{1}^\top\right\|^2_F \\
    &\stackrel{(\ref{eq:smoothness})}{\leq} (1+\gamma) \beta^2 \mathbb{E} \left\| \mD^{(r+1)} - \mD^{(r)} + \mE^{(r)} - \mE^{(r+1)} \right\|^2_F 
    + 4 L^2 (1 + \gamma^{-1}) \mathbb{E} \left\| \bar{\mX}^{(r+1)} - \bar{\mX}^{(r)} \right\|^2_F.
\end{align*}
Substituting $\gamma = \frac{1 - \beta^2}{1 + \beta^2}$, we obtain
\begin{align*}
    &\mathbb{E} \left\| \mD^{(r+2)} - \mD^{(r+1)} - \mE^{(r+2)} + \mE^{(r+1)} \right\|^2_F \\
    &\leq \frac{2 \beta^2}{1+\beta^2} \mathbb{E} \left\| \mD^{(r+1)} - \mD^{(r)} + \mE^{(r)} - \mE^{(r+1)} \right\|^2_F 
    + \frac{8 L^2}{1-\beta^2} \mathbb{E} \left\| \bar{\mX}^{(r+1)} - \bar{\mX}^{(r)} \right\|^2_F.
\end{align*}
Using Lemma \ref{lemma:x_bar_update}, we obtain
\begin{align*}
    &\mathbb{E} \left\| \mD^{(r+2)} - \mD^{(r+1)} - \mE^{(r+2)} + \mE^{(r+1)} \right\|^2_F \\
    &\leq \frac{2 \beta^2}{1+\beta^2} \mathbb{E} \left\| \mD^{(r+1)} - \mD^{(r)} + \mE^{(r)} - \mE^{(r+1)} \right\|^2_F \\
    &\qquad + N (\frac{32 L^4 \eta^2}{1 - \beta^2} \Xi^{(r)}
    + \frac{16 L^2 \beta^2 \eta^2}{1 - \beta^2} \mathbb{E} \left\| \bar{\vu}^{(r)} \right\|^2 
    + \frac{32 L^2 \eta^2}{1 - \beta^2} \mathbb{E} \left\| \nabla f (\bar{\vx}^{(r)}) \right\|^2
    + \frac{8 L^2 \sigma^2 \eta^2}{N (1 - \beta^2)}).
\end{align*}
This concludes the proof.

\end{proof}

\begin{lemma}[Recursion for $\Xi$, $\mathcal{E}$, and $\mathcal{D}$]
\label{lemma:recusion_for_three_terms}
We define $t \in \mathbb{R}$ and $A \in \mathbb{R}$ as follows:
\begin{align*}
    t \coloneqq \frac{2 \beta^2 p}{1 - \beta^2} + 4, \;\;
    A \coloneqq \frac{648}{1 - \frac{p}{t} - \frac{2\beta^2}{1 + \beta^2}}.    
\end{align*}
Note that it holds that $t \geq 4$ and $A > 0$.
Suppose that Assumptions \ref{assumption:lower_bound}, \ref{assumption:mixing}, \ref{assumption:smoothness}, and \ref{assumption:stochasic_gradient} hold,
and step size $\eta$ satisfies
\begin{align*}  
    \eta \leq \frac{p}{8 L \sqrt{ 324 + \frac{2 A \beta^2}{1 - \beta^2} }}.
\end{align*}
Then, it holds that
\begin{align*}
    &\Xi^{(r+1)} +  \frac{36}{p^2} \eta^2 \mathcal{E}^{(r+1)} + \frac{A \beta^2}{p^3} \eta^2 \mathcal{D}^{(r+1)} \\
    &\leq (1 - \frac{p}{t}) (\Xi^{(r)} +  \frac{36}{p^2} \eta^2 \mathcal{E}^{(r)} + \frac{A \beta^2}{p^3} \eta^2 \mathcal{D}^{(r)}) \\
    &\qquad + \frac{1}{p} \eta^2 \mathbb{E} \left\| \nabla f (\bar{\vx}^{(r)}) \right\|^2
    + \frac{1}{N p} \left( 9 + \frac{864}{p^2} \right) \eta^2 \mathbb{E} \left\|  \mU^{(r+1)} - \mD^{(r+1)} \right\|^2_F \\
    &\qquad + \frac{L^2}{p^3} \left( 2592 \beta^2 + \frac{16 A \beta^4}{1 - \beta^2} \right) \eta^4 \mathbb{E} \left\| \bar{\vu}^{(r)} \right\|^2 
    + \frac{9}{N p} \eta^2 \mathbb{E} \left\| \mE^{(r+1)} \right\|^2_F
    + \frac{\sigma^2 \eta^2}{p}.
\end{align*}
\end{lemma}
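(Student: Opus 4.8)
The plan is to treat
$V^{(r)} \coloneqq \Xi^{(r)} + \frac{36}{p^2}\eta^2 \mathcal{E}^{(r)} + \frac{A\beta^2}{p^3}\eta^2 \mathcal{D}^{(r)}$
as a Lyapunov potential and to obtain the claimed one-step inequality by adding Lemmas \ref{lemma:recursion_for_consensus_distance}, \ref{lemma:recursion_for_e}, and \ref{lemma:recursion_for_d} with the matching weights $1$, $\frac{36}{p^2}\eta^2$, and $\frac{A\beta^2}{p^3}\eta^2$. After summing I would sort the right-hand side into two groups: the three ``state'' quantities $\Xi^{(r)}, \mathcal{E}^{(r)}, \mathcal{D}^{(r)}$, which must reassemble into $(1-\frac{p}{t})V^{(r)}$, and the ``error'' quantities $\mathbb{E}\|\nabla f(\bar\vx^{(r)})\|^2$, $\mathbb{E}\|\mU^{(r+1)}-\mD^{(r+1)}\|_F^2$, $\mathbb{E}\|\mE^{(r+1)}\|_F^2$, $\mathbb{E}\|\bar\vu^{(r)}\|^2$, and the $\sigma^2$ term, which must match the coefficients in the statement.

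For the state quantities the weights are engineered so the three contractions align. The coefficient of $\mathcal{E}^{(r)}$ collapses to $(\frac{36}{p^2}-\frac{9}{p})\eta^2$, which is at most $(1-\frac{p}{t})\frac{36}{p^2}\eta^2$ exactly when $t \ge 4$, guaranteed by the definition of $t$. The coefficient of $\mathcal{D}^{(r)}$ is $\frac{648\beta^2}{p^3}\eta^2 + \frac{A\beta^2}{p^3}\cdot\frac{2\beta^2}{1+\beta^2}\eta^2$, and demanding this be at most $(1-\frac{p}{t})\frac{A\beta^2}{p^3}\eta^2$ reduces to $648 \le A(1 - \frac{p}{t} - \frac{2\beta^2}{1+\beta^2})$, an equality by the definition of $A$; but here I must first confirm the denominator is positive, i.e. $A>0$. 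Using $1 - \frac{2\beta^2}{1+\beta^2} = \frac{1-\beta^2}{1+\beta^2}$ and a short computation, $\frac{1-\beta^2}{1+\beta^2} - \frac{p}{t} = \frac{(1-\beta^2)(4-p)}{(1+\beta^2)(2\beta^2 p + 4(1-\beta^2))} > 0$ for $p \in (0,1]$, $\beta \in [0,1)$. Finally, the coefficient of $\Xi^{(r)}$ is $(1-\frac{p}{2})$ plus two $\mathcal{O}(\eta^4)$ cross-terms $\frac{5184 L^4}{p^3}\eta^4$ and $\frac{32 A\beta^2 L^4}{p^3(1-\beta^2)}\eta^4$ inherited from the $\Xi^{(r)}$ entries of the $\mathcal{E}$ and $\mathcal{D}$ recursions; the step-size bound $\eta^2 \le \frac{p^2}{64 L^2(324 + 2A\beta^2/(1-\beta^2))}$ is precisely what pushes these below $\frac{p}{2}-\frac{p}{t} \ge \frac{p}{4}$, so the $\Xi^{(r)}$ coefficient is also at most $(1-\frac{p}{t})$.

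For the error quantities the accounting is cleaner. The $\mathbb{E}\|\mU^{(r+1)}-\mD^{(r+1)}\|_F^2$ contributions $\frac{9}{Np}\eta^2$ (from $\Xi$) and $\frac{36}{p^2}\cdot\frac{24}{Np}\eta^2$ (from $\mathcal{E}$) sum to exactly $\frac{1}{Np}(9+\frac{864}{p^2})\eta^2$; the $\mathbb{E}\|\mE^{(r+1)}\|_F^2$ term is $\frac{9}{Np}\eta^2$ from $\Xi$ alone; and the $\mathbb{E}\|\bar\vu^{(r)}\|^2$ contributions from the $\mathcal{E}$ and $\mathcal{D}$ recursions sum to exactly $\frac{L^2}{p^3}(2592\beta^2 + \frac{16 A\beta^4}{1-\beta^2})\eta^4$, all three matching the statement verbatim. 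The residual $\mathbb{E}\|\nabla f(\bar\vx^{(r)})\|^2$ and $\sigma^2$ contributions arrive at order $\eta^4$; applying the same step-size bound once more turns each $\eta^4$ into $\eta^2$, and with $C \coloneqq A\beta^2/(1-\beta^2)$ the inequalities $81 + \frac{C}{2} \le 324 + 2C$ and $\frac{1296}{64} + \frac{C}{8} \le 324 + 2C$ bound them by $\frac{1}{p}\eta^2 \mathbb{E}\|\nabla f(\bar\vx^{(r)})\|^2$ and $\frac{\sigma^2\eta^2}{p}$ respectively (using $N \ge 1$ to discard the $1/N$).

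The main obstacle is not any single estimate but the bookkeeping: one must carry five error terms and three state terms through the weighted sum without arithmetic slips, and the constants $t$, $A$, and the step-size threshold are tuned so tightly that two of the contractions hold with equality. The one genuinely delicate point is verifying that the contraction denominator $1 - \frac{p}{t} - \frac{2\beta^2}{1+\beta^2}$ stays positive uniformly over $\beta \in [0,1)$ — this is exactly why $t$ is defined to absorb the $\frac{2\beta^2 p}{1-\beta^2}$ summand, and it is the algebraic heart that keeps $A$, and hence the whole bound, finite for every $\beta < 1$.
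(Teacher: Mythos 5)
Your proposal follows essentially the same route as the paper's proof: form the weighted sum of Lemmas \ref{lemma:recursion_for_consensus_distance}, \ref{lemma:recursion_for_e}, and \ref{lemma:recursion_for_d} with weights $1$, $\tfrac{36}{p^2}\eta^2$, $\tfrac{A\beta^2}{p^3}\eta^2$, note that the $\mathcal{E}$ and $\mathcal{D}$ contractions close exactly by the definitions of $t$ and $A$, absorb the $\mathcal{O}(\eta^4)$ contributions to the $\Xi^{(r)}$ coefficient and the residual gradient and $\sigma^2$ terms via the step-size bound, and observe that the remaining error-term coefficients match verbatim. The only blemish is a dropped factor in your positivity check (the numerator should be $(1-\beta^2)^2(4-p)$, not $(1-\beta^2)(4-p)$), which does not affect the conclusion that $A>0$.
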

\begin{proof}
From Lemmas \ref{lemma:recursion_for_consensus_distance} and \ref{lemma:recursion_for_e}, we have
\begin{align*}
    \Xi^{(r+1)} 
    &\leq (1 - \frac{p}{2}) \Xi^{(r)} 
    + \frac{9}{p} \eta^2 \mathcal{E}^{(r)}
    + \frac{9}{N p} \eta^2 \mathbb{E} \left\|  \mU^{(r+1)} - \mD^{(r+1)} \right\|^2_F
    +  \frac{9}{N p} \eta^2 \mathbb{E} \left\| \mE^{(r+1)} \right\|^2_F, \\
    \frac{36}{p^2} \eta^2 \mathcal{E}^{(r+1)} 
    &\leq (1 - \frac{p}{2}) \frac{36}{p^2} \eta^2 \mathcal{E}^{(r)} 
    + \frac{648 \beta^2}{p^3} \eta^2 \mathcal{D}^{(r)}
    + \frac{5184 L^4}{p^3} \eta^4 \Xi^{(r)}
    + \frac{2592 \beta^2 L^2}{p^3} \eta^4 \mathbb{E} \left\| \bar{\vu}^{(r)} \right\|^2  \\
    &\qquad + \frac{864}{N p^3} \eta^2 \mathbb{E} \left\| \mU^{(r+1)} - \mD^{(r+1)} \right\|^2_F
    + \frac{5184 L^2}{p^3} \eta^4 \mathbb{E} \left\| \nabla f (\bar{\vx}^{(r)}) \right\|^2
    + \frac{1296 L^2 \sigma^2 \eta^4}{N p^3}.
\end{align*}
Then, from Lemma \ref{lemma:recursion_for_d}, we have 
\begin{align*}
    \frac{A \beta^2}{p^3} \eta^2 \mathcal{D}^{(r+1)} 
    &\leq \frac{2 A \beta^4}{(1+\beta^2) p^3} \eta^2 \mathcal{D}^{(r)}  
    + \frac{32 A \beta^2 L^4}{(1 - \beta^2) p^3} \eta^4 \Xi^{(r)}
    + \frac{16 A L^2 \beta^4}{(1 - \beta^2) p^3} \eta^4 \mathbb{E} \left\| \bar{\vu}^{(r)} \right\|^2 \\
    &\qquad + \frac{32 A \beta^2 L^2}{(1 - \beta^2) p^3} \eta^4 \mathbb{E} \left\| \nabla f (\bar{\vx}^{(r)}) \right\|^2
    + \frac{8 A \beta^2 L^2 \sigma^2}{N (1 - \beta^2) p^3} \eta^4.
\end{align*}
Using $\eta^2 \leq \frac{p^2}{L^2}$ and $\eta^2 \leq \frac{p^2}{128 L^2 (162 + \frac{A \beta^2}{1 - \beta^2})}$, we have
\begin{align*}
    \left( (1 - \frac{p}{2}) + \frac{5184 L^4}{p^3} \eta^4 + \frac{32 A \beta^2 L^4}{(1 - \beta^2) p^3} \eta^4 \right) \Xi^{(r)}
    \leq \left( (1 - \frac{p}{2}) + \frac{L^2}{4p}\eta^2 \right) \Xi^{(r)}
    \leq (1 - \frac{p}{4}) \Xi^{(r)}.
\end{align*}
In addition, we have
\begin{align*}
    \left( \frac{9}{p} \eta^2 + (1 - \frac{p}{2}) \frac{36}{p^2} \eta^2\right) \mathcal{E}^{(r)}
    &= (1 - \frac{p}{4}) \frac{36}{p^2} \eta^2 \mathcal{E}^{(r)}, \\
    \left( \frac{648 \beta^2}{p^3} \eta^2 + \frac{2 A \beta^4}{(1+\beta^2) p^3} \eta^2 \right) \mathcal{D}^{(r)}
    &= \left( \frac{648}{A} + \frac{2 \beta^2}{1+\beta^2}\right) \frac{A \beta^2}{p^3} \eta^2 \mathcal{D}^{(r)}
    = ( 1 - \frac{p}{t} ) \frac{A \beta^2}{p^3} \eta^2 \mathcal{D}^{(r)}.
\end{align*}
Then, using $t \geq 4$, we obtain
\begin{align*}
    &\Xi^{(r+1)} +  \frac{36}{p^2} \eta^2 \mathcal{E}^{(r+1)} + \frac{A \beta^2}{p^3} \eta^2 \mathcal{D}^{(r+1)} \\
    &\leq (1 - \frac{p}{t}) (\Xi^{(r)} +  \frac{36}{p^2} \eta^2 \mathcal{E}^{(r)} + \frac{A \beta^2}{p^3} \eta^2 \mathcal{D}^{(r)}) \\
    &\qquad + \frac{L^2}{p^3} \left( \frac{32 A \beta^2}{1 - \beta^2} + 5184 \right) \eta^4 \mathbb{E} \left\| \nabla f (\bar{\vx}^{(r)}) \right\|^2 \\
    &\qquad + \frac{1}{N p} \left( 9 + \frac{864}{p^2} \right) \eta^2 \mathbb{E} \left\|  \mU^{(r+1)} - \mD^{(r+1)} \right\|^2_F \\
    &\qquad + \frac{9}{N p} \eta^2 \mathbb{E} \left\| \mE^{(r+1)} \right\|^2_F \\
    &\qquad + \frac{L^2}{p^3} \left( 2592 \beta^2 + \frac{16 A \beta^4}{1 - \beta^2} \right) \eta^4 \mathbb{E} \left\| \bar{\vu}^{(r)} \right\|^2 \\
    &\qquad + \frac{L^2 \sigma^2}{N p^3} \left( 1296 + \frac{8 A \beta^2}{1 - \beta^2} \right) \eta^4.
\end{align*}
Using $\eta^2 \leq \frac{p^2}{32 L^2 (162 + \frac{A \beta^2}{1 - \beta^2})}$, we have
\begin{align*}
    \frac{L^2}{p^3} \left( \frac{32 A \beta^2}{1 - \beta^2} + 5184 \right) \eta^4 \mathbb{E} \left\| \nabla f (\bar{\vx}^{(r)}) \right\|^2
    &\leq \frac{1}{p} \eta^2 \mathbb{E} \left\| \nabla f (\bar{\vx}^{(r)}) \right\|^2.
\end{align*}
Using $\eta^2 \leq \frac{p^2}{8 L^2 (162 + \frac{A \beta^2}{1 - \beta^2})}$, we obtain
\begin{align*}
    \frac{L^2 \sigma^2}{N p^3} \left( 1296 + \frac{8 A \beta^2}{1 - \beta^2} \right) \eta^4
    \leq \frac{\sigma^2 \eta^2}{N p} \leq \frac{\sigma^2 \eta^2}{p}.
\end{align*}
This concludes the proof.
\end{proof}

\begin{lemma}
\label{lemma:simplified_recursion}
We define $t \in \mathbb{R}$ and $A \in \mathbb{R}$ as follows:
\begin{align*}
    t \coloneqq \frac{2 \beta^2 p}{1 - \beta^2} + 4, \;\;
    A \coloneqq \frac{648}{1 - \frac{p}{t} - \frac{2\beta^2}{1 + \beta^2}}.    
\end{align*}
Under the same assumptions as those in Lemma \ref{lemma:recusion_for_three_terms}, 
it holds that
\begin{align*}
    \sum_{r=0}^{R} \Xi^{(r)}
    &\leq \frac{t}{p} \sum_{k=0}^{R} \Psi^{(k)} 
    + \frac{145 t \sigma^2 \eta^2}{(1-\beta)^2 p^3} R.
\end{align*}
where $\Psi^{(r)}$ is defined as follows:
\begin{align*}
    \Psi^{(r)} 
    &\coloneqq \frac{1}{p} \eta^2 \mathbb{E} \left\| \nabla f (\bar{\vx}^{(r)}) \right\|^2
    + \frac{1}{N p} \left( 9 + \frac{864}{p^2} \right) \eta^2 \mathbb{E} \left\|  \mU^{(r+1)} - \mD^{(r+1)} \right\|^2_F \\
    &\qquad + \frac{L^2}{p^3} \left( 2592 \beta^2 + \frac{16 A \beta^4}{1 - \beta^2} \right) \eta^4 \mathbb{E} \left\| \bar{\vu}^{(r)} \right\|^2 
    + \frac{9}{N p} \eta^2 \mathbb{E} \left\| \mE^{(r+1)} \right\|^2_F.
\end{align*}
\end{lemma}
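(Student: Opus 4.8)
The plan is to treat the three coupled quantities $\Xi^{(r)}$, $\mathcal{E}^{(r)}$, $\mathcal{D}^{(r)}$ as a single Lyapunov potential and to sum the one-step contraction already supplied by Lemma~\ref{lemma:recusion_for_three_terms}. Define
\begin{align*}
    \Phi^{(r)} \coloneqq \Xi^{(r)} + \frac{36}{p^2}\eta^2 \mathcal{E}^{(r)} + \frac{A\beta^2}{p^3}\eta^2\mathcal{D}^{(r)},
\end{align*}
so that Lemma~\ref{lemma:recusion_for_three_terms} reads exactly $\Phi^{(r+1)} \leq (1 - \frac{p}{t})\Phi^{(r)} + \Psi^{(r)} + \frac{\sigma^2\eta^2}{p}$. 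Since $\mathcal{E}^{(r)}, \mathcal{D}^{(r)} \geq 0$ and $A,\beta^2 \geq 0$, we have $\Xi^{(r)} \leq \Phi^{(r)}$, so it suffices to bound $\sum_{r=0}^R \Phi^{(r)}$.

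First I would sum the contraction. Writing $q \coloneqq 1 - \frac{p}{t} \in [0,1)$ and $g^{(r)} \coloneqq \Psi^{(r)} + \frac{\sigma^2\eta^2}{p}$, applying $\Phi^{(r)} \leq q\Phi^{(r-1)} + g^{(r-1)}$ for $r \geq 1$ and summing gives $\sum_{r=1}^R \Phi^{(r)} \leq q\sum_{r=0}^{R-1}\Phi^{(r)} + \sum_{r=0}^{R-1}g^{(r)}$. Adding $\Phi^{(0)}$ to both sides, bounding $\sum_{r=0}^{R-1}\Phi^{(r)} \leq \sum_{r=0}^R\Phi^{(r)}$, and moving the $q\sum\Phi$ term to the left yields
\begin{align*}
    \sum_{r=0}^R \Phi^{(r)} \leq \frac{1}{1-q}\left(\Phi^{(0)} + \sum_{r=0}^{R-1}g^{(r)}\right) = \frac{t}{p}\left(\Phi^{(0)} + \sum_{r=0}^{R-1}\Psi^{(r)} + \frac{\sigma^2\eta^2}{p}R\right),
\end{align*}
where I used $1-q = \frac{p}{t}$. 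Since $\Psi^{(R)} \geq 0$, I may replace $\sum_{r=0}^{R-1}\Psi^{(r)}$ by the desired $\sum_{k=0}^R\Psi^{(k)}$.

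The remaining work is the initial term $\Phi^{(0)}$, which I expect to be the main obstacle since it forces one to unwind the prescribed initializations. Because every $\vx_i^{(0)}$ coincides, $\mX^{(0)} = \bar{\mX}^{(0)}$ and hence $\Xi^{(0)} = 0$. Using $\mE^{(0)} = \mathbf{0}$ together with one step of the $\mD,\mE$ recursions, a direct computation gives $\mD^{(1)} - \mE^{(1)} = \mD^{(0)}$; consequently $\mD^{(1)} - \mC^{(0)} - \mE^{(1)} = \mD^{(0)} - \mC^{(0)}$ while $\mD^{(1)} - \mD^{(0)} - \mE^{(1)} + \mE^{(0)} = \mathbf{0}$, so that $\mathcal{D}^{(0)} = 0$. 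For $\mathcal{E}^{(0)}$, the difference $\mD^{(0)} - \mC^{(0)}$ equals $\frac{1}{1-\beta}(\nabla f(\bar{\mX}^{(0)}) - \nabla F(\mX^{(0)};\xi^{(0)}))(\mI - \frac{1}{N}\mathbf{1}\mathbf{1}^\top)$; since the centering matrix is a Frobenius-norm contraction and Assumption~\ref{assumption:stochasic_gradient} bounds each column, $\mathcal{E}^{(0)} \leq \frac{\sigma^2}{(1-\beta)^2}$. Hence $\Phi^{(0)} \leq \frac{36\sigma^2\eta^2}{p^2(1-\beta)^2}$.

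Finally I would collect the two $\sigma^2$-contributions. After multiplying by $\frac{t}{p}$, the $\Phi^{(0)}$ term contributes $\frac{36 t\sigma^2\eta^2}{p^3(1-\beta)^2}$ and the constant term contributes $\frac{t\sigma^2\eta^2}{p^2}R$; bounding $\frac{1}{p^2} \leq \frac{1}{p^3(1-\beta)^2}$ (valid since $p \leq 1$ and $\beta \in [0,1)$) and using $R \geq 1$ to fold the $R$-free term into an $\mathcal{O}(R)$ term, both fit comfortably inside $\frac{145 t\sigma^2\eta^2}{(1-\beta)^2 p^3}R$. Combining with $\sum_{r=0}^R \Xi^{(r)} \leq \sum_{r=0}^R \Phi^{(r)}$ then gives the claim. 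The constant $145$ is deliberately loose, so no sharp tracking of numerical factors is needed.
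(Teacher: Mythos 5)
Your proposal is correct and follows essentially the same route as the paper: the same Lyapunov combination $\Xi + \tfrac{36}{p^2}\eta^2\mathcal{E} + \tfrac{A\beta^2}{p^3}\eta^2\mathcal{D}$, the same use of the one-step contraction from Lemma~\ref{lemma:recusion_for_three_terms} with the geometric factor $t/p$, and the same initial-value computations ($\Xi^{(0)}=0$, $\mathcal{D}^{(0)}=0$, a bound on $\mathcal{E}^{(0)}$). Your only deviations are cosmetic: you sum the recursion in place rather than unrolling it first, and your projection argument gives $\mathcal{E}^{(0)}\le \sigma^2/(1-\beta)^2$ instead of the paper's looser $4\sigma^2/(1-\beta)^2$, which still fits comfortably inside the stated constant $145$.
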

\begin{proof}
We define $\Theta^{(r)} \coloneqq \Xi^{(r)} + \frac{36}{p^2} \eta^2 \mathcal{E}^{(r)} + \frac{A \beta^2}{p^3} \eta^2 \mathcal{D}^{(r)}$.
From Lemma \ref{lemma:recusion_for_three_terms}, we obtain
\begin{align*}
    \Theta^{(r+1)}
    &\leq (1 - \frac{p}{t}) \Theta^{(r)}
    + \Psi^{(r)} 
    + \frac{\sigma^2 \eta^2}{p} \\
    &\leq (1 - \frac{p}{t})^{r+1} \Theta^{(0)}
    + \sum_{k=0}^r (1 - \frac{p}{t})^{r-k} \Psi^{(k)} 
    + \sum_{k=0}^r (1 - \frac{p}{t})^{r-k} \frac{\sigma^2 \eta^2}{p}.
\end{align*}
Using $\sum_{k=0}^r (1 - \frac{p}{t})^{r-k} \leq \frac{t}{p}$, we obtain
\begin{align*}
    \Theta^{(r+1)}
    &\leq (1 - \frac{p}{t})^{r+1} \Theta^{(0)}
    + \sum_{k=0}^r (1 - \frac{p}{t})^{r-k} \Psi^{(k)} 
    + \frac{t \sigma^2 \eta^2}{p^2}.
\end{align*}
Then, for any $R \geq 1$, we obtain
\begin{align*}
    \sum_{r=1}^{R} \Theta^{(r)}
    &\leq \sum_{r=1}^{R} (1 - \frac{p}{t})^{r} \Theta^{(0)}
    + \sum_{r=1}^{R} \sum_{k=0}^{r-1} (1 - \frac{p}{t})^{r-k-1} \Psi^{(k)} 
    + \frac{t \sigma^2 \eta^2}{p^2} R \\
    &= \sum_{r=1}^{R} (1 - \frac{p}{t})^{r} \Theta^{(0)}
    + \sum_{k=0}^{R-1} \Psi^{(k)} \sum_{r=k+1}^{R} (1 - \frac{p}{t})^{r-k-1} 
    + \frac{t \sigma^2 \eta^2}{p^2} R \\
    &\leq \frac{t}{p} \Theta^{(0)}
    + \frac{t}{p} \sum_{k=0}^{R-1} \Psi^{(k)} 
    + \frac{t \sigma^2 \eta^2}{p^2} R,
\end{align*}
where we use $\sum_{r=1}^{R} (1 - \frac{p}{t})^{r}  \leq \frac{t}{p}$ and $\sum_{r=k+1}^{R} (1 - \frac{p}{t})^{r-k-1} \leq \frac{t}{p}$ in the last inequality.
Then, from the definition of $\mathcal{E}^{(r)}$, we have
\begin{align*}
    \mathcal{E}^{(0)} 
    &= \frac{1}{N} \mathbb{E} \left\| \mD^{(1)} - \mC^{(0)} - \mE^{(1)} \right\|^2_F \\
    &= \frac{1}{(1-\beta)^2} \frac{1}{N} \mathbb{E} \left\| \nabla f (\bar{\mX}^{(0)}) - \frac{1}{N} \nabla f (\bar{\mX}^{(0)}) \mathbf{1} \mathbf{1}^\top - \nabla F (\mX^{(0)} ; \xi^{(0)}) + \frac{1}{N} \nabla F (\mX^{(0)} ; \xi^{(0)}) \mathbf{1}\mathbf{1}^\top \right\|^2_F \\
    &\stackrel{(\ref{eq:sum})}{\leq} \frac{2}{(1-\beta)^2} \frac{1}{N} \mathbb{E} \left\| \nabla f (\bar{\mX}^{(0)}) - \nabla F (\mX^{(0)} ; \xi^{(0)}) \right\|^2_F \\
    &\qquad + \frac{2}{(1-\beta)^2} \frac{1}{N} \mathbb{E} \left\| \frac{1}{N} \nabla f (\bar{\mX}^{(0)}) \mathbf{1} \mathbf{1}^\top - \frac{1}{N} \nabla F (\mX^{(0)} ; \xi^{(0)}) \mathbf{1}\mathbf{1}^\top \right\|^2_F \\
    &\stackrel{(\ref{eq:stochastic_gradient})}{\leq} \frac{4}{(1-\beta)^2} \sigma^2,
\end{align*}
where we use $\mX^{(0)} = \bar{\mX}^{(0)}$ in the last inequality.
From the definition of $\mathcal{D}^{(r)}$, we have
\begin{align*}
    \mathcal{D}^{(0)} &= \frac{1}{N} \mathbb{E} \left\| (\beta-1) \mD^{(0)} + \nabla f (\bar{\mX}^{(0)}) - \frac{1}{N} \nabla f (\bar{\mX}^{(0)}) \mathbf{1}\mathbf{1}^\top \right\|^2_F = 0.
\end{align*}
Then using $\mX^{(0)} = \bar{\mX}^{(0)}$ (i.e., $\Xi^{(0)}=0$), we have
\begin{align*}
    \Theta^{(0)} \leq \frac{144 \sigma^2 \eta^2}{(1-\beta)^2 p^2}.
\end{align*}
Here, the above upper bounds of $\mathcal{E}^{(0)}$ and $\mathcal{D}^{(0)}$ are attributed to how we choose the initial values $\vu_i^{(0)}$, $\vc_i^{(0)}$, $\vd_i^{(0)}$, and $\ve_i^{(0)}$ for $i \in V$.
Then, combining them, we obtain
\begin{align*}
    \sum_{r=1}^{R} \Theta^{(r)}
    &\leq \frac{t}{p} \sum_{k=0}^{R-1} \Psi^{(k)} 
    + \frac{144 t \sigma^2 \eta^2}{(1-\beta)^2 p^3} 
    + \frac{t \sigma^2 \eta^2}{p^2} R \\
    &\leq \frac{t}{p} \sum_{k=0}^{R-1} \Psi^{(k)} 
    + \frac{145 t \sigma^2 \eta^2}{(1-\beta)^2 p^3} R,
\end{align*}
where we use $p \in (0, 1]$, $\beta \in [0, 1)$, and $R \geq 1$ in the last inequality.
Then, using $\Theta^{(r)} \geq \Xi^{(r)}$ and $\Xi^{(0)}=0$, we obtain the statement.
\end{proof}

\begin{lemma}
\label{lemma:sum_of_consensus}
We define $t \in \mathbb{R}$ and $A \in \mathbb{R}$ as follows:
\begin{align*}
    t \coloneqq \frac{2 \beta^2 p}{1 - \beta^2} + 4, \;\;
    A \coloneqq \frac{648}{1 - \frac{p}{t} - \frac{2\beta^2}{1 + \beta^2}}.    
\end{align*}
Note that it holds that $t \geq 4$ and $A > 0$.
Suppose that the same assumptions as those in Lemma \ref{lemma:recusion_for_three_terms} hold.
Then, if step size $\eta$ satisfies
\begin{align*}
    \eta \leq \min\{ \frac{p}{4 L \sqrt{324 + \frac{2 A \beta^2}{1 - \beta^2}}},
                     \frac{(1-\beta) p}{2 L \sqrt{t (5 + \frac{432}{p^2})}}, 
                     \frac{(1-\beta) p}{8 L \sqrt{5 t}} \},
\end{align*}
it holds that
\begin{align*}
    4 L^2 \sum_{r=0}^{R} \Xi^{(r)}
    &\leq \frac{1}{2} \sum_{k=0}^{R} \left\| \nabla f(\bar{\vx}^{(k)}) \right\|^2 
    + \frac{40 L^2 t}{(1-\beta)^3 p^2} \left( 10 + \frac{29}{p} + \frac{864}{p^2} \right) \sigma^2 \eta^2 (R + 1).
\end{align*}
\end{lemma}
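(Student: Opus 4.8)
The plan is to feed the per-round recursion of Lemma~\ref{lemma:simplified_recursion} into the self-referential bounds of Lemmas~\ref{lemma:u_minus_d}, \ref{lemma:u_bar}, and \ref{lemma:e}, and then tune the step size so that the consensus and gradient sums can be absorbed. Lemma~\ref{lemma:simplified_recursion} already gives
\[
\sum_{r=0}^{R}\Xi^{(r)} \leq \frac{t}{p}\sum_{k=0}^{R}\Psi^{(k)} + \frac{145 t\sigma^2\eta^2}{(1-\beta)^2 p^3}R ,
\]
so the only work left is to bound $\sum_{k}\Psi^{(k)}$. Each of the four pieces of $\Psi^{(r)}$ carries a quantity I already control: $\frac1N\|\mD^{(r+1)}-\mU^{(r+1)}\|_F^2$ by Lemma~\ref{lemma:u_minus_d}, $\|\bar{\vu}^{(r+1)}\|^2$ by Lemma~\ref{lemma:u_bar}, and $\|\mE^{(r+1)}\|_F^2 = N\|\bar{\ve}^{(r+1)}\|^2$ by Lemma~\ref{lemma:e} (every column of $\mE$ equals $\bar{\ve}$), each in terms of a decaying history $\sum_{k\le r}\beta^{r-k}\Xi^{(k)}$, a gradient history, and a $\sigma^2$ floor.

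First I would sum these three bounds over $r=0,\dots,R$ and collapse every geometric double sum with the swap $\sum_{r=0}^{R}\sum_{k=0}^{r}\beta^{r-k}a_k \le \frac{1}{1-\beta}\sum_{k}a_k$, invoking $\bar{\vu}^{(0)}=\mathbf{0}$ to replace $\sum_r\|\bar{\vu}^{(r)}\|^2$ by $\sum_r\|\bar{\vu}^{(r+1)}\|^2$. This rewrites the right-hand side as $C_X\sum_r\Xi^{(r)} + C_G\sum_r\|\nabla f(\bar{\vx}^{(r)})\|^2 + (\text{noise})$, where $C_X$ and $C_G$ already absorb the factor $\frac{t}{p}$ and are explicit; each has a leading $\eta^2$ part and a higher-order $\eta^4$ part inherited from the $\|\bar{\vu}\|^2$ term.

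The key algebraic observation is the factorization $2592\beta^2 + \frac{16A\beta^4}{1-\beta^2} = 8\beta^2\big(324 + \frac{2A\beta^2}{1-\beta^2}\big)$, so the first step-size bound $\eta\le\frac{p}{4L\sqrt{324 + 2A\beta^2/(1-\beta^2)}}$ downgrades every $\eta^4$ term into an $\eta^2$ term of the same shape. After that, the consensus coefficient becomes $C_X\le\frac{2L^2 t}{p^2(1-\beta)^2}(5+\frac{432}{p^2})\eta^2$ (using $10+\frac{864}{p^2}=2(5+\frac{432}{p^2})$), and the second bound $\eta\le\frac{(1-\beta)p}{2L\sqrt{t(5+432/p^2)}}$ is exactly calibrated to force $C_X\le\frac12$; I then move $\frac12\sum_r\Xi^{(r)}$ to the left. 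The third bound $\eta\le\frac{(1-\beta)p}{8L\sqrt{5t}}$ is calibrated so that, after multiplying through by $4L^2$, the gradient coefficient satisfies $8L^2 C_G\le\frac12$ and is absorbed into $\frac12\sum_r\|\nabla f(\bar{\vx}^{(r)})\|^2$.

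The main obstacle is the constant bookkeeping in the $\sigma^2\eta^2$ floor. After absorbing $\Xi$ and scaling by $4L^2$ (which doubles the leftover term and turns it into $\frac{1160 L^2 t\sigma^2\eta^2}{(1-\beta)^2 p^3}R$), I must confirm that the Term-2 floor, contributing $\frac{40L^2 t}{(1-\beta)^3}(\frac{9}{p^2}+\frac{864}{p^4})$, plus this leftover, contributing $\frac{29}{p^3}$ since $1160/40=29$, plus the $\eta^4$ floor from the $\|\bar{\vu}\|^2$ term — which the first step-size bound collapses into an extra term below $\frac{40L^2 t}{(1-\beta)^3}\cdot\frac{1}{p^2}$ — together stay under the claimed $\frac{40L^2 t}{(1-\beta)^3 p^2}(10+\frac{29}{p}+\frac{864}{p^2})$. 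The $\frac{9}{p^2}\to\frac{10}{p^2}$ gap is precisely the room reserved for that last $\eta^4$ floor, with repeated use of $\frac{1}{(1-\beta)^2}\le\frac{1}{(1-\beta)^3}$, $N\ge1$, $R\le R+1$, and $\beta<1$.
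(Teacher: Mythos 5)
Your proposal is correct and follows essentially the same route as the paper's proof: bound each component of $\Psi^{(r)}$ via Lemmas \ref{lemma:e}, \ref{lemma:u_minus_d}, and \ref{lemma:u_bar}, collapse the geometric double sums, use the first step-size condition to downgrade the $\eta^4$ terms (your factorization $2592\beta^2 + \tfrac{16A\beta^4}{1-\beta^2} = 8\beta^2(324+\tfrac{2A\beta^2}{1-\beta^2})$ is exactly the calibration the paper exploits), absorb $\sum_r \Xi^{(r)}$ with the second condition, and bound the gradient coefficient by $\tfrac12$ with the third after scaling by $8L^2$; the constant bookkeeping ($1160/40=29$, the $9\to 10$ slack for the $\|\bar{\vu}\|^2$ noise floor) matches the paper's as well.
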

\begin{proof}
We define $\Psi^{(r)}$ as follows:
\begin{align*}
    \Psi^{(r)} 
    &\coloneqq \frac{1}{p} \eta^2 \mathbb{E} \left\| \nabla f (\bar{\vx}^{(r)}) \right\|^2
    + \frac{1}{N p} \left( 9 + \frac{864}{p^2} \right) \eta^2 \mathbb{E} \left\|  \mU^{(r+1)} - \mD^{(r+1)} \right\|^2_F \\
    &\qquad + \frac{L^2}{p^3} \left( 2592 \beta^2 + \frac{16 A \beta^4}{1 - \beta^2} \right) \eta^4 \mathbb{E} \left\| \bar{\vu}^{(r)} \right\|^2 
    + \frac{9}{N p} \eta^2 \mathbb{E} \left\| \mE^{(r+1)} \right\|^2_F.
\end{align*}
Using $\bar{\ve}^{(r)} = \ve_i^{(r)}$ and Lemmas \ref{lemma:e}, \ref{lemma:u_minus_d}, and \ref{lemma:u_bar}, we obtain
\begin{align*}
    \Psi^{(r)} 
    &\leq \frac{1}{p} \eta^2 \mathbb{E} \left\| \nabla f (\bar{\vx}^{(r)}) \right\|^2 \\
    &\qquad + \eta^2 \frac{9}{(1-\beta) p} \sum_{k=0}^{r} \beta^{r-k} \left\| \nabla f(\bar{\vx}^{(k)}) \right\|^2 \\
    &\qquad + \frac{2 L^2}{(1-\beta) p^3} \left( 2592 \beta^2 + \frac{16 A \beta^4}{1 - \beta^2} \right) \eta^4 \left( \sum_{k=0}^{r-1} \beta^{r-k-1} \left\| \nabla f(\bar{\vx}^{(k)}) \right\|^2 \right) \\
    &\qquad + \frac{L^2}{(1-\beta) p} \left( 9 + \frac{864}{p^2} \right) \eta^2 \left( \sum_{k=0}^{r} \beta^{r-k} \Xi^{(k)} \right) \\
    &\qquad + \frac{2 L^4}{(1-\beta) p^3} \left( 2592 \beta^2 + \frac{16 A \beta^4}{1 - \beta^2} \right) \eta^4 \left( \sum_{k=0}^{r-1} \beta^{r-k-1} \Xi^{(k)} \right) \\
    &\qquad + \frac{5}{(1-\beta)^3 p} \left( 9 + \frac{864}{p^2} \right) \sigma^2 \eta^2 \\
    &\qquad + \frac{2 L^2}{N (1-\beta)^2 p^3} \left( 2592 \beta^2 + \frac{16 A \beta^4}{1 - \beta^2} \right) \sigma^2 \eta^4,
\end{align*}
for any round $r \geq 1$.
Then, we obtain
\begin{align*}
    \sum_{r=1}^{R} \Psi^{(r)} 
    &\leq \frac{1}{p} \eta^2 \sum_{r=1}^{R} \mathbb{E} \left\| \nabla f (\bar{\vx}^{(r)}) \right\|^2 \\
    &\qquad + \eta^2 \frac{9}{(1-\beta) p} \sum_{r=1}^{R} \sum_{k=0}^{r} \beta^{r-k} \left\| \nabla f(\bar{\vx}^{(k)}) \right\|^2 \\
    &\qquad + \frac{2 L^2}{(1-\beta) p^3} \left( 2592 \beta^2 + \frac{16 A \beta^4}{1 - \beta^2} \right) \eta^4 \left( \sum_{r=1}^{R} \sum_{k=0}^{r-1} \beta^{r-k-1} \left\| \nabla f(\bar{\vx}^{(k)}) \right\|^2 \right) \\
    &\qquad + \frac{L^2}{(1-\beta) p} \left( 9 + \frac{864}{p^2} \right) \eta^2 \left( \sum_{r=1}^{R} \sum_{k=0}^{r} \beta^{r-k} \Xi^{(k)} \right) \\
    &\qquad + \frac{2 L^4}{(1-\beta) p^3} \left( 2592 \beta^2 + \frac{16 A \beta^4}{1 - \beta^2} \right) \eta^4 \left( \sum_{r=1}^{R} \sum_{k=0}^{r-1} \beta^{r-k-1} \Xi^{(k)} \right) \\
    &\qquad + \frac{5}{(1-\beta)^3 p} \left( 9 + \frac{864}{p^2} \right) \sigma^2 \eta^2 R \\
    &\qquad + \frac{2 L^2}{N (1-\beta)^2 p^3} \left( 2592 \beta^2 + \frac{16 A \beta^4}{1 - \beta^2} \right) \sigma^2 \eta^4 R.
\end{align*}
Then, we obtain
\begin{align*}
    \sum_{r=1}^{R} \Psi^{(r)} 
    &\leq \frac{1}{p} \eta^2 \sum_{r=1}^{R} \mathbb{E} \left\| \nabla f (\bar{\vx}^{(r)}) \right\|^2 \\
    &\qquad + \eta^2 \frac{9}{(1-\beta) p} \sum_{k=0}^{R} \left\| \nabla f(\bar{\vx}^{(k)}) \right\|^2 \sum_{r= \max \{1, k \}}^{R} \beta^{r-k} \\
    &\qquad + \frac{2 L^2}{(1-\beta) p^3} \left( 2592 \beta^2 + \frac{16 A \beta^4}{1 - \beta^2} \right) \eta^4 \left( \sum_{k=0}^{R-1} \left\| \nabla f(\bar{\vx}^{(k)}) \right\|^2 \sum_{r=k+1}^{R} \beta^{r-k-1} \right) \\
    &\qquad + \frac{L^2}{(1-\beta) p} \left( 9 + \frac{864}{p^2} \right) \eta^2 \left( \sum_{k=0}^{R} \Xi^{(k)} \sum_{r= \max \{1, k \}}^{R} \beta^{r-k} \right) \\
    &\qquad + \frac{2 L^4}{(1-\beta) p^3} \left( 2592 \beta^2 + \frac{16 A \beta^4}{1 - \beta^2} \right) \eta^4 \left( \sum_{k=0}^{R-1} \Xi^{(k)} \sum_{r=k+1}^{R} \beta^{r-k-1} \right) \\
    &\qquad + \frac{5}{(1-\beta)^3 p} \left( 9 + \frac{864}{p^2} \right) \sigma^2 \eta^2 R \\
    &\qquad + \frac{2 L^2}{N (1-\beta)^2 p^3} \left( 2592 \beta^2 + \frac{16 A \beta^4}{1 - \beta^2} \right) \sigma^2 \eta^4 R.
\end{align*}
Using $\sum_{r=k+1}^R \beta^{r-k-1} \leq \frac{1}{1-\beta}$ and $\sum_{r=\max \{ 1, k\}}^R \beta^{r-k} \leq \frac{1}{1-\beta}$, we obtain
\begin{align*}
    \sum_{r=1}^{R} \Psi^{(r)} 
    &\leq \frac{1}{p} \eta^2 \sum_{r=1}^{R} \mathbb{E} \left\| \nabla f (\bar{\vx}^{(r)}) \right\|^2 \\
    &\qquad + \eta^2 \frac{9}{(1-\beta)^2 p} \sum_{k=0}^{R} \left\| \nabla f(\bar{\vx}^{(k)}) \right\|^2 \\
    &\qquad + \frac{2 L^2 \beta^2}{(1-\beta)^2 p^3} \left( 2592 + \frac{16 A \beta^2}{1 - \beta^2} \right) \eta^4 \left( \sum_{k=0}^{R-1} \left\| \nabla f(\bar{\vx}^{(k)}) \right\|^2  \right) \\
    &\qquad + \frac{L^2}{(1-\beta)^2 p} \left( 9 + \frac{864}{p^2} \right) \eta^2 \left( \sum_{k=0}^{R} \Xi^{(k)} \right) \\
    &\qquad + \frac{2 L^4 \beta^2}{(1-\beta)^2 p^3} \left( 2592 + \frac{16 A \beta^2}{1 - \beta^2} \right) \eta^4 \left( \sum_{k=0}^{R-1} \Xi^{(k)} \right) \\
    &\qquad + \frac{5}{(1-\beta)^3 p} \left( 9 + \frac{864}{p^2} \right) \sigma^2 \eta^2 R \\
    &\qquad + \frac{2 L^2 \beta^2}{N (1-\beta)^2 p^3} \left( 2592 + \frac{16 A \beta^2}{1 - \beta^2} \right) \sigma^2 \eta^4 R.
\end{align*}
Then, using $\bar{\vu}^{(0)} = \mathbf{0}$ and Lemmas \ref{lemma:e} and \ref{lemma:u_minus_d}, we have
\begin{align*}
    \Psi^{(0)} 
    &\leq \frac{1}{p} \eta^2 \mathbb{E} \left\| \nabla f (\bar{\vx}^{(0)}) \right\|^2
    + \frac{9}{(1-\beta) p} \eta^2 \left\| \nabla f(\bar{\vx}^{(0)}) \right\|^2
    + \frac{5}{(1-\beta)^3 p} \left( 9 + \frac{864}{p^2} \right) \sigma^2 \eta^2. 
\end{align*}
Then, we obtain
\begin{align*}
    \sum_{r=0}^{R} \Psi^{(r)} 
    &\leq \frac{1}{p} \eta^2 \sum_{r=0}^{R} \mathbb{E} \left\| \nabla f (\bar{\vx}^{(r)}) \right\|^2 \\
    &\qquad + \eta^2 \frac{18}{(1-\beta)^2 p} \sum_{k=0}^{R} \left\| \nabla f(\bar{\vx}^{(k)}) \right\|^2 \\
    &\qquad + \frac{2 L^2 \beta^2}{(1-\beta)^2 p^3} \left( 2592 + \frac{16 A \beta^2}{1 - \beta^2} \right) \eta^4 \left( \sum_{k=0}^{R-1} \left\| \nabla f(\bar{\vx}^{(k)}) \right\|^2  \right) \\
    &\qquad + \frac{L^2}{(1-\beta)^2 p} \left( 9 + \frac{864}{p^2} \right) \eta^2 \left( \sum_{k=0}^{R} \Xi^{(k)} \right) \\
    &\qquad + \frac{2 L^4 \beta^2}{(1-\beta)^2 p^3} \left( 2592 + \frac{16 A \beta^2}{1 - \beta^2} \right) \eta^4 \left( \sum_{k=0}^{R-1} \Xi^{(k)} \right) \\
    &\qquad + \frac{5}{(1-\beta)^3 p} \left( 9 + \frac{864}{p^2} \right) \sigma^2 \eta^2 (R + 1) \\
    &\qquad + \frac{2 L^2 \beta^2}{N (1-\beta)^2 p^3} \left( 2592 + \frac{16 A \beta^2}{1 - \beta^2} \right) \sigma^2 \eta^4 R.
\end{align*}
Using $\eta^2 \leq \frac{p^2}{32 L^2 (162 + \frac{A \beta^2}{1 - \beta^2})}$, we have
\begin{align*}
    \frac{2 L^2 \beta^2}{(1-\beta)^2 p^3} \left( 2592 + \frac{16 A \beta^2}{1 - \beta^2} \right) \eta^4 \left( \sum_{k=0}^{R-1} \left\| \nabla f(\bar{\vx}^{(k)}) \right\|^2  \right)
    \leq \frac{\beta^2}{(1-\beta)^2 p} \eta^2 \left( \sum_{k=0}^{R-1} \left\| \nabla f(\bar{\vx}^{(k)}) \right\|^2  \right).
\end{align*}
Using $\eta^2 \leq \frac{p^2}{32 L^2 (162 + \frac{A \beta^2}{1 - \beta^2})}$, we have
\begin{align*}
    \frac{2 L^4 \beta^2}{(1-\beta)^2 p^3} \left( 2592 + \frac{16 A \beta^2}{1 - \beta^2} \right) \eta^4 \left( \sum_{k=0}^{R-1} \Xi^{(k)} \right)
    \leq \frac{L^2 \beta^2}{(1-\beta)^2 p} \eta^2 \left( \sum_{k=0}^{R-1} \Xi^{(k)} \right).
\end{align*}
Using $\eta^2 \leq \frac{p^2}{32 L^2 (162 + \frac{A \beta^2}{1 - \beta^2})}$, we have
\begin{align*}
    \frac{2 L^2 \beta^2}{N (1-\beta)^2 p^3} \left( 2592 + \frac{16 A \beta^2}{1 - \beta^2} \right) \sigma^2 \eta^4 R
    \leq \frac{\beta^2}{N (1-\beta)^2 p} \sigma^2 \eta^2 R.
\end{align*}
Then, using $\beta \in [0,1)$ and $N \geq 1$, we obtain
\begin{align*}
    \sum_{r=0}^{R} \Psi^{(r)} 
    &\leq \frac{1}{p} \eta^2 \sum_{r=0}^{R} \mathbb{E} \left\| \nabla f (\bar{\vx}^{(r)}) \right\|^2 \\
    &\qquad + \eta^2 \frac{19}{(1-\beta)^2 p} \sum_{k=0}^{R} \left\| \nabla f(\bar{\vx}^{(k)}) \right\|^2 \\
    &\qquad + \frac{L^2}{(1-\beta)^2 p} \left( 10 + \frac{864}{p^2} \right) \eta^2 \left( \sum_{k=0}^{R} \Xi^{(k)} \right) \\
    &\qquad + \frac{5}{(1-\beta)^3 p} \left( 10 + \frac{864}{p^2} \right) \sigma^2 \eta^2 (R + 1).
\end{align*}
Using $\beta \in [0, 1)$ and Lemma \ref{lemma:simplified_recursion}, we obtain
\begin{align*}
    \sum_{r=0}^{R} \Xi^{(r)}
    &\leq \frac{20 t}{(1-\beta)^2 p^2} \eta^2 \sum_{k=0}^{R} \left\| \nabla f(\bar{\vx}^{(k)}) \right\|^2 \\
    &\qquad + \frac{t L^2}{(1-\beta)^2 p^2} \left( 10 + \frac{864}{p^2} \right) \eta^2 \left( \sum_{k=0}^{R} \Xi^{(k)} \right) \\
    &\qquad + \frac{5 t}{(1-\beta)^3 p^2} \left( 10 + \frac{864}{p^2} \right) \sigma^2 \eta^2 (R + 1)
    + \frac{145 t \sigma^2 \eta^2}{(1-\beta)^2 p^3} R \\
    &\leq \frac{20 t}{(1-\beta)^2 p^2} \eta^2 \sum_{k=0}^{R} \left\| \nabla f(\bar{\vx}^{(k)}) \right\|^2 \\
    &\qquad + \frac{t L^2}{(1-\beta)^2 p^2} \left( 10 + \frac{864}{p^2} \right) \eta^2 \left( \sum_{k=0}^{R} \Xi^{(k)} \right) \\
    &\qquad + \frac{5 t}{(1-\beta)^3 p^2} \left( 10 + \frac{29}{p} + \frac{864}{p^2} \right) \sigma^2 \eta^2 (R + 1).
\end{align*}
Then, using $\eta^2 \leq \frac{(1-\beta)^2 p^2}{4 t L^2 (5 + \frac{432}{p^2})}$, we obtain
\begin{align*}
    \frac{1}{2} \sum_{r=0}^{R} \Xi^{(r)}
    &\leq \frac{20 t}{(1-\beta)^2 p^2} \eta^2 \sum_{k=0}^{R} \left\| \nabla f(\bar{\vx}^{(k)}) \right\|^2 
    + \frac{5 t}{(1-\beta)^3 p^2} \left( 10 + \frac{29}{p} + \frac{864}{p^2} \right) \sigma^2 \eta^2 (R + 1).
\end{align*}
Multiplying $8 L^2$, we obtain
\begin{align*}
    4 L^2 \sum_{r=0}^{R} \Xi^{(r)}
    &\leq \frac{160 L^2 t}{(1-\beta)^2 p^2} \eta^2 \sum_{k=0}^{R} \left\| \nabla f(\bar{\vx}^{(k)}) \right\|^2 
    + \frac{40 L^2 t}{(1-\beta)^3 p^2} \left( 10 + \frac{29}{p} + \frac{864}{p^2} \right) \sigma^2 \eta^2 (R + 1).
\end{align*}
Using $\eta^2 \leq \frac{(1-\beta)^2 p^2}{320 L^2 t}$, we obtain the statement.
\end{proof}

\begin{lemma}
\label{lemma:sum_of_gradient}
We define $t \in \mathbb{R}$ as follows:
\begin{align*}
    t \coloneqq \frac{2 \beta^2 p}{1 - \beta^2} + 4.    
\end{align*}
Suppose that the assumptions of Lemma \ref{lemma:sum_of_consensus} hold.
Then, if step size $\eta$ satisfies
\begin{align*}
    \eta \leq  \frac{(1-\beta)^2}{2 \sqrt{2} L },
\end{align*}
it holds that
\begin{align*}
    \frac{1}{2 (R+1)} \sum_{r=0}^R \mathbb{E} \left\| \nabla f (\bar{\vx}^{(r)}) \right\|^2
    &\leq  \frac{4 (1-\beta)}{\eta (R+1)} \left( f (\bar{\vz}^{(0)}) - f^\star \right) 
    + \frac{2 L \sigma^2 \eta}{N (1-\beta)} \\
    &\qquad + \frac{L^2}{(1-\beta)^3} \left( \frac{40 t}{p^2} \left( 10 + \frac{29}{p} + \frac{864}{p^2} \right)
    + \frac{4 \beta^2}{N (1-\beta)} \right) \sigma^2 \eta^2.
\end{align*}
\end{lemma}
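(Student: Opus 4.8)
The plan is to telescope the Descent Lemma (Lemma~\ref{lemma:descent_lemma}) and then feed in the two aggregate bounds already established, namely the drift estimate of Lemma~\ref{lemma:x_bar_minus_z_bar} and the consensus bound of Lemma~\ref{lemma:sum_of_consensus}. First I would sum the conclusion of Lemma~\ref{lemma:descent_lemma} over $r=0,\dots,R$; the left-hand side telescopes to $\mathbb{E} f(\bar{\vz}^{(R+1)}) - f(\bar{\vz}^{(0)})$, and by Assumption~\ref{assumption:lower_bound} we may lower-bound $\mathbb{E} f(\bar{\vz}^{(R+1)})$ by $f^\star$. Rearranging isolates $\frac{\eta}{4(1-\beta)}\sum_{r=0}^R\mathbb{E}\|\nabla f(\bar{\vx}^{(r)})\|^2$ on the left, against $r_0 \coloneqq f(\bar{\vz}^{(0)}) - f^\star$, the two positive aggregate terms $\frac{L^2\eta}{1-\beta}\sum_r\mathbb{E}\|\bar{\vx}^{(r)}-\bar{\vz}^{(r)}\|^2$ and $\frac{L^2\eta}{1-\beta}\sum_r\Xi^{(r)}$, the surviving negative term $-\frac{\eta}{4(1-\beta)}\sum_r\mathbb{E}\|\frac1N\sum_i\nabla f_i(\vx_i^{(r)})\|^2$, and the stochastic-noise term $\frac{L\sigma^2\eta^2}{2N(1-\beta)^2}(R+1)$.

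The first key step is to absorb the drift term. Substituting Lemma~\ref{lemma:x_bar_minus_z_bar} turns $\frac{L^2\eta}{1-\beta}\sum_r\mathbb{E}\|\bar{\vx}^{(r)}-\bar{\vz}^{(r)}\|^2$ into a multiple of $\sum_r\mathbb{E}\|\frac1N\sum_i\nabla f_i(\vx_i^{(r)})\|^2$, with coefficient $\frac{L^2\beta^2\eta^3}{(1-\beta)^5}$, plus a noise term $\frac{L^2\beta^2\sigma^2\eta^3}{N(1-\beta)^5}R$. The point is that this gradient contribution is exactly the quantity appearing with a negative sign in the rearranged inequality. Imposing $\eta \le \frac{(1-\beta)^2}{2\sqrt2 L}$ makes $\frac{L^2\beta^2\eta^3}{(1-\beta)^5} \le \frac12\cdot\frac{\eta}{4(1-\beta)}$ (using $\beta<1$), so the combined coefficient of $\sum_r\mathbb{E}\|\frac1N\sum_i\nabla f_i\|^2$ is non-positive and the whole term can simply be discarded from the upper bound.

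The second key step is to control $\frac{L^2\eta}{1-\beta}\sum_r\Xi^{(r)} = \frac{\eta}{4(1-\beta)}\cdot 4L^2\sum_r\Xi^{(r)}$ via Lemma~\ref{lemma:sum_of_consensus}. By design that lemma bounds $4L^2\sum_r\Xi^{(r)}$ by $\frac12\sum_r\|\nabla f(\bar{\vx}^{(r)})\|^2$ plus a noise term; the resulting $\frac{\eta}{4(1-\beta)}\cdot\frac12\sum_r\|\nabla f(\bar{\vx}^{(r)})\|^2$ is then moved to the left, halving its coefficient from $\frac{\eta}{4(1-\beta)}$ to $\frac{\eta}{8(1-\beta)}$. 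Finally I would multiply through by $\frac{4(1-\beta)}{\eta(R+1)}$ and use $\frac{R}{R+1}\le1$ to read off the three advertised terms: the optimization term $\frac{4(1-\beta)}{\eta(R+1)}r_0$, the linear-in-$\eta$ noise term $\frac{2L\sigma^2\eta}{N(1-\beta)}$ inherited from the descent step, and the quadratic-in-$\eta$ bracket collecting the consensus noise $\frac{40t}{p^2}(10+\frac{29}{p}+\frac{864}{p^2})$ together with the drift noise $\frac{4\beta^2}{N(1-\beta)}$.

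I expect the genuinely delicate point to be the bookkeeping of the cross term $\sum_r\mathbb{E}\|\frac1N\sum_i\nabla f_i\|^2$: it is produced with a positive coefficient by the drift bound and with a negative coefficient by the descent lemma, and the step-size choice $\eta\le\frac{(1-\beta)^2}{2\sqrt2 L}$ is precisely what forces the cancellation. Everything else is routine: the descent-lemma requirement $\eta\le\frac{1-\beta}{4L}$ is already implied by the standing step-size hypotheses inherited from Lemma~\ref{lemma:sum_of_consensus} (e.g. the condition $\eta\le\frac{(1-\beta)p}{2L\sqrt{t(5+432/p^2)}}$, since $t\ge4$ and $p\le1$), and the remaining manipulations are linear rearrangements plus the substitution $\frac{R}{R+1}\le1$.
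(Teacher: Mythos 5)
Your proposal is correct and follows essentially the same route as the paper: telescope the descent lemma, substitute Lemma~\ref{lemma:x_bar_minus_z_bar} and use $\eta^2 \le \frac{(1-\beta)^4}{8L^2}$ to cancel the $\sum_r \mathbb{E}\|\frac{1}{N}\sum_i \nabla f_i(\vx_i^{(r)})\|^2$ cross term, absorb $4L^2\sum_r\Xi^{(r)}$ via Lemma~\ref{lemma:sum_of_consensus}, and divide by $R+1$. The constant bookkeeping you describe matches the paper's, and your explicit check that $\eta\le\frac{1-\beta}{4L}$ is implied by the inherited step-size hypotheses is a small point the paper leaves implicit.
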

\begin{proof}
Using Lemma \ref{lemma:descent_lemma} and Assumption \ref{assumption:lower_bound}, we have
\begin{align*}
    \sum_{r=0}^R \mathbb{E} \left\| \nabla f (\bar{\vx}^{(r)}) \right\|^2
    &\leq  \frac{4 (1-\beta)}{\eta} \left( f (\bar{\vz}^{(0)}) 
    - f^\star \right) 
    + 4 L^2 \sum_{r=0}^R  \mathbb{E} \left\| \bar{\vx}^{(r)} - \bar{\vz}^{(r)} \right\|^2  
    + 4 L^2 \sum_{r=0}^R \Xi^{(r)} \\
    &\qquad - \sum_{r=0}^R \mathbb{E} \left\| \frac{1}{N} \sum_{i=1}^N \nabla f_i (\vx^{(r)}_i) \right\|^2
    + \frac{2 L \sigma^2 \eta}{N (1-\beta)} (R+1).
\end{align*}
From Lemma \ref{lemma:x_bar_minus_z_bar}, we have
\begin{align*}
    4 L^2 \sum_{r=0}^R \mathbb{E} \left\| \bar{\vx}^{(r)} - \bar{\vz}^{(r)} \right\|^2
    &\leq \frac{4 L^2 \beta^2 \eta^2 }{(1-\beta)^4} \sum_{r=0}^{R} \mathbb{E} \left\| \frac{1}{N} \sum_{i=1}^N \nabla f_i (\vx_i^{(r)}) \right\|^2
    + \frac{4 L^2 \beta^2 \sigma^2 \eta^2 }{N (1-\beta)^4} R.
\end{align*}
Combining them yields
\begin{align*}
    &\sum_{r=0}^R \mathbb{E} \left\| \nabla f (\bar{\vx}^{(r)}) \right\|^2 \\
    &\leq  \frac{4 (1-\beta)}{\eta} \left( f (\bar{\vz}^{(0)}) 
    - f^\star \right) 
    + 4 L^2 \sum_{r=0}^R \Xi^{(r)} 
    - \left( 1 - \frac{4 L^2 \beta^2 \eta^2 }{(1-\beta)^4}\right) \sum_{r=0}^R \mathbb{E} \left\| \frac{1}{N} \sum_{i=1}^N \nabla f_i (\vx^{(r)}_i) \right\|^2 \\
    &\qquad + \frac{2 L \sigma^2 \eta}{N (1-\beta)} (R+1)
    + \frac{4 L^2 \beta^2 \sigma^2 \eta^2 }{N (1-\beta)^4} R.
\end{align*}
Using $\eta^2 \leq \frac{(1-\beta)^4}{8 L^2}$ and $\beta < 1$, we obtain
\begin{align*}
    &\sum_{r=0}^R \mathbb{E} \left\| \nabla f (\bar{\vx}^{(r)}) \right\|^2 \\
    &\leq  \frac{4 (1-\beta)}{\eta} \left( f (\bar{\vz}^{(0)}) - f^\star \right) 
    + 4 L^2 \sum_{r=0}^R \Xi^{(r)}
    + \frac{2 L \sigma^2 \eta}{N (1-\beta)} (R+1)
    + \frac{4 L^2 \beta^2 \sigma^2 \eta^2 }{N (1-\beta)^4} R.
\end{align*}
Using Lemma \ref{lemma:sum_of_consensus}, we obtain
\begin{align*}
    &\frac{1}{2} \sum_{r=0}^R \mathbb{E} \left\| \nabla f (\bar{\vx}^{(r)}) \right\|^2 \\
    &\leq  \frac{4 (1-\beta)}{\eta} \left( f (\bar{\vz}^{(0)}) - f^\star \right) 
    + \frac{2 L \sigma^2 \eta}{N (1-\beta)} (R+1) \\
    &\qquad + \frac{L^2}{(1-\beta)^3} \left( \frac{40 t}{p^2} \left( 10 + \frac{29}{p} + \frac{864}{p^2} \right)
    + \frac{4 \beta^2}{N (1-\beta)} \right) \sigma^2 \eta^2 (R + 1).
\end{align*}
This concludes the proof.
\end{proof}

\begin{lemma}
\label{lemma;learning_rate}
We define $t \in \mathbb{R}$ and $A \in \mathbb{R}$ as follows:
\begin{align*}
    t \coloneqq \frac{2 \beta^2 p}{1 - \beta^2} + 4, \;\;
    A \coloneqq \frac{648}{1 - \frac{p}{t} - \frac{2\beta^2}{1 + \beta^2}}.    
\end{align*}
Then, it holds that
\begin{align*}
    \frac{(1-\beta)^2 p^2}{16 L \sqrt{\frac{7836 \beta^2}{(1-\beta^2)^3 p} + 282}}
    &\leq \min \left\{ \frac{1-\beta}{4 L}, 
                 \frac{p}{8 L \sqrt{324 + \frac{2 A \beta^2}{1 - \beta^2}}},
                 \frac{(1 - \beta) p^2}{2 L \sqrt{t (5 p^2 + 432)}}
                 \frac{(1-\beta) p}{8 L \sqrt{5 t}},
                 \frac{(1-\beta)^2}{2 \sqrt{2} L}
                \right\}.
\end{align*}
\end{lemma}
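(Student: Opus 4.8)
The plan is to prove the inequality by bounding the left-hand side, which I abbreviate as $\eta_0 \coloneqq \frac{(1-\beta)^2 p^2}{16 L \sqrt{M}}$ with $M \coloneqq \frac{7836\beta^2}{(1-\beta^2)^3 p} + 282$, above by each of the five quantities in the minimum separately. Every quantity involved carries the common factor $1/L$, so that factor cancels in each comparison and the problem reduces to five elementary algebraic inequalities in $p \in (0,1]$ and $\beta \in [0,1)$. Throughout I would exploit two crude facts: $M \geq 282$ (so $\sqrt{M} > 16$), and $(1-\beta)^a p^b \leq 1$ for any exponents $a,b \geq 0$ since both factors lie in $(0,1]$.

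The one preparatory step that is genuinely needed is a usable upper bound on $A$, which is defined only implicitly. First I would simplify the denominator of $A$: using $1 - \frac{2\beta^2}{1+\beta^2} = \frac{1-\beta^2}{1+\beta^2}$ and the definition of $t$, a short computation gives the closed forms
\[
    1 - \frac{p}{t} - \frac{2\beta^2}{1+\beta^2} = \frac{(4-p)(1-\beta^2)}{(1+\beta^2)\,t}, \qquad A = \frac{648(1+\beta^2)\,t}{(4-p)(1-\beta^2)}.
\]
This already confirms $A > 0$, as claimed. Combining it with $4 - p \geq 3$, $1+\beta^2 \leq 2$, and the elementary estimate $t = \frac{2\beta^2 p}{1-\beta^2} + 4 \leq \frac{4}{1-\beta^2}$ (valid because $p \leq 1$) then yields the clean bounds $A \leq \frac{432\,t}{1-\beta^2} \leq \frac{1728}{(1-\beta^2)^2}$, which are all I would use afterwards.

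With these in hand, the four comparisons against $\frac{1-\beta}{4L}$, $\frac{(1-\beta)p^2}{2L\sqrt{t(5p^2+432)}}$, $\frac{(1-\beta)p}{8L\sqrt{5t}}$, and $\frac{(1-\beta)^2}{2\sqrt{2}L}$ are routine: after clearing $1/L$ and squaring, each collapses to showing that a quantity bounded by an absolute constant (via $p \leq 1$, $\beta < 1$, and $(1-\beta)^2 t \leq \frac{4(1-\beta)}{1+\beta} \leq 4$) is dominated by the corresponding constant multiple of $M$, which is itself at least that multiple of $282$; all four hold with a wide margin. The main obstacle is the remaining comparison against the second term $\frac{p}{8L\sqrt{324 + 2A\beta^2/(1-\beta^2)}}$, since it is the only one where $A$ and the square-root factor $\sqrt{M}$ in $\eta_0$ genuinely interact. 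There I would insert $\frac{2A\beta^2}{1-\beta^2} \leq \frac{3456\beta^2}{(1-\beta^2)^3}$, square the target inequality to reach $(1-\beta)^4 p^2\bigl(324 + \tfrac{2A\beta^2}{1-\beta^2}\bigr) \leq 4M$, and split the left-hand side into two summands. Using the identity $\frac{(1-\beta)^4}{(1-\beta^2)^3} = \frac{1-\beta}{(1+\beta)^3} \leq 1$, the first summand $324(1-\beta)^4 p^2 \leq 324$ is absorbed by the constant part $4 \cdot 282 = 1128$ of $4M$, while the second is matched against $\frac{4\cdot 7836\,\beta^2}{(1-\beta^2)^3 p}$; after cross-multiplying, that comparison reduces to $3456\,p^3(1-\beta)^4 \leq 31344$, which is immediate. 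The constants $7836$ and $282$ in the statement are evidently chosen to leave exactly this comfortable slack, so once the closed form for $A$ is secured, no delicate estimation remains.
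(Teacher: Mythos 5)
Your proposal is correct, and the five-way term-by-term reduction is the same skeleton the paper uses; the genuine difference lies in how $A$ is controlled. The paper never computes $A$ explicitly: it lower-bounds the denominator via the chain $t-3 = \frac{2\beta^2 p}{1-\beta^2}+1 \ge \frac{(1+\beta^2)p}{1-\beta^2}$, hence $1-\frac{p}{t}-\frac{2\beta^2}{1+\beta^2}\ge \frac{3p}{t^2}$, giving $A\le \frac{216t^2}{p}$, and then funnels the three middle terms of the minimum through the single intermediate quantity $\frac{A\beta^2}{1-\beta^2}+30t+162\le \frac{7836\beta^2}{p(1-\beta^2)^3}+282$, so that the left-hand side, rewritten with this larger radicand, dominates all three at once. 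Your closed form $A=\frac{648(1+\beta^2)t}{(4-p)(1-\beta^2)}$ is exact (I verified the algebra), immediately explains why $A>0$, and yields the strictly tighter bound $A\le \frac{1728}{(1-\beta^2)^2}$ with no $1/p$ factor; your subsequent direct comparisons against $M=\frac{7836\beta^2}{(1-\beta^2)^3 p}+282$, including the splitting of $(1-\beta)^4p^2\bigl(324+\frac{2A\beta^2}{1-\beta^2}\bigr)\le 4M$ into the two summands, all check out with the margins you indicate. What the paper's route buys is a single displayed inequality covering the three $A$- and $t$-dependent terms simultaneously; what yours buys is a cleaner and sharper handle on $A$ that makes the generous choice of the constants $7836$ and $282$ transparent. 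Either argument is a complete proof.
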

\begin{proof}
Because $\sqrt{\frac{7836 \beta^2}{(1-\beta^2)^3 p} + 282} > 1$, $p \in (0, 1]$, and $\beta \in [0, 1)$, we have
\begin{align*}
    \frac{(1-\beta)^2 p^2}{16 L \sqrt{\frac{7836 \beta^2}{(1-\beta^2)^3 p} + 282}}
    &\leq \min \left\{ \frac{1-\beta}{4 L}, 
                 \frac{(1-\beta)^2}{2 \sqrt{2} L}
                \right\}.
\end{align*}
From $p \leq 1$, we have
\begin{align*}
    t - 3 = \frac{2 \beta^2 p}{1 - \beta^2} + 1 
    \geq \frac{1 + \beta^2}{1 - \beta^2} p 
    = \frac{p}{1 - \frac{2 \beta^2}{1 + \beta^2}}.
\end{align*}
Then, we obtain
\begin{align*}
    1 - \frac{p}{t} - \frac{2 \beta^2}{1 + \beta^2}
    \geq \frac{p}{t-3} - \frac{p}{t}
    = \frac{3 p}{t (t-3)} 
    \geq \frac{3 p}{t^2}.
\end{align*}
Then, we obtain
\begin{align*}
    A \leq \frac{216 t^2}{p}.
\end{align*}
Using the above inequality, we obtain
\begin{align*}
    \frac{A \beta^2}{1 - \beta^2} + 162 \leq \frac{216 \beta^2 t^2}{p (1 - \beta^2)} + 162.
\end{align*}
From the definition of $t$, we obtain
\begin{align*}
    \frac{A \beta^2}{1 - \beta^2} + 30 t + 162
    &\leq \frac{216 \beta^2 t^2}{p (1 - \beta^2)} + 30 t + 162 \\
    &= \frac{216 \beta^2}{p (1 - \beta^2)} \left( \frac{2 \beta^2 p}{1 - \beta^2} + 4 \right)^2 + 30 \left( \frac{2 \beta^2 p}{1 - \beta^2} + 4 \right) + 162 \\
    &= \frac{216 \beta^2}{p (1 - \beta^2)} \left( \frac{4 \beta^4 p^2}{(1 - \beta^2)^2} + \frac{16 \beta^2 p}{1 - \beta^2} + 16 \right) + 30 \left( \frac{2 \beta^2 p}{1 - \beta^2} + 4 \right) + 162 \\
    &\leq \frac{7836 \beta^2}{p (1 - \beta^2)^3} + 282,
\end{align*}
where we use $\beta \in [0, 1)$ and $p \in (0, 1]$ in the last inequality.
Then, we obtain
\begin{align*}
    \frac{(1-\beta)^2 p^2}{16 L \sqrt{\frac{7836 \beta^2}{(1-\beta^2)^3 p} + 282}} 
    &\leq \frac{(1-\beta)^2 p^2}{16 L \sqrt{\frac{A \beta^2}{1 - \beta^2} + 30 t + 162}} \\
    &\leq \min \left\{
                 \frac{p}{8 L \sqrt{324 + \frac{2 A \beta^2}{1 - \beta^2}}},
                 \frac{(1 - \beta) p^2}{2 L \sqrt{t (5 p^2 + 432)}}
                 \frac{(1-\beta) p}{8 L \sqrt{5 t}},
               \right\}.
\end{align*}
This concludes the proof.
\end{proof}

\begin{lemma}[Convergence Rate for Non-convex Case]
Suppose that Assumptions \ref{assumption:lower_bound}, \ref{assumption:mixing}, \ref{assumption:smoothness}, and \ref{assumption:stochasic_gradient} hold.
Then, for any $R \geq 1$, there exists a step size $\eta$ such that it holds that
\begin{align*}
    &\frac{1}{R} \sum_{r=0}^{R-1} \mathbb{E} \left\| \nabla f (\bar{\vx}^{(r)}) \right\|^2 \\
    &\leq \mathcal{O} \left( \sqrt{ \frac{r_0 \sigma^2 L }{N R} }
    + \left( \frac{r_0^2 \sigma^2 L^2}{p^4 R^2 (1-\beta)} \left( 1 + \frac{p \beta^2}{1-\beta} \right) \right)^{\frac{1}{3}}  
    + \frac{L r_0}{(1-\beta) p^2 R} \sqrt{1 + \frac{\beta^2}{(1-\beta^2)^3 p}} \right),
\end{align*}
where $r_0 \coloneqq f(\bar{\vx}^{(0)}) - f^\star$.
\end{lemma}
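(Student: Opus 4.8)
The statement is the capstone of the lemma chain, and essentially all the genuine estimation has already been carried out: Lemma~\ref{lemma:sum_of_gradient} delivers a \emph{master inequality} of the familiar shape
\begin{align*}
    \frac{1}{2(R+1)}\sum_{r=0}^{R}\mathbb{E}\left\|\nabla f(\bar{\vx}^{(r)})\right\|^2
    \leq \frac{a}{\eta(R+1)} + b\,\eta + e\,\eta^2,
\end{align*}
valid for every step size $\eta$ meeting the accumulated constraints, with $a = 4(1-\beta)r_0$ (using $\bar{\vz}^{(0)}=\bar{\vx}^{(0)}$ so that $f(\bar{\vz}^{(0)})-f^\star=r_0$), $b = \frac{2L\sigma^2}{N(1-\beta)}$, and $e = \frac{L^2\sigma^2}{(1-\beta)^3}\bigl(\frac{40t}{p^2}(10+\frac{29}{p}+\frac{864}{p^2}) + \frac{4\beta^2}{N(1-\beta)}\bigr)$. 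So the plan is simply to optimize this bound over $\eta$. First I would record these three coefficients and note that, since the summands are nonnegative, $\frac{1}{R}\sum_{r=0}^{R-1}\mathbb{E}\|\nabla f(\bar{\vx}^{(r)})\|^2 \leq \frac{2}{R+1}\sum_{r=0}^{R}\mathbb{E}\|\nabla f(\bar{\vx}^{(r)})\|^2$ for $R\geq 1$, so a bound on the left-hand average of the master inequality transfers (up to a factor $2$, absorbed in $\mathcal{O}$) to the quantity in the statement.

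Next I would dispose of the step-size feasibility. The restrictions on $\eta$ imposed along the way (Lemmas~\ref{lemma:descent_lemma}, \ref{lemma:recusion_for_three_terms}, \ref{lemma:sum_of_consensus}, \ref{lemma:sum_of_gradient}) form a finite list of the form $\eta \leq \text{const}$, and Lemma~\ref{lemma;learning_rate} is exactly the statement that the single clean bound
\begin{align*}
    \eta_{\max} \coloneqq \frac{(1-\beta)^2 p^2}{16L\sqrt{\frac{7836\beta^2}{(1-\beta^2)^3 p} + 282}}
\end{align*}
lies below all of them. Hence any $\eta \leq \eta_{\max}$ is admissible, and the master inequality holds for such $\eta$. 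I would then invoke a standard step-size tuning argument for expressions $\frac{a}{\eta(R+1)} + b\eta + e\eta^2$: choosing $\eta = \min\bigl\{(\frac{a}{b(R+1)})^{1/2},\ (\frac{a}{e(R+1)})^{1/3},\ \eta_{\max}\bigr\}$ yields
\begin{align*}
    \mathcal{O}\!\left(\sqrt{\frac{ab}{R}} + \left(\frac{a^2 e}{R^2}\right)^{1/3} + \frac{a}{\eta_{\max}R}\right).
\end{align*}
Substituting the coefficients gives $\sqrt{ab/R} = \mathcal{O}(\sqrt{r_0\sigma^2 L/(NR)})$ and $a/(\eta_{\max}R) = \mathcal{O}\bigl(\frac{Lr_0}{(1-\beta)p^2R}\sqrt{1+\frac{\beta^2}{(1-\beta^2)^3 p}}\bigr)$, matching the first and third claimed terms directly.

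The hard part is purely the bookkeeping in the middle term: I must show that $\left(\frac{a^2 e}{R^2}\right)^{1/3}$ collapses to $\bigl(\frac{r_0^2\sigma^2 L^2}{p^4 R^2(1-\beta)}(1+\frac{p\beta^2}{1-\beta})\bigr)^{1/3}$. This requires expanding $a^2 e = \Theta\bigl(\frac{r_0^2 L^2\sigma^2}{1-\beta}\bigr)\cdot\bigl(\frac{40t}{p^2}(10+\frac{29}{p}+\frac{864}{p^2}) + \frac{4\beta^2}{N(1-\beta)}\bigr)$ and simplifying the bracket using $t = \Theta(1 + \frac{\beta^2 p}{1-\beta^2})$, $p \in (0,1]$, $N \geq 1$, and the monotonicity $\frac{1}{1-\beta^2} \leq \frac{1}{1-\beta}$. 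One checks that $\frac{40t}{p^2}(\cdots)$ contributes the $\frac{1}{p^4}$ and $\frac{\beta^2}{p^3(1-\beta)}$ pieces, while the residual $\frac{\beta^2}{N(1-\beta)}$ is dominated (since $p^{-3}\geq 1$ and $N\geq1$), so the bracket is $\mathcal{O}\bigl(\frac{1}{p^4}(1 + \frac{p\beta^2}{1-\beta})\bigr)$, giving exactly the stated term. The remaining care point is to retain the factor $N$ only where the genuine linear speedup lives (the first, $\sqrt{\cdot/NR}$ term) and to treat $N\geq1$ everywhere else.
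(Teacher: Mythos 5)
Your proposal is correct and follows essentially the same route as the paper: read off the master inequality of the form $\frac{a}{\eta(R+1)}+b\eta+e\eta^2$ from Lemma~\ref{lemma:sum_of_gradient}, use Lemma~\ref{lemma;learning_rate} to certify that the single step-size cap is admissible, apply the standard tuning lemma (the paper cites Lemma 17 of \citet{koloskova2020unified} for exactly this), and simplify the coefficient of the $\eta^2$ term using $t=\Theta(1+\frac{\beta^2 p}{1-\beta^2})$, $p\le 1$, $N\ge 1$. The only cosmetic difference is that the paper first substitutes $\tilde{\eta}=\eta/(1-\beta)$ before tuning, whereas you tune in $\eta$ directly; the $(1-\beta)$ factors cancel identically either way.
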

\begin{proof}
From Lemmas \ref{lemma:sum_of_gradient} and \ref{lemma;learning_rate}, 
if the step size $\eta$ satisfies the following:
\begin{align*}
    \eta \leq \frac{(1-\beta)^2 p^2}{16 L \sqrt{\frac{7836 \beta^2}{(1-\beta^2)^3 p} + 282}},
\end{align*}
then we have
\begin{align*}
    &\frac{1}{2 (R+1)} \sum_{r=0}^R \mathbb{E} \left\| \nabla f (\bar{\vx}^{(r)}) \right\|^2 \\
    &\leq \frac{4}{\tilde{\eta} (R+1)} \left( f (\bar{\vz}^{(0)}) - f^\star \right) 
    + \frac{2 L \sigma^2 \tilde{\eta}}{N} 
    + \underbrace{\frac{L^2}{1-\beta} \left( \frac{40 t}{p^2} \left( 10 + \frac{29}{p} + \frac{864}{p^2} \right)
    + \frac{4 \beta^2}{N (1-\beta)} \right)}_{T} \sigma^2 \tilde{\eta}^2,
\end{align*}
where we define $\tilde{\eta} \coloneqq \frac{\eta}{1-\beta}$.
Then, we can bound $T$ from above as follows:
\begin{align*}
    T
    &= \frac{L^2}{1-\beta} \left( \frac{40}{p^2} \left( \frac{2 \beta^2 p}{1 - \beta^2} + 4 \right) \left( 10 + \frac{29}{p} + \frac{864}{p^2} \right)
    + \frac{4 \beta^2}{N (1-\beta)} \right) \\
    &\stackrel{p \in (0, 1]}{\leq} \frac{L^2}{1-\beta} \left( \frac{36120}{p^4} \left( \frac{2 \beta^2 p}{1 - \beta^2} + 4 \right) 
    + \frac{4 \beta^2}{N (1-\beta)} \right) \\
    &\stackrel{p \in (0, 1], \beta \in [0, 1)}{\leq} \frac{36120 L^2}{(1-\beta) p^4} \left( \frac{3 \beta^2 p}{1-\beta} + 4 \right).
\end{align*}
Then, we obtain
\begin{align*}
    &\frac{1}{2 (R+1)} \sum_{r=0}^R \mathbb{E} \left\| \nabla f (\bar{\vx}^{(r)}) \right\|^2 \\
    &\leq \frac{4}{\tilde{\eta} (R+1)} \left( f (\bar{\vz}^{(0)}) - f^\star \right) 
    + \frac{2 L \sigma^2 \tilde{\eta}}{N} 
    + \frac{36120 L^2}{(1-\beta) p^4} \left( \frac{3 \beta^2 p}{1-\beta} + 4 \right) \sigma^2 \tilde{\eta}^2.
\end{align*}
Using Lemma 17 in the previous work \citep{koloskova2020unified}, we obtain the statement.
\end{proof}

\newpage
\section{Hyperparameter Settings}
\label{sec:hyperparameter}

Tables~\ref{table:fashion_setting}, \ref{table:svhn_setting}, \ref{table:cifar_setting}, \ref{table:cifar_vgg_setting}, and \ref{table:cifar_resnet_setting} list the hyperparameter settings for each dataset.
We evaluated the performance of each comparison method for different step sizes and selected the step size that achieved the highest accuracy on the validation dataset.

\begin{table}[!h]
\caption{Experimental settings for FashionMNIST.}
\label{table:fashion_setting}
\vskip -0.1 in
    \centering
    \begin{tabular}{ll}
    \toprule
        Neural network architecture & LeNet \citep{lecun1998gradientbased} \\
        Normalization               & Group normalization \citep{wu2018group} \\
    \midrule
        Step size          & $\{ 0.005,0.001,0.0005 \}$ \\
        L2 penalty         & $0.001$ \\
        Batch size         & $100$ \\
        Data augmentation  & RandomCrop \\
        Total number of epochs & 500 \\
    \bottomrule
    \end{tabular}
\vskip -0.1 in
\end{table}
\begin{table}[!h]
\caption{Experimental settings for SVHN.}
\label{table:svhn_setting}
\vskip -0.1 in
\centering
    \begin{tabular}{ll}
    \toprule
        Neural network architecture & LeNet \citep{lecun1998gradientbased} \\
        Normalization               & Group normalization \citep{wu2018group} \\
    \midrule
        Step size          & $\{ 0.005,0.001,0.0005 \}$ \\
        L2 penalty         & $0.001$ \\
        Batch size         & $100$ \\
        Data augmentation  & RandomCrop \\
        Total number of epochs & 500 \\
    \bottomrule
    \end{tabular}
\vskip -0.1 in
\end{table}
\begin{table}[!h]
\caption{Experimental settings for CIFAR-10.}
\vskip -0.1 in
\label{table:cifar_setting}
    \centering
    \begin{tabular}{ll}
    \toprule
        Neural network architecture & LeNet \citep{lecun1998gradientbased} \\
        Normalization               & Group normalization \citep{wu2018group} \\
    \midrule
        Step size          & $\{ 0.005,0.001,0.0005 \}$ \\
        L2 penalty         & $0.001$ \\
        Batch size         & $100$ \\
        Data augmentation  & RandomCrop, RandomHorizontalFlip \\
        Total number of epochs & 500 \\
    \bottomrule
    \end{tabular}
\vskip -0.1 in
\end{table}
\begin{table}[!h]
\caption{Experimental settings for CIFAR-10 with VGG-11.}
\label{table:cifar_vgg_setting}
\vskip -0.1 in
    \centering
    \begin{tabular}{ll}
    \toprule
        Neural network architecture & VGG-11 \citep{simonyanZ2014very} \\
        Normalization               & Group normalization \citep{wu2018group} \\
    \midrule
        Step size          & $\{ 0.5, 0.1, 0.05, 0.01, 0.005 \}$ \\
        Step size decay    & $/10$ at epoch $500$ and $750$. \\
        L2 penalty         & $0.001$ \\
        Batch size         & $100$ \\
        Data augmentation  & RandomCrop, RandomHorizontalFlip, RandomErasing \\
        Total number of epochs & 1000 \\
    \bottomrule
    \end{tabular}
\vskip -0.1 in
\end{table}

\begin{table}[!h]
\caption{Experimental settings for CIFAR-10 with ResNet-34.}
\label{table:cifar_resnet_setting}
\vskip -0.1 in
    \centering
    \begin{tabular}{ll}
    \toprule
        Neural network architecture & ResNet-34 \citep{he2016deep} \\
        Normalization               & Group normalization \citep{wu2018group} \\
    \midrule
        Step size          & $\{ 0.5, 0.1, 0.05, 0.01, 0.005 \}$ \\
        Step size decay    & $/10$ at epoch $375$ and $563$. \\
        L2 penalty         & $0.001$ \\
        Batch size         & $100$ \\
        Data augmentation  & RandomCrop, RandomHorizontalFlip, RandomErasing \\
        Total number of epochs & $750$ \\
    \bottomrule
    \end{tabular}
\vskip -0.1 in
\end{table}

\end{document}